\theoremstyle{plain}
\newtheorem{lemma}{Lemma}
\theoremstyle{definition}
\theoremstyle{remark}
\newcommand{\tightTable}{
  % 1. shrink font one step (still >8 pt in table*)
  % \footnotesize
  % 2. kill extra space above/below rules (booktabs)
  \setlength{\aboverulesep}{0.5pt}
  \setlength{\belowrulesep}{0.5pt}
  % 3. cut row height ≈10 %
  \renewcommand{\arraystretch}{0.95}
  % 4. shave column padding
  \setlength{\tabcolsep}{3pt}
}
\newcommand{\ourmethod}{\textsc{LoReTTA}}
\newcommand{\Call}[2]{\textsc{#1}(#2)}
\title{\textsc{LoReTTA}: A Low Resource Framework To Poison Continuous Time Dynamic Graphs}
\author{
    %Authors
    % All authors must be in the same font size and format.
    % Written by AAAI Press Staff\textsuperscript{\rm 1}\thanks{With help from the AAAI Publications Committee.}\\
    % AAAI Style Contributions by Pater Patel Schneider,
    Himanshu Pal\equalcontrib,
    Venkata Sai Pranav Bachina\equalcontrib,
    Ankit Gangwal,
    Charu Sharma
}
\title{My Publication Title --- Single Author}
\author {
    Author Name
}
\title{My Publication Title --- Multiple Authors}
\author {
    % Authors
    First Author Name\textsuperscript{\rm 1},
    Second Author Name\textsuperscript{\rm 2},
    Third Author Name\textsuperscript{\rm 1}
}
\begin{document}

\maketitle

\begin{abstract}
% Temporal Graph Neural Networks~(TGNNs) are integral to numerous real-world applications, but their vulnerability to adversarial poisoning attacks remains underexplored. Previous research has shown that perturbing edges can significantly degrade model performance. However, existing approaches rely on surrogate models and impose unrealistic unnoticeability constraints, limiting their practical applicability in real-world scenarios.
% \par
% In this paper, we propose Temporal Edge Rank adversarial Attack~(\ourmethod), a novel adversarial poisoning attack framework
% on Continuous Time Dynamic Graphs~(CTDGs). To facilitate this, we introduce Temporal EdgeRank, a novel temporal PageRank-
% based importance metric, which \ourmethod~leverages to identify and remove critical edges in temporal graphs. It 
% subsequently replaces these edges with strategically selected negative samples through a novel backtracking algorithm, 
% ensuring the attack remains inconspicuous while maintaining the original graph's density. \ourmethod~achieves an average 
% performance degradation of 20.30\% across 4 benchmark datasets and 4 state-of-the-art~(SotA) models, consistently 
% outperforming 6 SotA baselines. Furthermore, we also evaluate \ourmethod~against 4 SotA adversarial defense methods
% demonstrating its robustness.

Temporal Graph Neural Networks~(TGNNs) are increasingly used in high-stakes domains, such as financial forecasting, recommendation systems, and fraud detection. However, their susceptibility to poisoning attacks poses a critical security risk. We introduce \textbf{\ourmethod} (\textbf{Lo}w \textbf{Re}source \textbf{T}wo-phase \textbf{T}emporal \textbf{A}ttack), a novel adversarial framework on Continuous-Time Dynamic Graphs, which degrades TGNN performance by an average of \textbf{29.47\%} across 4 widely benchmark datasets and 4 State-of-the-Art (SotA) models.

% At the core of \ourmethod~is \textit{Temporal EdgeRank}—a temporal variant of PageRank that ranks edge importance over time. Using this metric, \ourmethod~identifies and removes high-impact edges, then stealthily replaces them with adversarial negatives using a selection algorithm. Unlike prior work, \ourmethod~avoids unrealistic surrogate model assumptions and preserves graph sparsity, making it practical and stealthy. It outperforms 11 SotA attack baselines and remains effective against 4 SotA defense mechanisms, showcasing its robustness and transferability in real-world threat settings.

 \ourmethod~operates through a two-stage approach: (1) sparsify the graph by removing high-impact edges using any of the 16 tested temporal importance metrics, (2) strategically replace removed edges with adversarial negatives via \ourmethod's novel degree-preserving negative sampling algorithm. Our plug-and-play design eliminates the need for expensive surrogate models while adhering to realistic unnoticeability constraints. \ourmethod~degrades performance by upto \textbf{42.0\%} on MOOC, \textbf{31.5\%} on Wikipedia, \textbf{28.8\%} on UCI, and \textbf{15.6\%} on Enron. \ourmethod~outperforms 11 attack baselines, remains undetectable to 4 leading anomaly detection systems, and is robust to 4 SotA adversarial defense training methods, establishing its effectiveness, unnoticeability, and robustness.
% New
%  Temporal Graph Neural Networks~(TGNNs) are increasingly deployed in high-stakes domains such as financial forecasting, recommendation systems, and fraud detection. However, their vulnerability to poisoning attacks poses a significant security concern. We propose \textbf{\ourmethod} (Low Resource Two-phase Temporal Attack), a novel adversarial framework for Continuous-Time Dynamic Graphs~(CTDGs) that degrades TGNN performance by an average of \textbf{20.3\%} across four benchmark datasets and four state-of-the-art (SotA) models.

% \ourmethod~operates in two stages: a sparsification phase using \textit{Temporal EdgeRank}—a time-aware extension of PageRank—to identify and remove high-impact edges, followed by a constraint-aware injection of plausible negative edges. This design avoids surrogate models, preserves node degree distributions, and maintains both structural and temporal plausibility.

% Empirically, \ourmethod~achieves performance drops of up to \textbf{42.0 points} on MOOC, \textbf{31.5} on Wikipedia, \textbf{28.8} on UCI, and \textbf{15.6} on Enron, outperforming 11 attack baselines and circumventing four SotA defenses. These findings identify MOOC and Wikipedia as particularly susceptible to temporal sparsification, highlighting dataset-specific vulnerabilities and underscoring \ourmethod's robustness and transferability in real-world threat settings.

\end{abstract}

\begin{links}
    \link{Code}{https://github.com/ansh997/LoReTTA}
%     \link{Extended version}{https://aaai.org/example/extended-version}
    
\end{links}

% Uncomment the following to link to your code, datasets, an extended version or similar.
% You must keep this block between (not within) the abstract and the main body of the paper.
% \begin{links}
%     \link{Code}{https://aaai.org/example/code}
%     \link{Datasets}{https://aaai.org/example/datasets}
%     \link{Extended version}{https://aaai.org/example/extended-version}
% \end{links}

\section{Introduction}

Temporal Graph Neural Networks~(TGNNs) are increasingly deployed in real-world systems such as social media analysis~\cite{deng2019learning, fan2019graph}, transportation modeling~\cite{jiang2022graph, yu2017spatio}, recommendation engines~\cite{gao2022graph, wu2022graph}, outbreak tracking~\cite{cencetti2021digital, so2020visualizing}, and knowledge graph reasoning~\cite{cai2022temporal}. These applications often rely on modeling evolving relationships over time using dynamic graphs, making the robustness of TGNNs a crucial concern.

Dynamic graphs can be broadly categorized as Discrete-Time Dynamic Graphs (DTDGs) and CTDGs. DTDGs represent dynamic graphs as a sequence of static snapshots, while CTDGs model them as a continuous stream of timestamped interactions. CTDGs are considered more expressive for modeling real-world systems, as they capture fine-grained temporal dependencies that DTDGs may miss~\cite{zheng2023decoupled, ennadir2024expressivityrepresentationlearningcontinuoustime}. 
\begin{figure}[!ht]
    \centering
    \includegraphics[width=\columnwidth,
                 trim=0 150mm 0 0,  % cut 12mm from the bottom
                 clip]{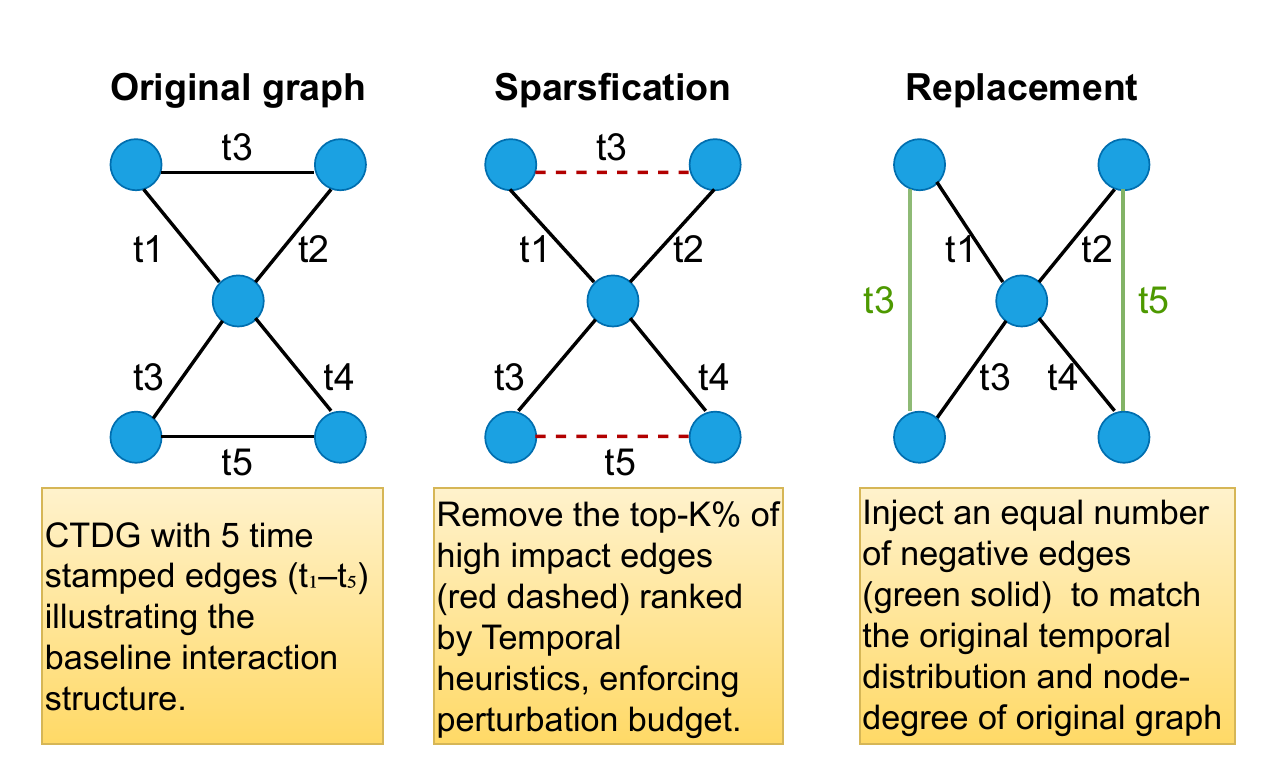}
    \caption{Overview of the \ourmethod~attack framework. 
    % Given a Continuous-Time Dynamic Graph (CTDG), we first compute \textbf{Temporal EdgeRank} to identify edges with high temporal influence. These high-impact edges (shown in red) are removed and replaced by adversarial negative edges (shown in green), selected using a backtracking-based strategy to maintain graph density and preserve temporal structure. The figure illustrates how perturbing a small number of temporally critical edges can mislead the model’s prediction while remaining unnoticeable to both humans and automated defense methods.
    % \ourmethod~strategically poisons the training graph in a temporally-aware and stealthy manner, leading to significant degradation in TGNN performance without violating real-world plausibility constraints.
    }
    \label{fig:arch_diagram}
\end{figure}

Recent work~\cite{lee2024spear} has demonstrated that TGNNs are susceptible to adversarial poisoning attacks, i.e small but carefully crafted perturbations to the training graph that can significantly degrade performance. This threat becomes especially critical in high-stakes domains such as fraud detection or traffic forecasting, where reliability is paramount. 
% Despite this, the adversarial robustness of TGNNs, particularly on Continuous-Time Dynamic Graphs~(CTDGs), remains significantly underexplored.

While adversarial attacks have been extensively studied for static graphs~\cite{zugner2018adversarial, bojchevski2019adversarial, dai2018adversarial, ma2020towards, zügner2024adversarialattacksgraphneural}, their temporal counterparts~(CTDGs) are less explored. The challenges are non-trivial: CTDGs evolve over time, requiring the attacker to perturb edges with precise temporal coordination. TGNNs further complicate this by maintaining memory states that propagate historical information and may dilute isolated attacks. Moreover, the attacker must decide \textit{when} and \textit{where} to insert perturbations to maximize the impact, all the while adhering to unnoticeability constraints~(cf \S\ref{sec:unnoticeability_constraints}).

The SotA work, i.e., Spear and Shield~\cite{lee2024spear}, introduces an adversarial poisoning framework for TGNNs, but it carries significant practical limitations. Specifically, it relies on training a surrogate model to estimate gradients, an approach that is \textbf{computationally expensive for large-scale datasets}. Furthermore, it assumes the attacker has complete access to the entire dataset~(train, validation, and test) and the ability to make arbitrary changes. In real-world scenarios, such assumptions rarely hold in the context of an attacker. Attackers often only have access to the training portion of the graph and must operate under realistic, limited knowledge and manipulation constraints.

To address these limitations, we propose a novel adversarial poisoning framework for TGNNs, that we call \textbf{\ourmethod}~(\textbf{Lo}w \textbf{Re}source \textbf{T}wo-phase \textbf{T}emporal \textbf{A}ttack). \ourmethod~does not rely on surrogate models, and it assumes access to only the training portion of the data. It adheres to 4 practical unnoticeability constraints (cf.~\S\ref{sec:unnoticeability_constraints}) that preserve the structural and temporal plausibility of the poisoned graph. At its core, \ourmethod~computes scores for edges~(using any of our 16 tested heuristics) across timestamps to quantify evolving influence, and removes edges with high temporal influence; thereby maximizing disruption to the TGNN’s representation learning. These edges are then replaced with strategically selected adversarial negatives through \ourmethod's novel negative sampling algorithm designed to preserve the original graph’s edge density and local structural cues.\\
% \ourmethod~is both effective and stealthy. Across four benchmark datasets (Wikipedia, MOOC, UCI, and Enron) and four TGNN models (TGN, JODIE, DySAT, and TGAT), it achieves an average performance degradation of \textbf{29.47\%}, consistently outperforming six state-of-the-art baselines: T-SPEAR, Random, Degree, PageRank, Preference, and Jaccard (see fig~\ref{fig:attack_perf_radar}). Furthermore, we demonstrate that \ourmethod~remains robust against four adversarial defense methods (cf.~\S\ref{robustness}).
Our main contributions are summarized below:
% \begin{enumerate}
%     \item We propose \ourmethod, a novel adversarial poisoning framework on CTDGs that operates by removing high influence edges using our 16 tested temporal heuristics and then replace with adversarial negatives using \ourmethod's novel negative sampling algorithm.
%     \item We enforce 4 unnoticeability constraints to ensure stealth, that include maintaining graph sparsity, temporal plausibility, node activity windows, and node degree.
%     \item We validate \ourmethod~on 4 widely used datasets and 4 SotA TGNN models, showing that it outperforms 11 SotA attack baselines by causing $29.47\%$ degradation on average, while remaining stealthy under 4 SotA graph anomaly detection methods and is also robust to 4 SotA adversarial defense training methods.
% \end{enumerate}
\begin{enumerate}
    \item We propose \ourmethod, a novel adversarial poisoning framework that operates by removing high-influence edges and replacing them with adversarial negatives using \ourmethod's novel negative sampling algorithm.
    \item We enforce 4 unnoticeability constraints~(cf.\S\ref{sec:unnoticeability_constraints}) that includes graph sparsity, temporal plausibility, node activity windows, and degree preservation to ensure attack stealth.
    \item We demonstrate \ourmethod~(a) outperforms 11 baselines~(cf.\S\ref{sec:attack_perf}) with 29.47\% average degradation across 4 datasets and 4 SotA TGNN models; (b) while evading 4 SotA anomaly detectors~(cf.\S\ref{sec:anomaly_detection}) and (c) resisting 4 adversarial defense methods~(cf.\S\ref{robustness}).
\end{enumerate}
% \begin{enumerate}
%     \item We propose \ourmethod, a novel poisoning attack on TGNNs that does not rely on surrogate models and is designed for realistic attack settings on CTDGs.
%     \item We introduce \textbf{Temporal EdgeRank}, a new time-aware importance scoring mechanism that identifies critical edges based on temporal influence.
%     \item We enforce four unnoticeability constraints to ensure stealth, including maintaining graph sparsity, local degree patterns, temporal node activity windows, and temporal consistency.
%     \item We validate \ourmethod~on four diverse datasets and four TGNN models, where it outperforms 11 existing baselines by an average of $29.47\%$.
%     \item We validate \ourmethod~on four diverse datasets and four TGNN models, showing that it outperforms six attack \todo{verify this} baselines by $29.47\%$ on average, while remaining stealthy under four state-of-the-art graph anomaly detection methods.
%     \item We commit to open-sourcing our code, data, and models to foster reproducibility.
% \end{enumerate}

\section{Related Work}

\subsection{Temporal Graph Neural Networks (TGNNs)}

Dynamic graphs are commonly categorized into two settings: DTDGs and CTDGs. DTDG approaches discretize a temporal graph into sequential snapshots and apply static graph learning to each frame~\cite{pareja2020evolvegcn, sankar2020dysat}. However, these methods neglect fine-grained temporal continuity, leading to less faithful modeling of real-world dynamic systems.
In contrast, CTDG-based approaches, particularly TGNNs~\cite{trivedi2019dyrep, rossi2020temporal, xu2020inductive, ma2020streaming}, model interactions as continuous temporal streams and capture temporal dependencies directly. These models often incorporate memory networks~\cite{ma2020streaming, rossi2020temporal}, sequential encoders~\cite{cong2023we}, and temporal message-passing schemes, allowing more expressive modeling of evolving systems. TGNNs have shown success in modeling complex phenomena such as user-item interactions, social dynamics, and temporal knowledge graphs~\cite{ennadir2024expressivityrepresentationlearningcontinuoustime}. In this work, we focus on CTDGs due to their expressive power and alignment with real-world interaction patterns.

\subsection{Adversarial Attacks on Graphs}
Adversarial attacks fall into two broad categories:\\
\textit{Evasion Attacks:} Manipulates inputs at inference time to mislead the model after training.\\
\textit{Poisoning attacks:} Injects perturbations \textit{during training} to corrupt the learned representations.
% \begin{itemize}
%     \item \textit{Evasion attacks}, which manipulate inputs at inference time to mislead the model after training.
%     \item \textit{Poisoning attacks}, which inject perturbations \textit{during} training to corrupt the learned representations.
% \end{itemize}
\textbf{\ourmethod\ is a poisoning attack}, aiming to degrade overall model performance by manipulating the training graph.
\paragraph{Poisoning static graphs vs. temporal graphs:} A substantial body of literature explores poisoning attacks on static graphs~\citep{bojchevski2019adversarial, li2020adversarial}, where edge or node feature perturbations can significantly impact downstream tasks like node classification or link prediction. However, directly applying these methods to CTDGs is ineffective due to the evolving nature of temporal graphs. Specifically, past perturbations quickly lose influence as new edges arrive, and future edges are not observable at attack time, preventing attackers from crafting precise interventions. This dynamic nature limits the transferability of static attack strategies to temporal domains.

\paragraph{Poisoning DTDGs vs CTDGs:} \citet{chen2021time, sharma2023temporal} extend poisoning strategies to DTDGs by targeting discrete graph snapshots. While effective in short-term settings, these methods are inherently limited in scope. Perturbations in DTDGs only influence the snapshot in which they occur and lack temporal propagation, making them ill-suited for CTDGs where edge interactions carry real-valued timestamps and temporal continuity is essential.

\paragraph{Existing Literature on Poisoning CTDGs:} T-SPEAR~\cite{lee2024spear} is, to our knowledge, the first dedicated poisoning attack on CTDGs. It trains a surrogate model to select edges that, when inserted, disrupt downstream predictions while adhering to 4 unnoticeability constraints~(similar but weaker than ours). They also present an adversarial defense strategy against poisoning attacks called T-shield. While effective, T-SPEAR has critical limitations: (1) Assumes full access to the dataset, which is not true in real-world scenarios, as adversary often has access to just training dataset. (2) Requires significant compute to train and optimize a surrogate model. (3) Introduces edges but does not consider deleting critical ones, limiting perturbation scope. (4) The effectiveness of the attack depends on the accuracy of the surrogate model used. 
% \begin{itemize}
%     \item It assumes full access to the dataset and requires significant compute to train and optimize a surrogate model.
%     \item It introduces edges but does not consider deleting critical ones, limiting perturbation scope.
% \end{itemize}
% \paragraph{\ourmethod: A direct, surrogate-free approach.}
% \ourmethod~addresses these shortcomings by proposing a direct, surrogate-free poisoning framework. It introduces \textbf{Temporal EdgeRank}, a temporal importance scoring metric for identifying structurally and temporally critical edges. Instead of adding new edges, \ourmethod~removes high-impact edges and replaces them with adversarial negatives using a backtracking-based sampling strategy, all while preserving edge density and adhering to stricter stealth constraints (e.g., degree preservation, temporal locality). These design choices make \ourmethod~not only more computationally efficient but also more robust under realistic constraints.

\section{Preliminaries}
\paragraph{Dynamic graph setting:} We consider a CTDG defined as $G = (V, E)$, where $V$ is the set of $n$ nodes, and $E$ is the set of $m$ timestamped edges. Each edge is a tuple $(u, v, t)$, where $u, v \in V$ denote the interacting nodes and $t \in \mathbb{R}$ is the time of interaction. Multiple edges can exist between the same node pair at different times, capturing repeated interactions over time.

\paragraph{Temporal PageRank:} Temporal PageRank (TPR) extends classical PageRank to temporal graphs by replacing static walks with time-respecting walks~\cite{rozenshtein2016temporal}. Let $Z^T(v, u \mid t)$ denote the set of temporal walks~(cf Appendix \ref{glossary}) from node $v$ to $u$ that occur strictly before time $t$. The probability of a walk $z$ is defined as:

\[
\Pr'[z \in Z^T(v, u \mid t)] = \frac{c(z \mid t)}{\sum\limits_{z' \in Z^T(v, x \mid t), \, x \in V, \, |z'| = |z|} c(z' \mid t)},
\]

where $c(z \mid t)$ is the decay-weighted count of $z$:

\[
c(z \mid t) = (1 - \beta)\!
\mathop{\prod_{\substack{((u_{i-1},u_i,t_i),\\(u_i,u_{i+1},t_{i+1}))\in z}}}
\beta^{\,\left|\left\{(u_i,y,t')\mid t'\in[t_i,t_{i+1}],\,y\in V\right\}\right|}.
\]

Transitions are penalized exponentially by the number of intervening interactions to model temporal decay. The final TPR score of node $u$ at time $t$ is:

\[
r(u, t) = \sum_{v \in V} \sum_{k=0}^{t} (1 - \alpha) \alpha^k \sum_{\substack{z \in Z_T(v, u \mid t) \\ |z| = k}} \Pr'[z \mid t],
\]

where $\alpha$ is the jump probability. This formulation biases the walk toward shorter, temporally coherent paths and naturally captures influence drift in evolving graphs.

\section{Problem Formulation}
\subsubsection{Attacker's objective.}
Given a clean CTDG $G$, the attacker seeks to degrade the performance of a Temporal Graph Neural Network (TGNN) on dynamic link prediction by injecting adversarial edges and/or removing crucial ones. The perturbed graph is denoted as:
\[
\tilde{G} = (V, (E \setminus E') \cup \tilde{E}),
\]
where $E' \subset E$ are the removed edges, $\tilde{E}$ are the adversarial (inserted) edges, and the total number of modifications is bounded by a fixed budget $\Delta$. This is a \textbf{poisoning attack}, the graph is modified before training, and the TGNN is trained on the corrupted $\tilde{G}$.

\subsubsection{Attacker's knowledge.} \label{sec:adversarialAttackonDG}
We assume a strict black-box setting: the attacker has no knowledge of the TGNN architecture, loss function, or gradients. However, the attacker can observe the training portion of the dataset, a common assumption in graph poisoning ~\cite{lee2024spear, ma2020practical, Z_gner_2018}. Unlike previous work like T-spear~\cite{lee2024spear}, we do not assume access to validation/test splits.

\subsubsection{Unnoticeability Constraints}
\label{sec:unnoticeability_constraints}
To ensure the attack remains stealthy and realistic, we adopt and strengthen the four unnoticeability constraints introduced by T-spear:

\begin{itemize}
    \item \textbf{(C1) Perturbation Budget:} The total number of modified edges (inserted + removed) is bounded by $\Delta = \lfloor p \cdot |E| \rfloor$, where $p$ is the perturbation rate.

    \item \textbf{(C2) Temporal Feasibility:} Timestamps of inserted edges are drawn from the same distribution as original timestamps, i.e., $\tilde{t} \sim P_t(E)$.

    \item \textbf{(C3) Node Activity Window:} Inserted edges may only connect nodes that have been active within a temporal window $W$ around the chosen timestamp $\tilde{t}$. That is, both endpoints of $\tilde{e} = (u, v, \tilde{t})$ must appear in $V_{i, W} = \{u_j, v_j \mid j \in [i-W+1, i]\}$.

    \item \textbf{(C4) Degree Preservation:} We impose a stronger variant of node-level stealth by matching the degree distribution of each node before and after perturbation. This prevents attackers from noticeably increasing or decreasing a node’s interaction footprint.
\end{itemize}

\section{Methodology}
We propose a two-stage adversarial attack framework, \textbf{Sparsification} followed by \textbf{Replacement}, designed to degrade TGNNs trained on CTDGs. In this section, we explain \ourmethod~in detail.
% The framework identifies and removes high-impact edges (sparsification), then inserts stealthy negative samples (replacement) while adhering to real-world unnoticeability constraints. Crucially, \textbf{edge–removal in \ourmethod~is \emph{plug‑and‑play}}: any of our 16 tested edge‑importance heuristics can be dropped into the Sparsification step.

% Unlike prior work~\cite{lee2024spear}, our framework is surrogate-free, does not require model access, and operates under a limited-knowledge attacker setting (cf.~\S\ref{sec:adversarialAttackonDG}). %Figure~\ref{fig:framework} provides an overview.

% \begin{figure*}[!t]
%     \centering
%     \includegraphics[width=\textwidth]{figs/ICML-Attack-Arch.drawio.pdf}
%     \caption{Overview of the proposed temporal graph perturbation framework. The method operates in two stages: sparsification (removing high-importance edges based on Temporal PageRank) and negative sampling (adding synthetic edges using KDE-based timestamp generation and backtracking). The framework targets moments of high temporal drift to maximally disrupt TGNN learning, while preserving edge distribution to remain stealthy.}
%     \label{fig:framework}
% \end{figure*}

\subsection{Step 1: Sparsification}
\label{sec:sparsification}
% Sparsification can be performed in two ways: by removing critical edges or by removing critical timestamps~(i.e., by removing all those edges in that timestamp). To identify critical edges, we calculate heuristic scores using metrics such as Degree, PageRank, and Jaccard by aggregating static graphs up to that timestamp. And then to remove critical timestamps, we need to find the points in time where the temporal structure of the graph undergoes high variation in node influence.
% We identify critical points in time where the temporal structure of the graph undergoes high variation in node influence. These are timesteps where small changes in the graph can significantly impact the model's learned dynamics. To detect such points, we analyze the temporal drift of node importance as measured by Temporal PageRank.
We perform the Sparsification of a CTDG in 16 different ways. We classify them broadly into broad types:
% \paragraph{Edge-removal Sparsification:} We identify critical edges by computing heuristic scores using established graph metrics like Degree, PageRank, and Jaccard on static graph aggregations up to each timestamp.
\paragraph{Edge Sparsification Strategy:} For each timestamp $t_i$, we construct a static aggregated graph $G^{(i)} = (V, E^{(i)})$ where $E^{(i)} = \{(u,v,t) \in E : t \leq t_i\}$. We define a general temporal importance function $H: V \times V \times \mathbb{R} \rightarrow \mathbb{R}$ that computes the importance score for any edge $(u,v,t_j)$ as:
\begin{equation}
H(u,v,t_j) = f((u,v), G^{(j)}),
\end{equation}
where $f$ represents any graph-theoretic heuristic applied to edge $(u,v)$ within the context of static aggregation $G^{(j)}$. Edges are ranked by $H(u,v,t_j)$ in descending order, and the top-$\Delta$ highest scoring edges are selected for removal to achieve the desired $p$. We consider 4 heuristic strategies and also a Random edge removal strategy~(cf Appendix~\ref{A:heuristics}).

\paragraph{Timestamp Sparsification Strategy:} We identify critical timestamps by measuring temporal drift in node importance through generalized distance metrics. Let $r^{(t_i)} \in \mathbb{R}^{|V|}$ denote the TPR vector at timestamp $t_i$. We define temporal drift at timestamp $t_i$ as:
\begin{equation}
\delta^{(t_i)} = d(r^{(t_i)}, r^{(t_{i-1})}),
\end{equation}
where $d(\cdot, \cdot)$ represents a generalized distance function measuring importance shift between consecutive timestamps. High $\delta^{(t_i)}$ values indicate volatile periods where minimal perturbations significantly impact learned temporal dynamics. We rank all timestamps by their drift values and select the top timestamps with highest $\delta^{(t_i)}$ scores. If the final timestamp exceeds $\Delta$, we select exactly the remaining number of edges at that timestamp to meet the budget. Similar to edge sparsification, where heuristic scores can be replaced with various graph-based metrics, temporal drift computation supports multiple distance metrics (eg. $\ell_2$-norms, cosine distance, Jaccard distance etc). We consider with 11 different distance metrics~(cf Appendix \ref{A:heuristics}).

\subsection{Step 2: Adversarial Negative Sampling}
\label{sec:negativeSampling}
After removing critical edges in Step 1, we insert feasible negative edges to maintain graph density and temporal dynamics. This is designed to respect unnoticeability constraints (C2–C4) without requiring access to model gradients or internals. Our negative sampling algorithm operates as follows: for each removed edge, we insert a new edge that (i) mimics natural temporal patterns, (ii) connects temporally active nodes, and (iii) preserves the degree distribution. 

\textbf{C2:} We first sample candidate timestamps from the empirical distribution of existing edge times using kernel density estimation (KDE). This ensures inserted edges follow the same temporal rhythm as genuine interactions, satisfying temporal stealth (\textbf{C2}).

\textbf{C3:} Given a sampled timestamp, we construct a candidate node pool by selecting nodes active within a local time window $W$ around that timestamp. This enforces the activity constraint (\textbf{C3}), ensuring inserted edges do not connect inactive or dormant nodes, which would appear suspicious under standard monitoring tools. We adopt the same $W$ used by T-spear~\cite{lee2024spear}. For bipartite datasets, endpoints are sampled from disjoint node sets. Additionally, we enforce that node pairs $(u, v)$ are new to the graph—no prior interaction exists in either direction—preventing semantic leakage and contradictions with observed history.

\textbf{C4:} To maintain degree consistency (\textbf{C4}), we track deletions and insertions per node and update candidate pools accordingly. New edges are selected to keep each node's in-degree and out-degree statistically unchanged, guarding against structural anomalies such as hub overloading or connectivity spikes.

Our constraint-aware algorithm iteratively samples valid edge–timestamp pairs that satisfy all stealth constraints. When no valid candidates remain due to constraint overlap or capacity limits, we trigger a recovery step that reinitializes the KDE over timestamps and samples a fresh set of candidate times. This resampling enables the algorithm to explore new feasible timestamps without relaxing any constraints, ensuring the perturbation budget is met while preserving unnoticeability.

These steps form a principled, constraint-driven negative sampling routine. Unlike learned adversarial strategies requiring surrogate models, our method generates structurally and temporally plausible perturbations without supervision, yielding effective attacks in low-resource settings. Alternative approaches like random sampling (selecting edges arbitrarily out of temporal order) or Havel-Hakimi~\cite{havelhakimi} are insufficient—random sampling lacks strategic impact, while Havel-Hakimi preserves degrees without addressing temporal dynamics or unnoticeability requirements.
We provide the pseudocode for \ourmethod's negative sampling algorithm in Appendix \ref{sec:NS_algo}.

\subsection{Complexity Analysis}
We analyze the time and space complexity of each component in \ourmethod. 
\paragraph{Time complexity of Temporal PageRank}
\textit{The time complexity of the TPR algorithm is $O(|E| + |V|)$, where $|E|$ is the number of edges in the input graph and $|V|$ is the number of vertices.}

\paragraph{Space Complexity of Temporal PageRank}\label{lem:spaceTPR}
\textit{Let $|V|$ be the number of distinct nodes and $|T|$ the number of distinct
timestamps in the input sequence.
The combined data structures of TPR and
TER occupy}
\[
  O\!\bigl(|V|\,|T|\bigr)\text{\;memory.}
\]

\paragraph{Time Complexity of TimeStamp Selector}
\textit{The time complexity of our method is }
\[
O\big(|V|(\log|V| + d_{\text{max}}k\log k) + |E|\log|E|\big),
\]
\textit{where $|V|$ the number of vertices $|E|$ is the number of edges, $d_{\text{max}}$ is the maximum degree of any vertex, and $k$ is the average number of timestamps per edge.}

\paragraph{Space complexity of Timestamp Selector}\label{lem:spaceTSS}
\textit{Let $W$ be the length of the time window used when gathering
candidate edges.
Algorithm~{Timestamp Selector} uses}
\[
  O\!\bigl(|V| \;+\; |E|\,W\bigr)\text{\;\textit{memory}.}
\]
Detailed proofs and empirical analysis are provided in Appendix~\ref{app:complexity}, where we also demonstrate that \textbf{\ourmethod~achieves up to 10× speedup over T-spear}.
\FloatBarrier
\section{Experiment Setup}
We provide details about datasets and models along with metrics and baselines used in experiments to show the effectiveness of our method (cf. Appendix \ref{dataset-details}).\\
\textbf{Datasets:} We perform experiments on 4 benchmark real-world datasets: Wikipedia \cite{kumar2019predicting}, MOOC \cite{feng2019understanding}, UCI \cite{panzarasa2009patterns}, and Enron \cite{shetty2004enron}. For all datasets, we adopt a consistent chronological split of 70\%-15\%-15\% for training-validation-test sets, following the methodology in \cite{rossi2020temporal}. Detailed descriptions of these datasets are provided in Appendix \ref{dataset-details}.

\textbf{Models:} To evaluate the effectiveness of \ourmethod, we conduct experiments on 4 SotA TGNNs: TGN \cite{rossi2020temporal}, JODIE \cite{kumar2019predicting}, TGAT \cite{xu2020inductive}, and DySAT \cite{sankar2018dynamic}. 

\textbf{Metrics:} To ensure consistency and comparability, we utilize Mean Reciprocal Rank (MRR), which is a robust and a widely recognized evaluation metric for ranking tasks. 

% \textbf{Baselines:} We evaluate our proposed method against T-SPEAR \cite{lee2024spear}, the SotA adversarial attack framework for CTDGs, as well as 5 other baselines extrapolated from static graphs~(cf. \S\ref{baseline-details}). This ensures a comprehensive comparison and highlights \ourmethod's effectiveness relative to established and baseline methods. By aligning our evaluation framework with T-SPEAR, we provide a consistent basis for analyzing performance across different methodologies. Additionally, \cite{lee2024spear} found that connecting two nodes with low node centrality (in terms of degree and PageRank centrality) is also an effective attack. This demonstrates that low-degree attacks \cite{hussain2021structack, ma2020towards}, which are effective in static graphs, also exhibit partial effectiveness in dynamic graphs. Additional details are provided in (cf. \S\ref{baseline-details}).
% \textbf{Baselines:} We evaluate our proposed method against 11 baselines: (1) T-spear, the SotA CTDG poisoning attack; (2) 5 edge addition-based baselines extrapolated from static graphs, which were the baselines in the T-spear paper; (3) 5 edge removal-based baselines extrapolated from static graphs. We denote edge removal-based baselines as REM and edge addition-based baselines as ADD.  
% \todo{Now have to define REM,ADD properly here.}
\textbf{Baselines:} We evaluate our method against 11 baselines: T-spear (SotA CTDG poisoning attack), 5 edge addition baselines, i.e., ADD, which were used in the T-spear paper, and 5 edge removal baselines, i.e., REM, extrapolated from static graph methods. Unlike the original T-spear work, which poisons the entire dataset, including the validation and test sets, \textbf{we poison only the training set even for T-spear to ensure a fair comparison with our method, which accounts for any differences in reported T-spear performance numbers}. Detailed explanation of baselines is in Appendix \ref{baseline-details}.
% \pranav{Create a section in the Appendix explaining the baselines and cite that section here.}

% \begin{figure*}[htbp]
%     \centering
%     \begin{subfigure}{\linewidth}
%         \includegraphics[trim=0 250 0 0, clip, width=\linewidth]{figs/heatmaps_attack/TGN.pdf}
%         % \caption{Caption 1}
%     \end{subfigure}
%     \vspace{1em}
%     \begin{subfigure}{\linewidth}
%         \includegraphics[trim=0 250 0 0, clip, width=\linewidth]{figs/heatmaps_attack/JODIE.pdf}
%         % \caption{Caption 2}
%     \end{subfigure}
%     \vspace{1em}
%     \begin{subfigure}{\linewidth}
%         \includegraphics[trim=0 250 0 0, clip, width=\linewidth]{figs/heatmaps_attack/DySAT.pdf}
%         % \caption{Caption 3}
%     \end{subfigure}
%     \vspace{1em}
%     \begin{subfigure}{\linewidth}
%         \includegraphics[trim=0 30 0 0, clip, width=\linewidth]{figs/heatmaps_attack/TGAT.pdf}
%         % \caption{Caption 4}
%     \end{subfigure}
%     % \caption{Combined caption for all four heatmaps (bottom trimmed).}
%     \label{fig:heatmaps-trimmed}
% \caption{Radar charts showing the performance of each model under various attacks for each dataset. The y-axis represents the performance metric, where lower values indicate more effective attacks. The attack method that causes the lowest performance for each model-dataset pair is highlighted with a radial line and a star.}
% \label{fig:all_attack_performance_radars}
% \end{figure*}

% \begin{figure*}[!t]
% \centering

\begin{figure*}[!t]
\centering
% Subfigure for TGN
\begin{subfigure}{0.48\linewidth}
    \centering
    \includegraphics[width=\linewidth]{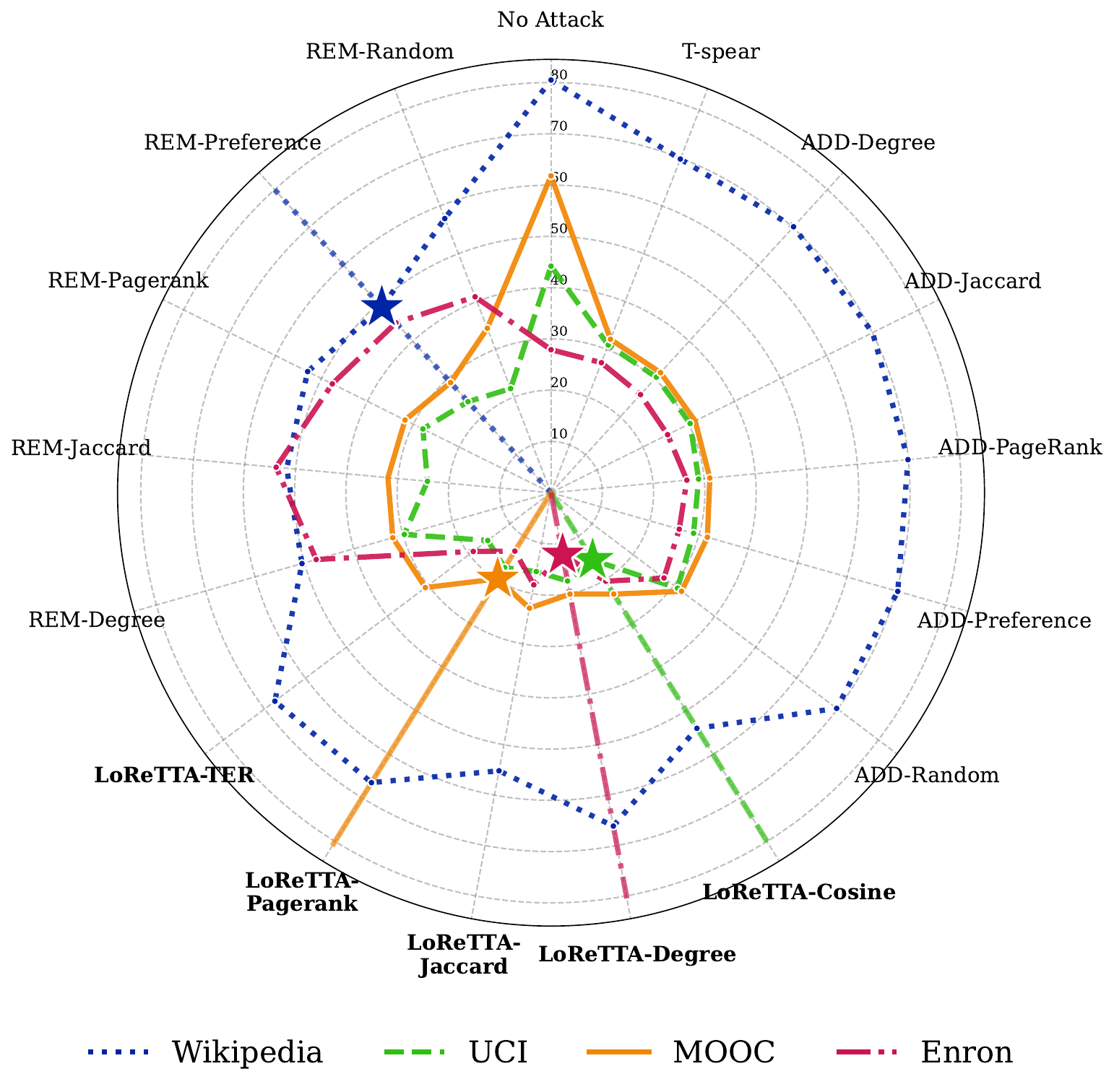}
    \caption{Attack Performance: \textbf{TGN}}
    \label{fig:tgn_radar}
\end{subfigure}
% Subfigure for JODIE
\begin{subfigure}{0.48\linewidth}
    \centering
    \includegraphics[width=\linewidth]{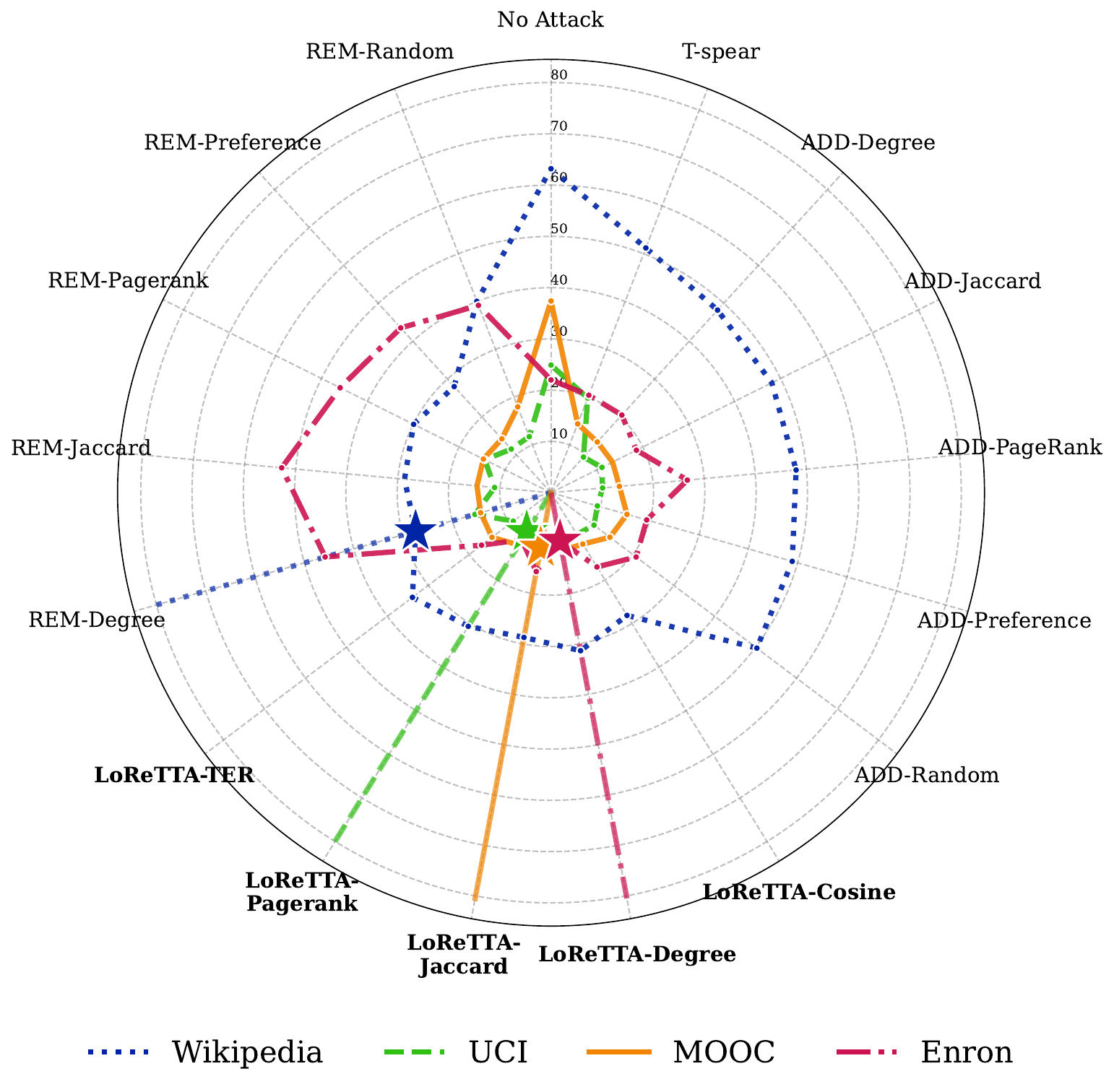}
    \caption{Attack Performance: \textbf{JODIE}}
    \label{fig:jodie_radar}
\end{subfigure}
% Subfigure for DySAT
\begin{subfigure}{0.48\linewidth}
    \centering
    \includegraphics[width=\linewidth]{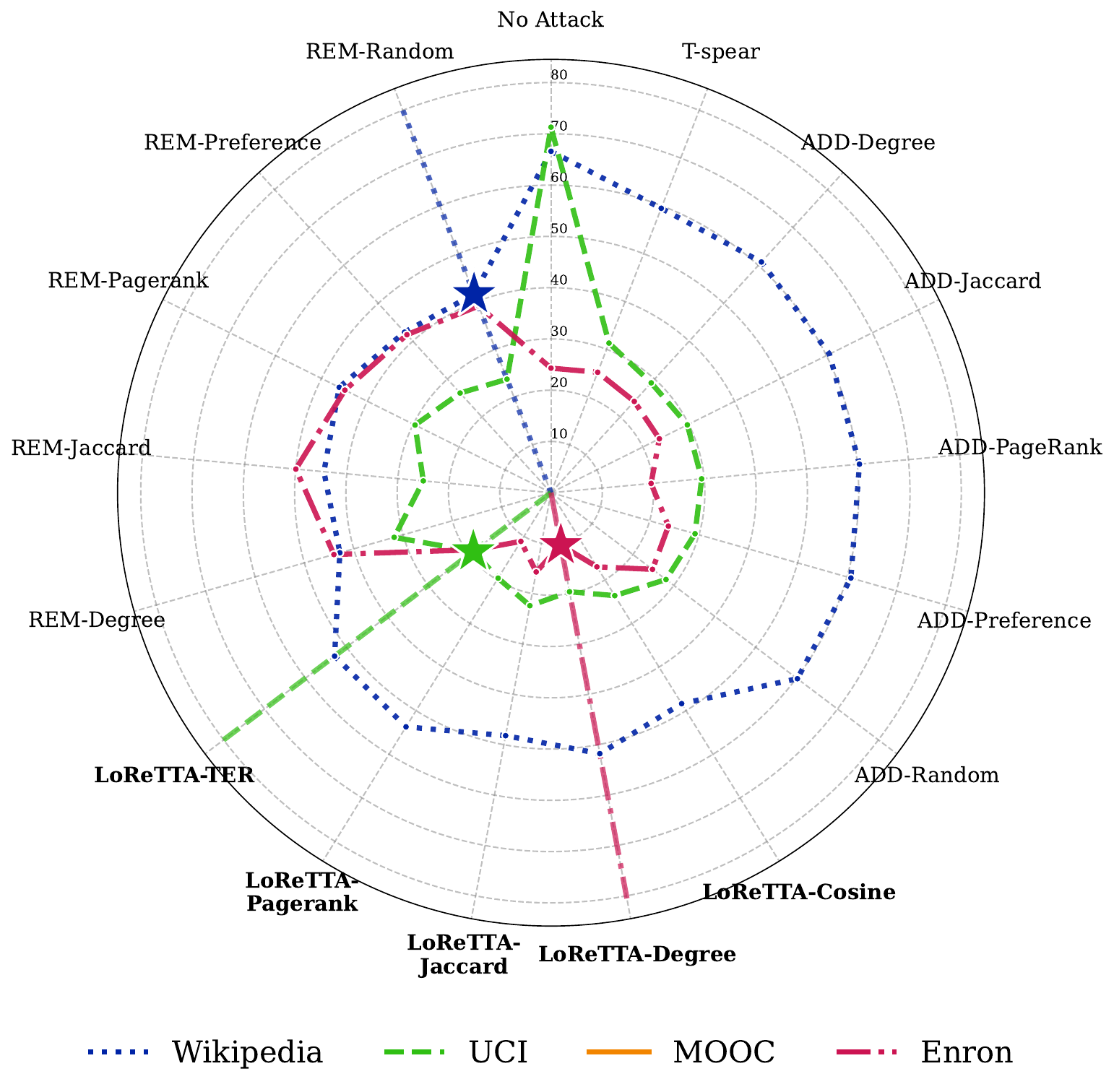}
    \caption{Attack Performance: \textbf{DySAT}}
    \label{fig:dysat_radar}
\end{subfigure}
% Subfigure for TGAT
\begin{subfigure}{0.48\linewidth}
    \centering
    \includegraphics[width=\linewidth]{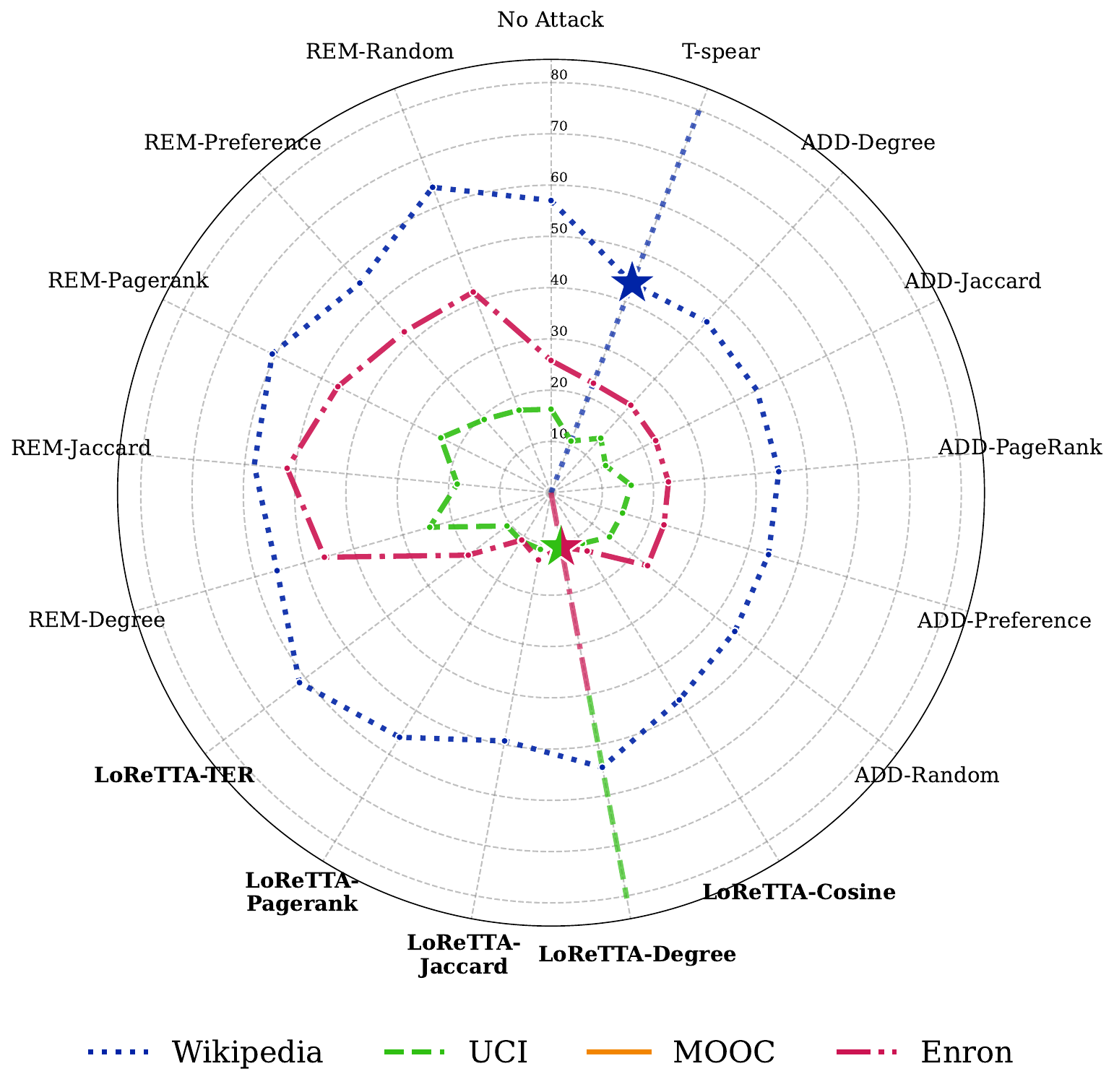}
    \caption{Attack Performance: \textbf{TGAT}}
    \label{fig:tgat_radar}
\end{subfigure}
% Main figure caption
\caption{
    Performance of our attack on each model across datasets. Lower values indicate stronger attack impact. Clean baselines are shown under No-Attack. DySAT and TGAT fail on MOOC due to Out-Of-Memory error. Naive Jaccard is omitted for bipartite graphs~(MOOC and UCI), where it is not applicable. Dashed radial lines highlight the most effective attack for each dataset-model pair. \ourmethod-Degree is the best metric for both Enron and UCI on TGAT.
    % Performance of 5 variants of \ourmethod against 11 baselines across 4 TGNN models and 4 datasets.  
}
\label{fig:attack_perf_radar}
\end{figure*}

\section{Discussion}
In this section, we evaluate performance of \ourmethod~against baselines~(cf.\S\ref{sec:attack_perf}), highlight its robustness to SotA defenses~(cf. \S\ref{robustness}), anomaly detection algorithms~(cf.\S\ref{sec:anomaly_detection}) and present our ablation studies(cf. \S\ref{sec:ablation_studies}) 
\subsection{Attack Performance}
\label{sec:attack_perf}
\begin{table*}[htpb]
\centering
\small
\setlength{\tabcolsep}{6pt}
\begin{tabular}{|l|cc|cc|cc|cc|}
\hline
\multirow{2}{*}{\textbf{Attack}} &
\multicolumn{2}{c|}{\textbf{SVD}} &
\multicolumn{2}{c|}{\textbf{Cosine}} &
\multicolumn{2}{c|}{\textbf{T-shield-F}} &
\multicolumn{2}{c|}{\textbf{T-shield}} \\ \cline{2-9}
& \textbf{Wikipedia} & \textbf{UCI}
& \textbf{Wikipedia} & \textbf{UCI}
& \textbf{Wikipedia} & \textbf{UCI}
& \textbf{Wikipedia} & \textbf{UCI} \\ \hline
% T-spear      & —            & —            & —            & —            & —            & —            & —            & —            \\ \hline
% \multicolumn{9}{|l|}{\textbf{LoReTTA}} \\ \hline
Degree       & $40.96\!\pm\!1.29$ & $34.04\!\pm\!1.78$ & $27.57\!\pm\!1.05$ & $ 8.05\!\pm\!0.37$ & $10.45\!\pm\!0.65$ & $ 5.78\!\pm\!0.62$ & $10.30\!\pm\!0.90$ & $ 5.98\!\pm\!0.32$ \\
PageRank     & $40.27\!\pm\!0.88$ & $34.44\!\pm\!1.26$ & $27.55\!\pm\!1.40$ & $ 7.72\!\pm\!0.38$ & $10.98\!\pm\!0.74$ & $ 5.86\!\pm\!0.40$ & $11.03\!\pm\!0.80$ & $ 5.62\!\pm\!0.41$ \\
Cosine       & $46.01\!\pm\!1.36$ & $34.34\!\pm\!1.26$ & $17.87\!\pm\!2.11$ & $ 7.63\!\pm\!0.90$ & $10.32\!\pm\!0.57$ & $ 6.25\!\pm\!0.41$ & $ 9.62\!\pm\!1.08$ & $ 6.10\!\pm\!0.79$ \\
Jaccard      & $48.94\!\pm\!0.97$ & $33.70\!\pm\!1.69$ & $19.33\!\pm\!2.19$ & $ 7.74\!\pm\!0.57$ & $ 9.08\!\pm\!0.74$ & $ 5.69\!\pm\!0.52$ & $ 9.61\!\pm\!0.37$ & $ 5.50\!\pm\!0.27$ \\
TER          & $47.99\!\pm\!1.39$ & $34.46\!\pm\!0.94$ & $32.11\!\pm\!0.97$ & $ 7.40\!\pm\!0.76$ & $11.31\!\pm\!1.07$ & $ 6.20\!\pm\!0.57$ & $11.38\!\pm\!1.05$ & $ 5.96\!\pm\!0.41$ \\ \hline
Clean        & $80.50 \pm  0.50$  & $44.2 \pm 0.4$           & $80.50 \pm  0.50$           & $44.2 \pm 0.4$             & $80.50 \pm  0.50$            &  $44.2 \pm 0.4$           & $80.50 \pm  0.50$            & $44.2 \pm 0.4$                      \\ \hline
\end{tabular}
\caption{Performance~(\% MRR) of variants of \ourmethod~against 4 SotA defenses. The "Clean" row establishes the baseline performance of TGN without any perturbations or defenses applied.}
\label{tab:robustness_results}
\end{table*}
We perform a comprehensive evaluation of our proposed attack method against 11 baseline approaches~(cf Appendix \ref{baseline-details}). Figure~\ref{fig:attack_perf_radar} presents the average performance of 5 runs of all victim models under various attack methods at a perturbation rate of $0.3$. The results for MOOC on TGAT and DySAT are not included due to out-of-memory errors caused by the size of the models.

\ourmethod~consistently beats all the baselines in the majority of cases, demonstrating superior attack effectiveness. The single exception occurs on the Wikipedia dataset, where REM baselines achieve comparable or slightly better performance. This anomaly stems from Wikipedia's unique structural properties that make sparsification-only attacks particularly effective. Specifically, Wikipedia exhibits highly modular semantic clusters where users contribute predominantly to domain-specific areas. Sparsification attacks exploit this modularity by removing high-signal edges within semantic clusters, directly undermining the model's ability to capture critical local coherence patterns. In contrast, our method's negative sampling component introduces cross-domain edges that inadvertently provide implicit regularization, encouraging broader generalization across otherwise disjoint clusters and partially counteracting the sparsification damage.
However, it should be noted that REM baselines achieve this performance by adhering only to the C1 constraint, thereby disrupting CTDG structure more extensively than our approach, which respects both C1 and C2 constraints. We provide a detailed analysis of this phenomenon in Appendix \ref{sec:wiki_sparse_effectiveness}.

Importantly, \ourmethod~demonstrates clear superiority across all other datasets, proving our method's general effectiveness. Furthermore, \ourmethod~doesn't just work for a specific sparsification heuristic. We consistently outperform many baselines for all 5 chosen sparsification strategies, showing the stability of our attack framework and its independence from specific sparsification techniques. The attack performance in tabular form is in Appendix \ref{A:attack_table}. We also assess dominance using a one-sided exact binomial sign test across baselines; see Appendix~\ref{sec:sign-test} and Table~\ref{tab:loretta_sign_all_pairs}.

\subsection{Robustness against SotA Defenses}\label{robustness}
We evaluate the robustness of our proposed \ourmethod~against 4 SotA adversarial defense methods using TGN on Wikipedia and UCI datasets with $p$ as 0.3, as presented in Table~\ref{tab:robustness_results}. The defenses include 2 static graph methods (SVD-based reconstruction and cosine similarity filtering) and 2 variations of Tshield~\cite{lee2024spear} (Tshield-F and Tshield), which represent the current SotA CTDG defense methods. 

Our results demonstrate that even after applying these defense mechanisms, victim models consistently fail to recover their original clean performance levels. Notably, we observe that Cosine similarity filtering, T-shield, and T-shield-F defenses cause additional performance degradation beyond that induced by \ourmethod~alone. This counterintuitive result occurs because these filtering-based approaches attempt to identify and remove adversarial edges from the dataset. However, when adversarial edges are incorrectly identified, true edges are inadvertently removed, resulting in further performance deterioration.
We experimentally validate this hypothesis in Appendix \ref{A:rob_hypothesis_ablation}, where we discovered that only approximately 30\% of the filtered edges correspond to actual adversarial modifications. This finding not only validates the unnoticeability of \ourmethod~but also underscores its robustness against current defense strategies, highlighting the urgent need for more robust defense methods

\subsection{Robustness against Anomaly Detection methods}
\begin{table}[H]
\centering
\small
\setlength{\tabcolsep}{3pt} % Reduces space between columns
\renewcommand{\arraystretch}{0.9}
\begin{tabular}{@{}lcccccccc@{}}
\toprule
& \multicolumn{8}{c}{\textbf{Anomaly Detection Methods}} \\
\cmidrule(lr){2-9}
\multirow{2}{*}{\textbf{Attack}} & \multicolumn{2}{c}{\textbf{MIDAS}} & \multicolumn{2}{c}{\textbf{F-FADE}} & \multicolumn{2}{c}{\textbf{AnoEdge-L}} & \multicolumn{2}{c}{\textbf{AnoEdge-G}} \\
\cmidrule(lr){2-3} \cmidrule(lr){4-5} \cmidrule(lr){6-7} \cmidrule(lr){8-9}
& \textbf{P} & \textbf{R} & \textbf{P} & \textbf{R} & \textbf{P} & \textbf{R} & \textbf{P} & \textbf{R} \\
\midrule
\multicolumn{9}{@{}l}{\textbf{Wikipedia Dataset}} \\
\midrule
Degree   & 0.35 & 0.52 & 1.00 & 0.28 & 0.47 & 0.45 & 0.39 & 0.55 \\
PageRank & 0.35 & 0.53 & 1.00 & 0.30 & 0.49 & 0.46 & 0.40 & 0.55 \\
Cosine   & 0.26 & 0.70 & 1.00 & 0.21 & 0.50 & 0.38 & 0.46 & 0.74 \\
Jaccard  & 0.28 & 0.67 & 1.00 & 0.18 & 0.50 & 0.36 & 0.42 & 0.75 \\
TER      & 0.37 & 0.73 & 1.00 & 0.15 & 0.46 & 0.31 & 0.29 & 0.73 \\
\midrule
\multicolumn{9}{@{}l}{\textbf{UCI Dataset}} \\
\midrule
Degree   & 0.48 & 0.29 & 0.98 & 0.44 & 0.28 & 0.26 & 0.32 & 0.44 \\
PageRank & 0.51 & 0.33 & 0.98 & 0.45 & 0.33 & 0.26 & 0.34 & 0.48 \\
Cosine   & 0.51 & 0.34 & 0.33 & 0.67 & 0.46 & 0.51 & 0.59 & 0.82 \\
Jaccard  & 0.57 & 0.36 & 0.28 & 0.84 & 0.49 & 0.45 & 0.65 & 0.84 \\
TER      & 0.63 & 0.62 & 0.26 & 1.00 & 0.52 & 0.37 & 0.44 & 0.67 \\
\bottomrule
\end{tabular}
\caption{Anomaly–class precision (P) and recall (R) of anomaly detection methods after perturbed using \ourmethod}
\label{tab:anomaly_table_single_col}
\end{table}
\label{sec:anomaly_detection}
When adversaries employ data poisoning techniques to degrade model performance, anomaly detection systems can be deployed to identify adversarial edges and effectively clean the dataset before training. Consequently, it is critical for poisoning attacks to remain undetectable to such systems. To demonstrate \ourmethod's stealth capabilities, we evaluate its robustness against 4 SotA edge-stream anomaly detection methods: MIDAS~\cite{midas}, F-FADE~\cite{f-fade}, AnoEdge-L, and AnoEdge-G~\cite{anoedge}. We focus on unsupervised approaches due to their practical relevance, as supervised methods require labeled anomaly datasets that are typically unavailable in real-world deployment scenarios.

\begin{table*}[htbp]
\centering
\resizebox{\textwidth}{!}{%
\begin{tabular}{|l|cccc|cccc|}
\hline
\textbf{Method} & 
\multicolumn{4}{c|}{\textbf{Wikipedia}} & 
\multicolumn{4}{c|}{\textbf{UCI}} \\ \hline

\textbf{Attacker's Knowledge} & 
\textbf{0.2} & \textbf{0.4} & \textbf{0.6} & \textbf{0.8} &
\textbf{0.2} & \textbf{0.4} & \textbf{0.6} & \textbf{0.8} \\ \hline

{Cosine} & 57.16 ± 0.77 & 58.90 ± 0.85 & 58.00 ± 0.91 & 58.25 ± 1.25 & 33.58 ± 1.58 & 35.84 ± 1.68 & 33.42 ± 1.78 & 34.27 ± 2.40 \\
{Random}                 & 62.61 ± 0.81 & 63.17 ± 1.26 & 62.61 ± 1.13 & 62.61 ± 0.47 & 35.06 ± 1.45 & 34.83 ± 2.00 & 34.60 ± 2.52 & 35.32 ± 1.34 \\
{Jaccard} & 58.56 ± 0.82 & 58.45 ± 0.79 & 58.60 ± 0.51 & 59.97 ± 1.47 & 35.54 ± 1.66 & 36.39 ± 1.68 & 32.69 ± 1.53 & 33.95 ± 1.76 \\
% {Preference}            & 61.23 ± 1.76 & 65.68 ± 1.15 & 65.75 ± 0.19 & 66.81 ± 1.33 & 35.52 ± 3.46 & 32.95 ± 1.62 & 32.74 ± 2.57 & 34.44 ± 0.98 \\
% {KL\_divergence} & 59.09 ± 0.51 & 62.70 ± 0.65 & 65.13 ± 0.85 & 65.36 ± 1.16 & 36.13 ± 2.57 & 34.19 ± 1.72 & 33.90 ± 0.97 & 36.77 ± 0.91 \\
{Pagerank}              & 61.18 ± 0.69 & 64.43 ± 0.42 & 64.91 ± 1.33 & 67.40 ± 0.89 & 35.01 ± 2.32 & 35.83 ± 1.87 & 32.33 ± 2.62 & 36.10 ± 0.65 \\
% {Jensen\_Shannon\_Divergence} & 59.82 ± 0.66 & 62.70 ± 1.04 & 64.77 ± 1.51 & 65.72 ± 0.71 & 36.44 ± 2.13 & 33.69 ± 3.67 & 32.68 ± 2.19 & 37.34 ± 0.70 \\
{Degree}                & 60.78 ± 0.70 & 64.40 ± 0.53 & 65.11 ± 0.31 & 67.06 ± 1.01 & 34.64 ± 2.67 & 35.11 ± 1.85 & 32.21 ± 1.86 & 35.38 ± 1.45 \\
{TER}  & 60.78 ± 0.63 & 63.04 ± 1.04 & 65.93 ± 0.51 & 66.78 ± 1.48 & 34.74 ± 2.54 & 33.57 ± 2.47 & 32.53 ± 1.83 & 36.36 ± 0.86 \\
\hline
\end{tabular}%
}
\caption{Performance (\%) of different removal strategies under varying levels of adversary's knowledge on the {Wikipedia} and {UCI} datasets. Each value is reported as mean ± standard deviation over 5 runs.}
\label{tab:adversary_knowledge}
\end{table*}

Table~\ref{tab:anomaly_table_single_col} presents the precision and recall scores of anomaly detection methods on the adversarial edges class when \ourmethod~perturbs 30\% of the training data for the TGN model across Wikipedia and UCI datasets. Each anomaly detection system assigns a score to every edge, where higher scores indicate greater likelihood of being anomalous. We employ Youden's thresholding method~\cite{youden} to determine the optimal threshold for anomaly classification, as it minimizes both false positives and false negatives.

The results reveal a fundamental trade-off inherent in anomaly detection systems. High precision with low recall indicates accurate identification of true anomalies but at the cost of allowing many perturbed edges to remain undetected. Conversely, high recall with low precision typically results in numerous false positives that erroneously classify legitimate edges as anomalies, potentially degrading model performance when removed, as observed in ~\S\ref{robustness}. 
Critically, across all dataset-attack-detector combinations, no method achieves both precision and recall scores exceeding 0.7 simultaneously. This demonstrates that \ourmethod~successfully evades SotA anomaly detection systems and remains practically undetectable in realistic deployment scenarios, establishing its effectiveness as a stealthy adversarial attack. We also report in Appendix \ref{A:anomaly_ablation} the AUPRC score (area under the precision–recall curve) for each anomaly detector, i.e., a threshold-free summary that integrates performance across all decision thresholds.
%  \begin{table*}[h]
% \centering
% \small
% \setlength{\tabcolsep}{5pt}
% \renewcommand{\arraystretch}{0.9}
% \begin{tabular}{|l|l|cc|cc|cc|cc|}
% \hline
% \multirow{2}{*}{\textbf{Dataset}} & \multirow{2}{*}{\textbf{Attack}} &
% \multicolumn{2}{c|}{\textbf{MIDAS}} &
% \multicolumn{2}{c|}{\textbf{F-FADE}} &
% \multicolumn{2}{c|}{\textbf{AnoEdge-L}} &
% \multicolumn{2}{c|}{\textbf{AnoEdge-G}} \\ \cline{3-10}
%  &  & \textbf{P} & \textbf{R} & \textbf{P} & \textbf{R} & \textbf{P} & \textbf{R} & \textbf{P} & \textbf{R} \\ \hline
% \multirow{5}{*}{Wikipedia}
%  & Degree   & 0.35 & 0.52 & 1.00 & 0.28 & 0.47 & 0.45 & 0.39 & 0.55 \\
%  & PageRank & 0.35 & 0.53 & 1.00 & 0.30 & 0.49 & 0.46 & 0.40 & 0.55 \\
%  & Cosine   & 0.26 & 0.70 & 1.00 & 0.21 & 0.50 & 0.38 & 0.46 & 0.74 \\
%  & Jaccard  & 0.28 & 0.67 & 1.00 & 0.18 & 0.50 & 0.36 & 0.42 & 0.75 \\
%  & TER      & 0.37 & 0.73 & 1.00 & 0.15 & 0.46 & 0.31 & 0.29 & 0.73 \\ \hline
% \multirow{5}{*}{UCI}
%  & Degree   & 0.48 & 0.29 & 0.98 & 0.44 & 0.28 & 0.26 & 0.32 & 0.44 \\
%  & PageRank & 0.51 & 0.33 & 0.98 & 0.45 & 0.33 & 0.26 & 0.34 & 0.48 \\
%  & Cosine   & 0.51 & 0.34 & 0.33 & 0.67 & 0.46 & 0.51 & 0.59 & 0.82 \\
%  & Jaccard  & 0.57 & 0.36 & 0.28 & 0.84 & 0.49 & 0.45 & 0.65 & 0.84 \\
%  & TER      & 0.63 & 0.62 & 0.26 & 1.00 & 0.52 & 0.37 & 0.44 & 0.67 \\ \hline
% \end{tabular}
% \caption{Anomaly–class precision (P) and recall (R) after each defence.}
% \label{tab:anomaly_table_anom_only}
% \end{table*}

\subsection{Ablation Studies}\label{sec:ablation_studies}
In this section, we analyze the sensitivity of \ourmethod~to $p$ and the knowledge of the training dataset.
% \paragraph{Impact of Perturbation Rate:}
% Figure \ref{fig:inc_attack} illustrates the effect of increasing $p$ on \ourmethod's performance degradation of TGN on Wikipedia. We also compare with T-spear at every $p$. The performance decreases steadily as $p$ increases and plateaus. Since \ourmethod uses a deterministic sparsification algorithm, after a certain point, removing more edges doesn't have much effect on the performance degradation. This means that there are a few influencial edges, and removing more edges after that gives diminishing returns as \ourmethod might be removing noisy edges that do not contribute to learning. It should be noted that every variant of \ourmethod outperforms SotA at every $p$ showcasing \ourmethod's superior performance, despite being a surrogate-free attack.
% As it is evident by the graph, that~\ourmethod~outperforms SotA at each perturbation rate showcases our attack's superior performance, despite being a model-free attack. Among~\ourmethod~variants, cosine and Jaccard perform best while degree and TER rely on static heuristics, making them less adaptive to temporal dynamics.
\paragraph{Impact of the Knowledge of Adversary:}
Table~\ref{tab:adversary_knowledge} demonstrates the effect of increasing knowledge and access to the training dataset to the adversary on TGN performance degradation using \ourmethod~on Wikipedia. We discover that increasing adversarial knowledge does not always lead to more effective attacks. Heuristic strategies like pagerank and degree underperform—even degrade—with more knowledge. These methods remove high-centrality nodes assuming they're critical, but often eliminate redundant or noisy nodes, unintentionally improving performance. This suggests that vulnerabilities are concentrated in a small set of influential nodes; once these are removed, additional knowledge yields diminishing or negative returns. 

% In contrast, TPR-cosine-based attacks, even at low knowledge, consistently outperform others. Unlike structural heuristics, cosine removal operates in the model’s latent space, implicitly targeting semantic prototypes. This indicates the embedding space is more fragile than the graph structure, and attacks exploiting representation-level signals are more effective. (This is not true for UCI so makes the claim very weak.)
Interestingly, random removal shows stable performance across knowledge levels, suggesting non-monotonicity in attack impact. The system’s response is highly non-linear; only specific perturbations degrade performance. These findings indicate that attack effectiveness depends on targeting specific vulnerable components rather than simply having more information about the system. For the adversary knowledge ablation, we perturb up to 30\% of the edges to study the effect of strong adversarial influence across strategies. We experimentally validate our hypothesis in Appendix \ref{A:ablation_knowledge_appendix}.
% Our surrogate-free strategies plateau early; optimization-based attacks that search in perturbation space may offer better scalability.
\paragraph{Impact of Perturbation Rate:}
Figure \ref{fig:inc_attack} demonstrates the effect of increasing $p$ on TGN performance degradation using \ourmethod~on Wikipedia, compared against T-spear. Performance degrades steadily with increasing $p$ before plateauing, indicating that \ourmethod's deterministic sparsification algorithm targets influential edges first. Beyond a certain threshold, removing additional edges yields diminishing returns as the algorithm begins targeting less critical connections that contribute minimally to model learning. Notably, \ourmethod~consistently outperforms the SotA baseline across all $p$, demonstrating superior effectiveness despite operating as a surrogate-free attack.
\begin{figure}[H]
    \centering
    \includegraphics[width=\linewidth]{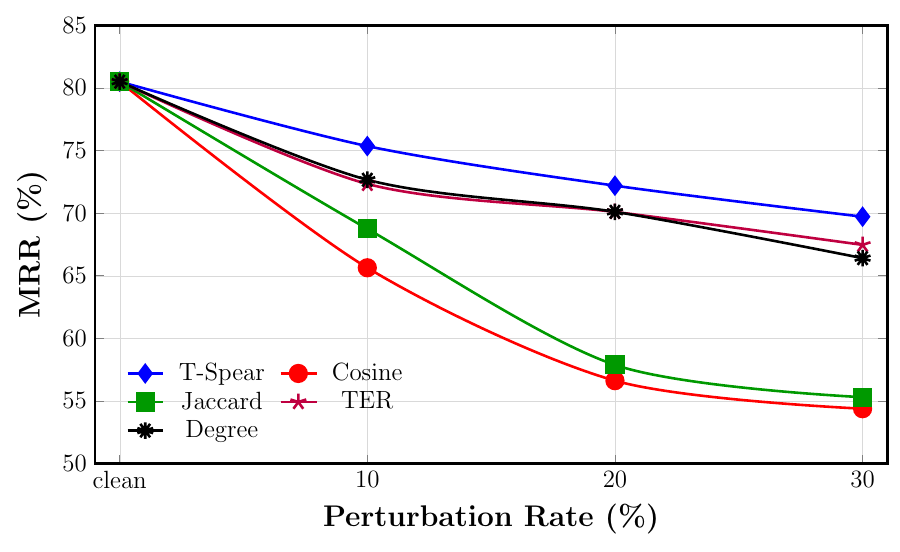}
    \caption{Effect of perturbation rate $p$ on the performance of~\ourmethod~using different sparsification strategies.}
    \label{fig:inc_attack}
\end{figure}

\section{Conclusion}
In this paper, we propose~\ourmethod, a novel attack strategy for TGNNs that exploits temporally significant edges to achieve effective perturbations. Our method consistently outperforms baselines, including T-SPEAR, across various datasets and models, highlighting the critical role of temporal dynamics in graph vulnerabilities. \ourmethod~is also robust to SotA adversarial defenses and anomaly detection systems.
% We believe our contributions lay the foundation for further advancements in defense against adversarial attacks on temporal graphs, with broad implications for network security and graph-based~modeling.

\bibliography{reference}

\appendix
\newpage
\appendix
% Preamble for Appendix
% \setcounter{table}{0}
% \counterwithin{table}{section}
% \renewcommand{\thesection}{\Alph{section}}%
% % or try \arabic{section}
% \setcounter{figure}{0}
% \counterwithin{figure}{section}
% \renewcommand{\thesection}{\Alph{section}}%
% %reset equation to appendix number
% \setcounter{equation}{0}
% \renewcommand{\theequation}{\Alph{section}.\arabic{equation}}

% Appendix starts from here.
\section{Appendix}\label{Appendix}

\subsection{Detailed Explanation of Heuristics Used for Sparsification}
\label{A:heuristics}

We use deterministic sparsification strategies to remove a fixed fraction of edges from the temporal interaction graph. Each method scores edges or timestamps using a heuristic, then removes the top-ranked ones. The goal is to subtly weaken the model by pruning structurally or semantically important regions, all while ensuring reproducibility through sorted and tie-broken decisions.

Heuristics like Degree, Jaccard, PageRank, Preference, and Random (cf. \S\ref{baseline-details}) follow standard static scoring schemes. Here, we focus on sparsification methods built on two temporal graph signals: Temporal PageRank (TPR) and Temporal EdgeRank (TER).

\paragraph{Temporal PageRank Drift Heuristics.}
We track how node importance evolves over time by computing TPR scores at each timestamp. These scores are stored as a time-series vector for each node, forming a temporal embedding. To identify unstable or high-impact regions, we measure how much these vectors shift between timestamps using distance or divergence metrics.

\begin{itemize}
    \item \textbf{Mean Shift in TPR ($\text{MSS}$ \& $\text{MSS}^2$).} Both heuristics compute shifts in TPR scores across consecutive timestamps, but differ in the order of operations. In $\text{MSS}$, we first compute absolute per-node shifts and then take the mean. In $\text{MSS}^2$, we compute the mean TPR vector across nodes at each timestamp, then take the shift between these mean vectors. The former emphasizes global volatility across nodes, while the latter highlights shifts in average importance.

    \item \textbf{Cosine Distance.} For each pair of consecutive timestamps, we compute the cosine distance between global TPR vectors:
    \[
    1 - \frac{ \vec{P}^{(t)} \cdot \vec{P}^{(t+1)} }{ \|\vec{P}^{(t)}\|_2 \cdot \|\vec{P}^{(t+1)}\|_2 }
    \]
    This captures how the "direction" of node importance has changed, without being affected by absolute scale.

    \item \textbf{Jaccard Distance.} We binarize each TPR vector to retain the top-$k$ influential nodes at each timestamp, then compute:
    \[
    1 - \frac{ |B^{(t)} \cap B^{(t+1)}| }{ |B^{(t)} \cup B^{(t+1)}| }
    \]
    This captures how much the set of most important nodes changes over time.

    \item \textbf{Euclidean Distance.} This computes:
    \[
    \left\| \vec{P}^{(t)} - \vec{P}^{(t+1)} \right\|_2
    \]
    It measures the absolute magnitude of drift in node importance across timestamps.

    \item \textbf{Jensen–Shannon Divergence.} We normalize each TPR vector into a probability distribution and compute the symmetric divergence:
    \[
    \text{JSD}(Q^{(t)} \,\|\, Q^{(t+1)}) = \frac{1}{2} \text{KL}(Q^{(t)} \,\|\, M) + \frac{1}{2} \text{KL}(Q^{(t+1)} \,\|\, M)
    \]
    where $M$ is the midpoint distribution. This gives a smooth, bounded signal for representational drift.

    \item \textbf{Kullback–Leibler Divergence.} Also operating on normalized TPR vectors, this asymmetric metric measures:
    \[
    \text{KL}(Q^{(t)} \,\|\, Q^{(t+1)}) = \sum_i Q^{(t)}_i \log \frac{Q^{(t)}_i}{Q^{(t+1)}_i + \epsilon}
    \]
    It emphasizes where a node’s importance suddenly changes in one direction.

    \item \textbf{Chebyshev Distance.} This focuses on the node with the most drastic change in importance:
    \[
    \max_i |P^{(t)}_i - P^{(t+1)}_i|
    \]
    It surfaces sharp outliers in the node ranking space.

    \item \textbf{Wasserstein Distance.} Also known as Earth Mover’s Distance, this computes how much "effort" it takes to transform one TPR distribution into the next. It’s ideal for measuring both magnitude and position shifts in importance mass.
\end{itemize}

\paragraph{Temporal EdgeRank (TER) Based Heuristics.}
Unlike node-based TPR drift, these methods score and remove edges directly based on how influential they are in propagating information over time.

\begin{itemize}
    \item \textbf{TER.} For each timestamp, we compute an importance score for every edge based on its temporal propagation potential. Edges with the highest TER are removed. This method directly targets structural bottlenecks in the flow of information.

    \item \textbf{Combined TER.} This heuristic blends two views of edge influence:
    \begin{itemize}
        \item \textit{Local}: Edge importance computed only up to timestamp $t$
        \item \textit{Global}: Cumulative influence up to the final time $T$
    \end{itemize}
    These are combined into a final score to capture both short-term volatility and long-term impact. The top-scoring edges across this combined metric are removed.
\end{itemize}

All pairwise distance computations (e.g., cosine, Euclidean, JSD, Wasserstein) over Temporal PageRank vectors were performed using the \texttt{scipy.spatial.distance} and \texttt{scipy.stats} modules from the SciPy library\citep{virtanen2020scipy}.

All the above methods aim to surface graph regions where representational dynamics or structural control are most active. The TPR-based methods take a node-centric view and quantify shifts in centrality embeddings. TER-based heuristics operate at edge-level, identifying links that most affect temporal signal flow. Together, they provide diverse ways of probing graph vulnerability. 

\subsection{Attack Performance in Tabular Form}
\label{A:attack_table}
We show the Performance of 11 baselines and \ourmethod on 4 TGNN victim models on 4 datasets in Table \ref{tab:attack-results}. We show the average of 5 runs.
\begin{table*}[hbpt]
\centering
\small                     % 3) Font size allowed by AAAI
\setlength{\tabcolsep}{4pt}
\tightTable        % <-- apply the tweaks
\begin{tabular}{lllllll}
\toprule
\textbf{Group} & \textbf{Attack} & \textbf{Model} &
\textbf{Wikipedia} & \textbf{UCI} & \textbf{MOOC} & \textbf{Enron} \\
\midrule
\multirow{4}{*}{\parbox[c]{2.8cm}{\centering Clean}}
  & \multirow{4}{*}{No Attack}
        & TGN   & 80.5\% $\pm$ 0.5\% & 44.2\% $\pm$ 0.4\% & 61.8\% $\pm$ 2.1\% & 27.9\% $\pm$ 0.4\% \\
  &     & JODIE & 63.2\% $\pm$ 1.2\% & 24.9\% $\pm$ 0.4\% & 37.4\% $\pm$ 2.0\% & 22.0\% $\pm$ 0.4\% \\
  &     & DySAT & 66.6\% $\pm$ 0.4\% & 71.3\% $\pm$ 0.9\% & OOM                & 24.3\% $\pm$ 0.7\% \\
  &     & TGAT  & 57.0\% $\pm$ 0.6\% & 16.3\% $\pm$ 0.3\% & OOM                & 25.8\% $\pm$ 0.6\% \\
\midrule
\multirow{20}{*}{\parbox[c]{2.8cm}{\centering ADD Baselines}}
  & \multirow{4}{*}{Degree}
        & TGN   & 70.2\% $\pm$ 0.7\% & 30.5\% $\pm$ 1.5\% & 31.7\% $\pm$ 0.5\% & 25.9\% $\pm$ 0.7\% \\
  &     & JODIE & 48.2\% $\pm$ 1.8\% &  9.4\% $\pm$ 0.3\% & 13.4\% $\pm$ 0.6\% & 20.5\% $\pm$ 1.5\% \\
  &     & DySAT & 60.9\% $\pm$ 0.5\% & 29.0\% $\pm$ 1.4\% & OOM                & 24.1\% $\pm$ 0.2\% \\
  &     & TGAT  & 45.1\% $\pm$ 0.6\% & 14.4\% $\pm$ 0.5\% & OOM                & 23.1\% $\pm$ 0.2\% \\
\cline{2-7}
  & \multirow{4}{*}{Jaccard}
        & TGN   & --                & 30.3\% $\pm$ 1.6\% & --                 & 25.4\% $\pm$ 1.0\% \\
  &     & JODIE & --                & 11.1\% $\pm$ 1.8\% & --                 & 18.6\% $\pm$ 0.8\% \\
  &     & DySAT & --                & 29.7\% $\pm$ 0.8\% & --                 & 23.6\% $\pm$ 1.5\% \\
  &     & TGAT  & --                & 11.9\% $\pm$ 0.6\% & --                 & 22.8\% $\pm$ 0.8\% \\
\cline{2-7}
  & \multirow{4}{*}{PageRank}
        & TGN   & 69.9\% $\pm$ 0.2\% & 28.9\% $\pm$ 0.8\% & 31.1\% $\pm$ 0.7\% & 26.6\% $\pm$ 0.7\% \\
  &     & JODIE & 48.0\% $\pm$ 2.4\% & 10.1\% $\pm$ 1.3\% & 13.4\% $\pm$ 0.6\% & 26.7\% $\pm$ 1.0\% \\
  &     & DySAT & 60.4\% $\pm$ 1.1\% & 29.5\% $\pm$ 0.9\% & OOM                & 19.6\% $\pm$ 0.9\% \\
  &     & TGAT  & 44.6\% $\pm$ 0.4\% & 15.7\% $\pm$ 0.5\% & OOM                & 23.0\% $\pm$ 0.8\% \\
\cline{2-7}
  & \multirow{4}{*}{Preference}
        & TGN   & 70.3\% $\pm$ 1.1\% & 28.9\% $\pm$ 1.2\% & 31.7\% $\pm$ 1.2\% & 26.0\% $\pm$ 0.5\% \\
  &     & JODIE & 48.9\% $\pm$ 2.0\% &  9.4\% $\pm$ 0.3\% & 15.4\% $\pm$ 0.6\% & 19.4\% $\pm$ 0.4\% \\
  &     & DySAT & 60.8\% $\pm$ 0.8\% & 29.2\% $\pm$ 1.2\% & OOM                & 23.8\% $\pm$ 0.8\% \\
  &     & TGAT  & 44.1\% $\pm$ 0.5\% & 14.5\% $\pm$ 0.3\% & OOM                & 22.9\% $\pm$ 0.5\% \\
\cline{2-7}
  & \multirow{4}{*}{Random}
        & TGN   & 69.8\% $\pm$ 0.4\% & 30.9\% $\pm$ 1.2\% & 31.9\% $\pm$ 0.4\% & 27.6\% $\pm$ 0.7\% \\
  &     & JODIE & 50.3\% $\pm$ 2.0\% & 10.5\% $\pm$ 1.3\% & 14.4\% $\pm$ 0.5\% & 20.8\% $\pm$ 0.3\% \\
  &     & DySAT & 60.2\% $\pm$ 1.1\% & 28.1\% $\pm$ 1.9\% & OOM                & 24.8\% $\pm$ 1.1\% \\
  &     & TGAT  & 44.9\% $\pm$ 0.8\% & 14.3\% $\pm$ 0.4\% & OOM                & 23.6\% $\pm$ 0.2\% \\
\midrule
\multirow{20}{*}{\parbox[c]{2.8cm}{\centering REM Baselines}}
  & \multirow{4}{*}{Degree} & TGN   & 50.49\% $\pm$ 1.35\% & 29.74\% $\pm$ 0.56\% & 32.08\% $\pm$ 0.21\% & 47.60\% $\pm$ 0.62\% \\
  &                         & JODIE & 27.45\% $\pm$ 0.96\% & 15.43\% $\pm$ 0.49\% & 14.27\% $\pm$ 0.44\% & 45.79\% $\pm$ 0.60\% \\
  &                         & DYSAT & 42.81\% $\pm$ 1.00\% & 31.80\% $\pm$ 0.51\% & OOM                   & 44.02\% $\pm$ 0.14\% \\
  &                         & TGAT  & 55.54\% $\pm$ 0.87\% & 24.59\% $\pm$ 1.20\% & OOM                   & 45.95\% $\pm$ 0.41\% \\
\cline{2-7}
  & \multirow{4}{*}{Jaccard} & TGN   & --                    & 24.20\% $\pm$ 1.04\% & --                    & 53.83\% $\pm$ 0.82\% \\
  &                          & JODIE & --                    & 11.05\% $\pm$ 0.72\% & --                    & 52.74\% $\pm$ 0.86\% \\
  &                          & DYSAT & --                    & 25.01\% $\pm$ 0.81\% & --                    & 49.97\% $\pm$ 0.47\% \\
  &                          & TGAT  & --                    & 18.36\% $\pm$ 0.89\% & --                    & 51.68\% $\pm$ 0.20\% \\
\cline{2-7}
  & \multirow{4}{*}{Pagerank} & TGN   & 53.02\% $\pm$ 1.55\% & 27.91\% $\pm$ 1.19\% & 31.80\% $\pm$ 0.54\% & 47.64\% $\pm$ 0.33\% \\
  &                           & JODIE & 29.93\% $\pm$ 2.96\% & 14.42\% $\pm$ 0.70\% & 14.73\% $\pm$ 0.32\% & 45.94\% $\pm$ 0.61\% \\
  &                           & DYSAT & 46.13\% $\pm$ 0.64\% & 29.62\% $\pm$ 1.11\% & OOM                   & 44.90\% $\pm$ 0.65\% \\
  &                           & TGAT  & 60.72\% $\pm$ 0.58\% & 24.01\% $\pm$ 1.26\% & OOM                   & 46.43\% $\pm$ 0.41\% \\
\cline{2-7}
  & \multirow{4}{*}{Preference} & TGN   & 48.97\% $\pm$ 1.12\% & 24.03\% $\pm$ 1.71\% & 29.07\% $\pm$ 0.58\% & 44.87\% $\pm$ 0.47\% \\
  &                             & JODIE & 28.02\% $\pm$ 0.97\% & 11.52\% $\pm$ 0.77\% & 14.23\% $\pm$ 0.36\% & 43.50\% $\pm$ 0.66\% \\
  &                             & DYSAT & 42.38\% $\pm$ 1.27\% & 26.31\% $\pm$ 1.24\% & OOM                   & 41.71\% $\pm$ 0.38\% \\
  &                             & TGAT  & 55.34\% $\pm$ 1.12\% & 19.33\% $\pm$ 0.76\% & OOM                   & 42.47\% $\pm$ 0.45\% \\
\cline{2-7}
  & \multirow{4}{*}{Random} & TGN   & 57.35\% $\pm$ 1.68\% & 21.79\% $\pm$ 1.24\% & 34.40\% $\pm$ 0.17\% & 40.99\% $\pm$ 0.26\% \\
  &                         & JODIE & 40.05\% $\pm$ 1.59\% & 11.74\% $\pm$ 0.67\% & 17.96\% $\pm$ 0.34\% & 39.19\% $\pm$ 0.80\% \\
  &                         & DYSAT & 41.55\% $\pm$ 1.03\% & 23.75\% $\pm$ 0.67\% & OOM                   & 39.07\% $\pm$ 0.21\% \\
  &                         & TGAT  & 63.90\% $\pm$ 0.89\% & 17.27\% $\pm$ 0.96\% & OOM                   & 41.99\% $\pm$ 0.35\% \\
\midrule
\multirow{4}{*}{\parbox[c]{2.8cm}{\centering CTDG Poisoning Baseline}}
  & \multirow{4}{*}{T-spear}
        & TGN   & 69.8\% $\pm$ 1.1\% & 30.9\% $\pm$ 1.0\% & 32.1\% $\pm$ 0.6\% & 27.2\% $\pm$ 0.4\% \\
  &     & JODIE & 51.2\% $\pm$ 1.9\% & 19.8\% $\pm$ 0.8\% & 14.4\% $\pm$ 0.3\% & 20.4\% $\pm$ 0.7\% \\
  &     & DySAT & 59.5\% $\pm$ 0.7\% & 31.3\% $\pm$ 1.3\% & OOM                & 25.2\% $\pm$ 1.7\% \\
  &     & TGAT  & 43.8\% $\pm$ 0.9\% & 10.8\% $\pm$ 0.7\% & OOM                & 22.9\% $\pm$ 0.5\% \\
\midrule
\multirow{20}{*}{\parbox[c]{1.6cm}{\centering LoReTTA}}
  & \multirow{4}{*}{Degree} & TGN   & 66.15\% $\pm$ 0.96\% & 17.51\% $\pm$ 1.39\% & 20.11\% $\pm$ 0.40\% & 12.31\% $\pm$ 0.68\% \\
  &                         & JODIE & 31.35\% $\pm$ 1.22\% &  9.97\% $\pm$ 1.23\% & 11.46\% $\pm$ 0.35\% &  9.55\% $\pm$ 0.40\% \\
  &                         & DySAT & 51.78\% $\pm$ 0.90\% & 19.65\% $\pm$ 1.33\% & OOM                   & 10.40\% $\pm$ 0.41\% \\
  &                         & TGAT  & 54.48\% $\pm$ 0.65\% & 10.60\% $\pm$ 0.27\% & OOM                   & 10.78\% $\pm$ 0.24\%                    \\
\cline{2-7}
  & \multirow{4}{*}{Pagerank} & TGN   & 66.51\% $\pm$ 1.22\% & 17.18\% $\pm$ 0.96\% & 19.75\% $\pm$ 0.77\% & 13.40\% $\pm$ 0.47\% \\
  &                           & JODIE & 30.65\% $\pm$ 2.21\% &  9.00\% $\pm$ 0.59\% & 11.99\% $\pm$ 0.34\% & 10.82\% $\pm$ 0.23\% \\
  &                           & DYSAT & 53.71\% $\pm$ 1.35\% & 19.60\% $\pm$ 1.14\% & OOM                   & 11.16\% $\pm$ 0.34\% \\
  &                           & TGAT  & 56.10\% $\pm$ 0.80\% & 11.05\% $\pm$ 0.59\% & OOM                   & 10.85\% $\pm$ 0.50\%                   \\
\cline{2-7}
  & \multirow{4}{*}{LoReTTA -- TER} & TGN   & 67.47\% $\pm$ 0.96\% & 15.46\% $\pm$ 1.06\% & 30.70\% $\pm$ 0.50\% & 19.00\% $\pm$ 0.53\% \\
  &                                 & JODIE & 33.86\% $\pm$ 0.56\% &  9.21\% $\pm$ 1.31\% & 14.43\% $\pm$ 0.57\% & 16.97\% $\pm$ 0.46\% \\
  &                                 & DYSAT & 52.88\% $\pm$ 1.84\% & 18.98\% $\pm$ 0.71\% & OOM                   & 18.44\% $\pm$ 0.93\% \\
  &                                 & TGAT  & 61.46\% $\pm$ 0.48\% & 10.76\% $\pm$ 0.61\% & OOM                   & 20.21 $\pm$ 0.67\%                    \\
\cline{2-7}
  & \multirow{4}{*}{LoReTTA -- Cosine} & TGN   & 54.05\% $\pm$ 0.81\% & 15.42\% $\pm$ 1.24\% & 23.25\% $\pm$ 0.61\% & 20.30\% $\pm$ 0.42\% \\
  &                                     & JODIE & 28.15\% $\pm$ 0.54\% &  9.55\% $\pm$ 1.09\% & 11.78\% $\pm$ 0.62\% & 17.03\% $\pm$ 0.41\% \\
  &                                     & DYSAT & 48.39\% $\pm$ 1.66\% & 23.61\% $\pm$ 0.89\% & OOM                   & 17.00\% $\pm$ 0.11\% \\
  &                                     & TGAT  & 47.53\% $\pm$ 0.86\% & 11.63\% $\pm$ 0.47\% & OOM                   & 13.37\% $\pm$ 0.77\% \\
\cline{2-7}
  & \multirow{4}{*}{LoReTTA -- Jaccard} & TGN   & 55.18\% $\pm$ 0.42\% & 15.65\% $\pm$ 0.48\% & 22.91\% $\pm$ 0.70\% & 18.28\% $\pm$ 0.32\% \\
  &                                      & JODIE & 28.71\% $\pm$ 1.72\% &  9.16\% $\pm$ 0.37\% & 10.92\% $\pm$ 0.65\% & 15.64\% $\pm$ 0.72\% \\
  &                                      & DYSAT & 48.21\% $\pm$ 1.26\% & 22.42\% $\pm$ 1.01\% & OOM                   & 15.72\% $\pm$ 0.41\% \\
  &                                      & TGAT  & 49.25\% $\pm$ 0.69\% & 11.23\% $\pm$ 0.31\% & OOM                   & 13.33\% $\pm$ 0.50\%                    \\
\bottomrule
\end{tabular}
\caption{Attack performance of 4 TGNN models on 4 datasets against all attack methods.}      
\label{tab:attack-results}
\end{table*}

\subsection{Attack performance of all tested Sparsification heuristics of \ourmethod}
We show the performance of \ourmethod with 16 different sparsification strategies on 4 TGNN models and 4 datasets in Table~\ref{tab:attack-sparse-ablation}. In our sparsification framework, we compute the Temporal PageRank (TPR) vector $r^{(t)} \in \mathbb{R}^{|V|}$ at each timestamp $t$ to quantify evolving node influence. The temporal drift between two successive timestamps is measured by comparing $r^{(t)}$ and $r^{(t-1)}$ using a chosen similarity or divergence metric. Our empirical evaluation reveals that similarity-based metrics—particularly cosine similarity and Jaccard index—consistently outperform conventional distance-based metrics such as KL divergence, Jensen–Shannon (JS) divergence, and $\ell_1$ or $\ell_2$ norms. We now analyze the underlying reasons for this performance gap.

\paragraph{Sparsity and high dimensionality in TPR vectors.}
TPR vectors are typically high-dimensional and sparse since only a fraction of nodes exhibit temporal influence at each timestamp. In such regimes, distance-based metrics can become unstable or uninformative. For instance, KL and JS divergence assume overlapping supports and suffer when distributions contain zeros. In contrast, cosine and Jaccard similarity are tailored for sparse domains: cosine focuses on angular displacement (not magnitude), while Jaccard tracks support set overlap.

\paragraph{Directional sensitivity of cosine similarity.}
Cosine similarity captures shifts in influence direction across time:
\[
\cos\theta = \frac{\langle r^{(t)}, r^{(t-1)} \rangle}{\|r^{(t)}\| \cdot \|r^{(t-1)}\|}.
\]
This normalized, scale-invariant signal aligns with the model’s sensitivity to which nodes are gaining or losing importance, rather than their absolute rank. TGNNs are particularly vulnerable to such directional changes in influence dynamics, making cosine similarity an effective proxy for semantic drift.

\paragraph{Support tracking via Jaccard index.}
Jaccard similarity computes the ratio of shared active nodes between consecutive TPR vectors:
\[
\text{Jaccard}(A, B) = \frac{|A \cap B|}{|A \cup B|}, 
\]
\[\quad A = \text{support}(r^{(t)}),\]
\[B = \text{support}(r^{(t-1)})\]
This metric directly reflects reconfiguration in the set of influential nodes, which is highly correlated with performance drops in TGNNs trained on evolving interaction patterns. As sparsification aims to target such volatile moments, Jaccard serves as a reliable indicator of high-impact intervals.

\paragraph{Latent semantic alignment and adversarial leverage.}
Beyond their structural robustness, similarity metrics also operate in a space more aligned with the model's predictive behavior. Cosine similarity, in particular, indirectly reflects alignment in the TGNN’s latent embedding space—capturing semantic closeness between nodes. Removal of edges between such semantically aligned nodes degrades the coherence of learned prototypes, leading to impaired message passing and degraded representation learning.

\paragraph{Representation space is more fragile than observable structure.}
Our findings suggest that TGNNs are more sensitive to latent-space perturbations than to explicit topological alterations. While structural heuristics such as node degree or temporal frequency affect global statistics, similarity-based metrics destabilize the representation space in ways that are more difficult to recover from—especially under limited-knowledge or surrogate-free attack settings.

In sum, similarity-based metrics by virtue of their alignment with both sparsity patterns and semantic drift serve as more effective tools for perturbation planning in temporal graphs. Their ability to implicitly target the model’s latent structure enables stronger degradation with weaker assumptions, establishing them as superior alternatives to classical distance measures in the context of TGNN adversarial attacks.
\begin{table*}[htbp]
\centering
\small                          % or \footnotesize / \scriptsize
\setlength{\tabcolsep}{4pt}     % default is 6pt
\renewcommand{\arraystretch}{0.9}
\resizebox{\textwidth}{!}{%
\begin{tabular}{rlllll}
\toprule
\multicolumn{1}{l}{Attack} & Model & Wikipedia & UCI & MOOC & Enron \\
\midrule
\multirow{4}{*}{Degree}%
 & TGN   & 66.15\% $\pm$ 0.96\% & 17.51\% $\pm$ 1.39\% & 20.11\% $\pm$ 0.40\% & 12.31\% $\pm$ 0.68\% \\
 & JODIE & 31.35\% $\pm$ 1.22\% &  9.97\% $\pm$ 1.23\% & 11.46\% $\pm$ 0.35\% &  9.55\% $\pm$ 0.40\% \\
 & DySAT & 51.78\% $\pm$ 0.90\% & 19.65\% $\pm$ 1.33\% & --                    & 10.40\% $\pm$ 0.41\% \\
 & TGAT  & 54.48\% $\pm$ 0.65\% & 10.60\% $\pm$ 0.27\% & --                    & --                    \\
\cline{2-6}
\multirow{4}{*}{Jaccard}%
 & TGN   & --                    & 18.63\% $\pm$ 0.77\% & --                    & 19.28\% $\pm$ 0.51\% \\
 & JODIE & --                    & 10.27\% $\pm$ 0.90\% & --                    & 16.43\% $\pm$ 0.17\% \\
 & DySAT & --                    & 22.14\% $\pm$ 1.22\% & --                    & 17.46\% $\pm$ 0.14\% \\
 & TGAT  & --                    & 11.15\% $\pm$ 0.19\% & --                    & --                    \\
\cline{2-6}
\multirow{4}{*}{Pagerank} & TGN   & 66.51\% $\pm$ 1.22\% & 17.18\% $\pm$ 0.96\% & 19.75\% $\pm$ 0.77\% & 13.40\% $\pm$ 0.47\% \\
      & JODIE & 30.65\% $\pm$ 2.21\% & 9.00\% $\pm$ 0.59\% & 11.99\% $\pm$ 0.34\% & 10.82\% $\pm$ 0.23\% \\
      & DYSAT & 53.71\% $\pm$ 1.35\% & 19.60\% $\pm$ 1.14\% & -- & 11.16\% $\pm$ 0.34\% \\
      & TGAT  & 56.10\% $\pm$ 0.80\% & 11.05\% $\pm$ 0.59\% & -- & -- \\
\cline{2-6}
\multirow{4}{*}{Preference} & TGN   & 65.25\% $\pm$ 1.46\% & 17.42\% $\pm$ 1.61\% & 25.70\% $\pm$ 0.33\% & 14.25\% $\pm$ 0.39\% \\
      & JODIE & 27.85\% $\pm$ 3.76\% & 9.93\% $\pm$ 1.66\% & 12.28\% $\pm$ 0.73\% & 10.11\% $\pm$ 0.49\% \\
      & DYSAT & 52.75\% $\pm$ 1.50\% & 22.02\% $\pm$ 1.50\% & -- & 11.75\% $\pm$ 0.18\% \\
      & TGAT  & 55.10\% $\pm$ 0.26\% & 10.59\% $\pm$ 0.45\% & -- & -- \\
\cline{2-6}
\multirow{4}{*}{Random} & TGN   & 62.06\% $\pm$ 1.35\% & 17.40\% $\pm$ 1.49\% & 29.09\% $\pm$ 0.64\% & 27.46\% $\pm$ 0.69\% \\
      & JODIE & 24.18\% $\pm$ 0.72\% & 9.36\% $\pm$ 1.18\% & 12.36\% $\pm$ 0.38\% & 25.09\% $\pm$ 1.29\% \\
      & DYSAT & 48.14\% $\pm$ 1.68\% & 22.82\% $\pm$ 1.55\% & -- & 25.66\% $\pm$ 0.66\% \\
      & TGAT  & 55.38\% $\pm$ 1.12\% & 10.70\% $\pm$ 0.34\% & -- & -- \\
\cline{2-6}
\multirow{4}{*}{TER} & TGN   & 67.47\% $\pm$ 0.96\% & 15.46\% $\pm$ 1.06\% & 30.70\% $\pm$ 0.50\% & 19.00\% $\pm$ 0.53\% \\
      & JODIE & 33.86\% $\pm$ 0.56\% & 9.21\% $\pm$ 1.31\% & 14.43\% $\pm$ 0.57\% & 16.97\% $\pm$ 0.46\% \\
      & DYSAT & 52.88\% $\pm$ 1.84\% & 18.98\% $\pm$ 0.71\% & -- & 18.44\% $\pm$ 0.93\% \\
      & TGAT  & 61.46\% $\pm$ 0.48\% & 10.76\% $\pm$ 0.61\% & -- & -- \\
\cline{2-6}
\multirow{4}{*}{TPR-Chebyshev} & TGN   & 68.67\% $\pm$ 0.56\% & 18.05\% $\pm$ 0.83\% & 33.21\% $\pm$ 0.95\% & 20.18\% $\pm$ 1.08\% \\
      & JODIE & 41.38\% $\pm$ 1.69\% & 9.88\% $\pm$ 0.64\% & 16.44\% $\pm$ 0.62\% & 18.28\% $\pm$ 0.70\% \\
      & DYSAT & 53.28\% $\pm$ 1.47\% & 21.43\% $\pm$ 0.94\% & -- & 19.50\% $\pm$ 1.25\% \\
      & TGAT  & 63.50\% $\pm$ 0.72\% & 11.22\% $\pm$ 0.33\% & -- & -- \\
\cline{2-6}
\multirow{4}{*}{Combined-TER} & TGN   & 68.42\% $\pm$ 0.74\% & 17.40\% $\pm$ 1.03\% & 33.39\% $\pm$ 1.43\% & 21.36\% $\pm$ 0.41\% \\
      & JODIE & 40.71\% $\pm$ 2.41\% & 10.19\% $\pm$ 1.43\% & 16.58\% $\pm$ 0.63\% & 19.23\% $\pm$ 0.83\% \\
      & DYSAT & 53.82\% $\pm$ 1.21\% & 19.81\% $\pm$ 0.71\% & -- & 19.26\% $\pm$ 1.01\% \\
      & TGAT  & 63.70\% $\pm$ 0.64\% & 10.97\% $\pm$ 0.27\% & -- & -- \\
\cline{2-6}
\multirow{4}{*}{TPR-Cosine} & TGN   & 54.05\% $\pm$ 0.81\% & 15.42\% $\pm$ 1.24\% & 23.25\% $\pm$ 0.61\% & 20.30\% $\pm$ 0.42\% \\
      & JODIE & 28.15\% $\pm$ 0.54\% & 9.55\% $\pm$ 1.09\% & 11.78\% $\pm$ 0.62\% & 17.03\% $\pm$ 0.41\% \\
      & DYSAT & 48.39\% $\pm$ 1.66\% & 23.61\% $\pm$ 0.89\% & -- & 17.00\% $\pm$ 0.11\% \\
      & TGAT  & 47.53\% $\pm$ 0.86\% & 11.63\% $\pm$ 0.47\% & -- & 16.29\% $\pm$ 0.55\% \\
\cline{2-6}
\multirow{4}{*}{TPR-Euclidean} & TGN   & 68.63\% $\pm$ 1.08\% & 17.93\% $\pm$ 0.84\% & 33.23\% $\pm$ 0.80\% & 21.63\% $\pm$ 0.63\% \\
      & JODIE & 40.79\% $\pm$ 1.98\% & 10.83\% $\pm$ 1.43\% & 16.93\% $\pm$ 0.73\% & 18.21\% $\pm$ 1.08\% \\
      & DYSAT & 53.45\% $\pm$ 1.10\% & 22.95\% $\pm$ 1.19\% & -- & 18.84\% $\pm$ 0.79\% \\
      & TGAT  & 63.49\% $\pm$ 0.81\% & 11.21\% $\pm$ 0.44\% & -- & 24.31\% $\pm$ 0.50\% \\
\cline{2-6}
\multirow{4}{*}{TPR-Jaccard} & TGN   & 55.18\% $\pm$ 0.42\% & 15.65\% $\pm$ 0.48\% & 22.91\% $\pm$ 0.70\% & 18.28\% $\pm$ 0.32\% \\
      & JODIE & 28.71\% $\pm$ 1.72\% & 9.16\% $\pm$ 0.37\% & 10.92\% $\pm$ 0.65\% & 15.64\% $\pm$ 0.72\% \\
      & DYSAT & 48.21\% $\pm$ 1.26\% & 22.42\% $\pm$ 1.01\% & -- & 15.72\% $\pm$ 0.41\% \\
      & TGAT  & 49.25\% $\pm$ 0.69\% & 11.23\% $\pm$ 0.31\% & -- & -- \\
\cline{2-6}
\multirow{4}{*}{TPR-Jensen Shannon Divergence} & TGN   & 66.23\% $\pm$ 0.34\% & 19.06\% $\pm$ 1.31\% & 32.90\% $\pm$ 0.55\% & 16.61\% $\pm$ 0.70\% \\
      & JODIE & 34.19\% $\pm$ 1.14\% & 12.12\% $\pm$ 1.46\% & 15.89\% $\pm$ 0.87\% & 15.07\% $\pm$ 0.84\% \\
      & DYSAT & 50.65\% $\pm$ 0.53\% & 22.27\% $\pm$ 1.19\% & -- & 16.44\% $\pm$ 0.78\% \\
      & TGAT  & 61.40\% $\pm$ 0.56\% & 11.39\% $\pm$ 0.28\% & -- & -- \\
\cline{2-6}
\multirow{4}{*}{TPR-KL Divergence} & TGN   & 66.15\% $\pm$ 0.83\% & 19.93\% $\pm$ 1.71\% & 32.64\% $\pm$ 0.66\% & 16.73\% $\pm$ 0.73\% \\
      & JODIE & 35.61\% $\pm$ 1.36\% & 11.40\% $\pm$ 1.21\% & 15.06\% $\pm$ 1.16\% & 15.12\% $\pm$ 0.51\% \\
      & DYSAT & 51.52\% $\pm$ 1.30\% & 22.77\% $\pm$ 0.87\% & -- & 16.53\% $\pm$ 0.72\% \\
      & TGAT  & 61.24\% $\pm$ 0.87\% & 11.60\% $\pm$ 0.29\% & -- & -- \\
\cline{2-6}
\multirow{4}{*}{TPR-MSS} & TGN   & 68.70\% $\pm$ 1.00\% & 18.82\% $\pm$ 1.89\% & 33.30\% $\pm$ 0.78\% & 22.21\% $\pm$ 0.69\% \\
      & JODIE & 40.20\% $\pm$ 1.10\% & 10.97\% $\pm$ 1.46\% & 15.16\% $\pm$ 1.67\% & 19.00\% $\pm$ 0.85\% \\
      & DYSAT & 53.04\% $\pm$ 1.53\% & 22.98\% $\pm$ 1.04\% & -- & 19.12\% $\pm$ 1.67\% \\
      & TGAT  & 64.16\% $\pm$ 0.35\% & 11.59\% $\pm$ 0.25\% & -- & 24.28\% $\pm$ 0.66\% \\
\cline{2-6}
\multirow{4}{*}{TPR-$\text{MSS}^2$} & TGN   & 68.45\% $\pm$ 1.11\% & 18.64\% $\pm$ 1.16\% & 33.43\% $\pm$ 0.24\% & 21.69\% $\pm$ 0.76\% \\
      & JODIE & 40.27\% $\pm$ 1.93\% & 10.19\% $\pm$ 1.32\% & 15.88\% $\pm$ 2.13\% & 18.80\% $\pm$ 0.37\% \\
      & DYSAT & 52.78\% $\pm$ 0.81\% & 21.41\% $\pm$ 0.95\% & -- & 19.14\% $\pm$ 1.71\% \\
      & TGAT  & 63.50\% $\pm$ 0.59\% & 11.30\% $\pm$ 0.45\% & -- & -- \\
\cline{2-6}
\multirow{4}{*}{TPR-Wasserstein} & TGN   & 67.72\% $\pm$ 0.67\% & 17.74\% $\pm$ 0.70\% & 33.60\% $\pm$ 0.94\% & 21.71\% $\pm$ 0.58\% \\
      & JODIE & 39.28\% $\pm$ 1.66\% & 9.66\% $\pm$ 1.16\% & 15.87\% $\pm$ 2.28\% & 18.26\% $\pm$ 0.97\% \\
      & DYSAT & 53.28\% $\pm$ 1.32\% & 22.24\% $\pm$ 1.31\% & -- & 19.53\% $\pm$ 1.17\% \\
      & TGAT  & 63.70\% $\pm$ 0.56\% & 11.21\% $\pm$ 0.37\% & -- & -- \\
\bottomrule
\end{tabular}}
\caption{Test MRR (\%) of \ourmethod using various sparsification strategies across 4 datasets and 4 TGNNs.}
\label{tab:attack-sparse-ablation}
\end{table*}

\subsection{Compliance with C3 and C4 Constraints}
\label{constraint_analysis}

We empirically validate that our adversarial edge generation procedure respects the \textbf{C3 (recent node activity)} and \textbf{C4 (node degree preservation)} constraints, both of which are critical for stealth in continuous-time poisoning attacks.

\vspace{0.5em}
\noindent\textbf{C3: Temporal Activity Window.}  
Constraint C3 ensures that any adversarial edge connects nodes that were recently active prior to the assigned timestamp. To enforce this, our EdgeTimestamp Selector explicitly filters candidates based on temporal recency.  
Figure~\ref{fig:c3_constraint} shows the timestamp-node activity alignment on the Wikipedia dataset. All inserted edges lie within the expected window of recent activity, confirming our compliance with C3 and the temporal plausibility of generated interactions.

\begin{figure}[!ht]
    \centering
    \includegraphics[width=\columnwidth]{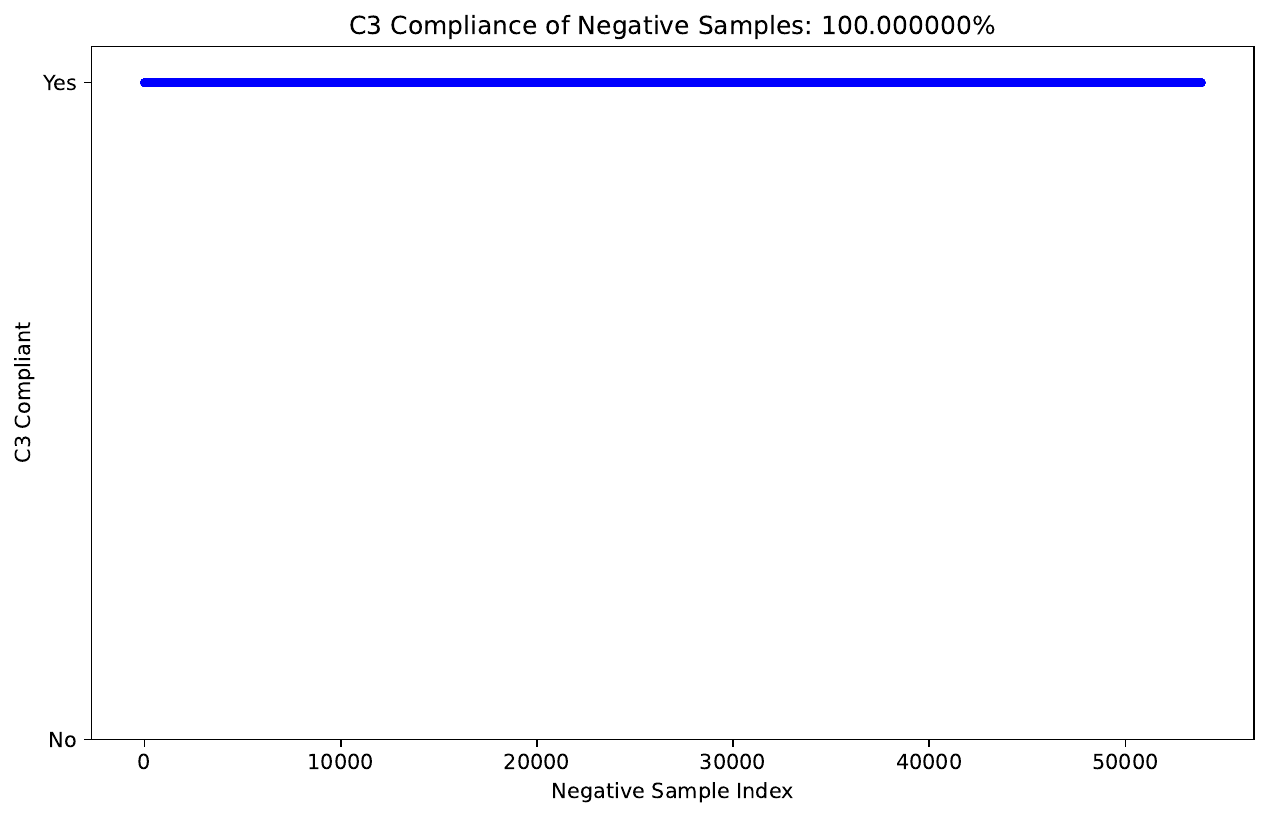}
    \caption{Compliance with the C3 constraint (Wikipedia dataset). Each adversarial edge connects nodes that were active in a fixed window prior to the sampled timestamp.}
    \label{fig:c3_constraint}
\end{figure}

\vspace{0.5em}
\noindent\textbf{C4: Degree Preservation.}  
Constraint C4 aims to preserve node degree distributions post-attack, thereby maintaining structural stealth. We track insertion and deletion counts per node, ensuring that adversarial edges compensate for removed ones in a degree-balanced fashion.

Figures~\ref{fig:bp_degree_dist} and~\ref{fig:non_bp_degree_dist} show degree distributions for bipartite and non-bipartite settings respectively. Both histograms and scatter plots confirm that regenerated node degrees closely mirror the original graph, validating the effectiveness of our degree-preserving strategy. In particular, the scatter plots reveal a near one-to-one match between original and modified degrees.

\begin{figure}[!t]
    \centering
    \begin{subfigure}[b]{\columnwidth}
        \centering
        \includegraphics[width=\columnwidth]{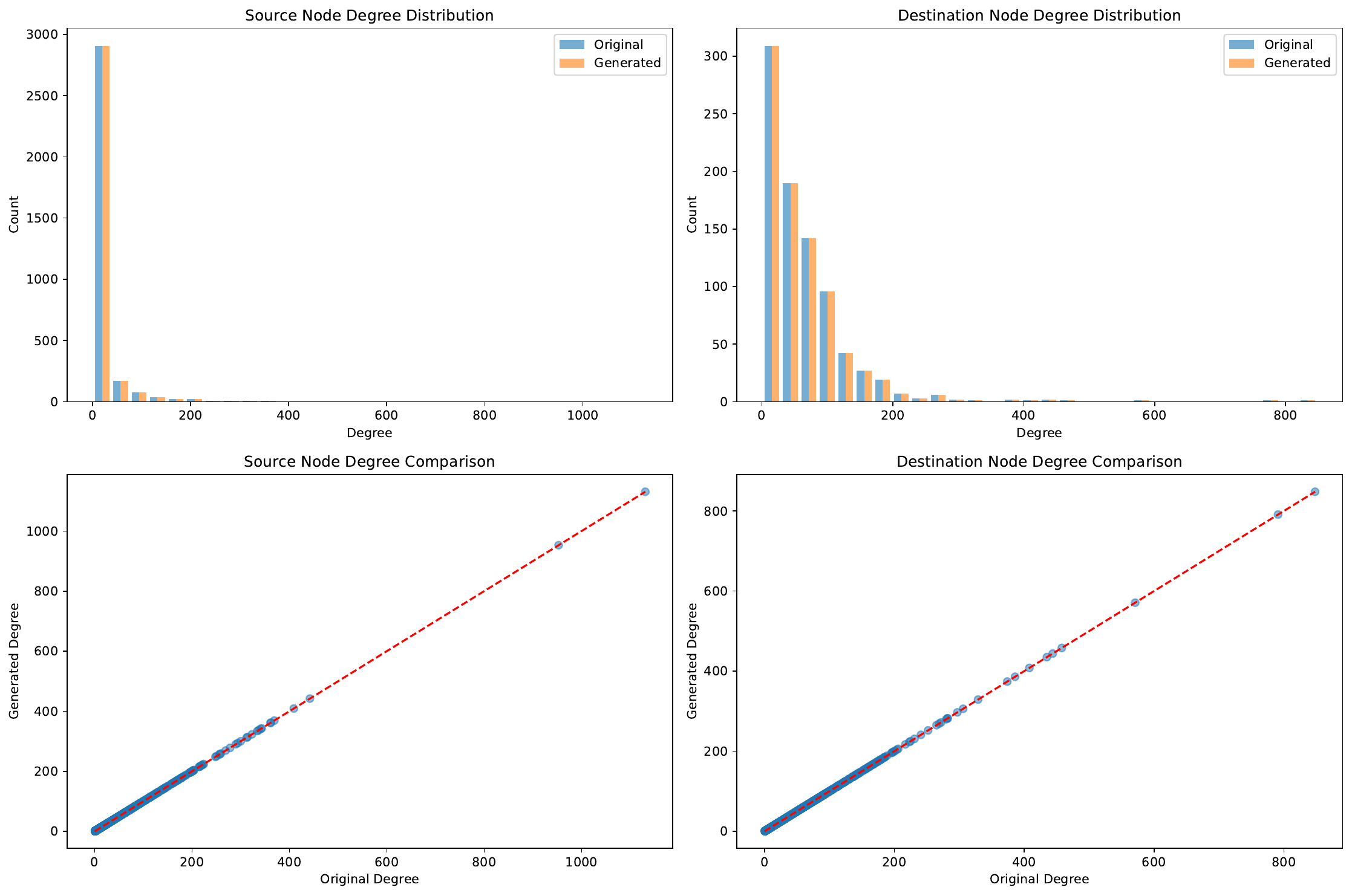}
        \caption{Bipartite dataset. Red dashes show original degrees; blue dots denote regenerated degrees. High overlap confirms C4 compliance.}
        \label{fig:bp_degree_dist}
    \end{subfigure}
    
    \vspace{0.5em}
    
    \begin{subfigure}[b]{\columnwidth}
        \centering
        \includegraphics[width=\columnwidth]{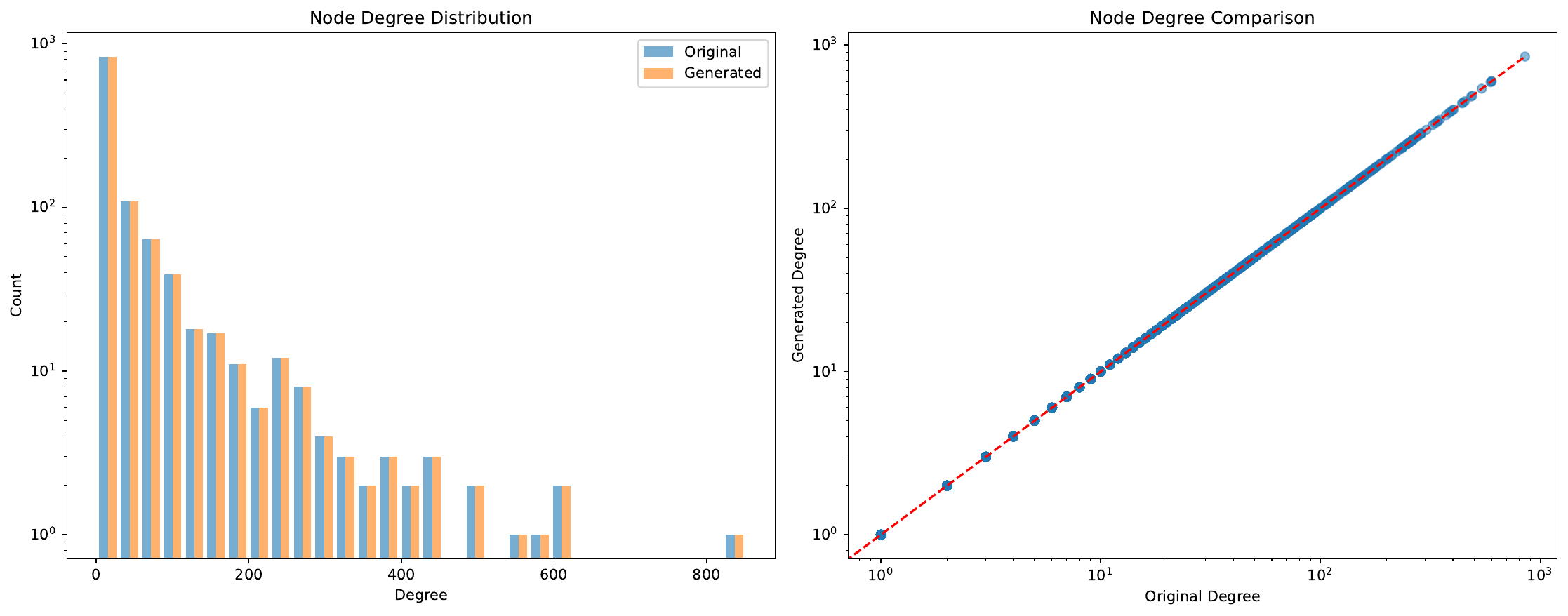}
        \caption{Non-bipartite dataset (UCI). Degree preservation is evident through strong alignment of regenerated and original degrees.}
        \label{fig:non_bp_degree_dist}
    \end{subfigure}
    
    \caption{C4 constraint validation across different graph types. Regenerated degrees preserve structural statistics, ensuring stealth.}
    \label{fig:c4_constraint}
\end{figure}

\vspace{0.5em}
\noindent\textbf{Takeaway.}  
Together, these results demonstrate that our negative sampling algorithm is \textit{constraint-aware by design}. It satisfies both temporal realism (C3) and structural stealth (C4), enabling a practical and undetectable attack strategy for CTDGs.

\subsection{Time and Space Complexity Analysis}\label{app:complexity}
We analyze both the theoretical and empirical complexity of \ourmethod, and show that it is significantly more efficient than T-SPEAR~\cite{lee2024spear}, the current state-of-the-art.

\subsubsection{Theoretical Time Complexity}
\begin{lemma}[Temporal PageRank Runtime]\label{lem:TPR}
Let \(G=(V,E)\) be a temporal graph, and \(\alpha, \beta \in [0,1]\) be the jump and decay parameters.
Algorithm \textsc{TPR} runs in 
\[
  O(|E| + |V|)\text{ time.}
\]
\end{lemma}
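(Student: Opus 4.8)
The plan is to exploit the fact that \textsc{TPR}, in its streaming formulation~\cite{rozenshtein2016temporal}, is a single-pass algorithm over the temporally ordered edge stream in which each edge triggers only a constant amount of work. I would maintain two real-valued vectors indexed by node: the running PageRank estimate $r[\cdot]$ and an auxiliary ``active-walk mass'' vector $s[\cdot]$ that summarizes how much walk-probability currently sits at each node and is available to propagate forward in time. Both vectors are stored in arrays indexed by node identifier, so every read and write costs $O(1)$.

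First I would account for initialization: setting $r[u]=s[u]=0$ for every $u\in V$ costs $\Theta(|V|)$. Next I would bound the per-edge cost. When the edge $(u,v,t)$ is consumed, the algorithm performs a fixed sequence of updates: it adds the restart contribution $(1-\alpha)$ to $r[u]$ and to $s[u]$, propagates accumulated mass via $r[v] \gets r[v] + \alpha\, s[u]$, updates the forward mass $s[v]$, and rescales $s[u]$ by the decay factor $\beta$ when $\beta<1$. Crucially, none of these steps iterates over the neighborhood of $u$ or $v$; each is a constant number of arithmetic operations and array accesses involving only the two endpoints and the scalars $\alpha,\beta$. Hence one edge is $O(1)$ and all $m=|E|$ edges are $O(|E|)$. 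Summing the two phases yields $O(|V|)+O(|E|)=O(|E|+|V|)$, as claimed. The one assumption I would state explicitly is that edges arrive in nondecreasing timestamp order, the standard CTDG streaming convention; if the stream were unsorted, an additional $O(|E|\log|E|)$ sort would be needed, but under this convention no sorting is counted toward the runtime.

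I expect the main obstacle to be making the $O(1)$-per-edge claim rigorous rather than merely plausible. The walk-based definition in the Preliminaries penalizes each transition by $\beta$ raised to the number of intervening interactions, and a literal evaluation of that count $|\{(u_i,y,t')\mid t'\in[t_i,t_{i+1}],\,y\in V\}|$ appears to require work proportional to the activity between consecutive hops. The key point I must justify is that this exponential decay is applied \emph{incrementally}: because every edge rescales the outstanding mass $s[\cdot]$ by $\beta$ exactly once as it is processed in temporal order, the cumulative $\beta$-penalty accrued along any walk is reproduced automatically, with no explicit counting. Establishing this amounts to verifying that the streaming update rule is algebraically equivalent to the walk-based score $r(u,t)$ defined earlier, after which the linear bound follows immediately and no hidden dependence on node degree or on a number of power iterations survives in the per-edge cost.
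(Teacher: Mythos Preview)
Your proposal is correct and follows essentially the same argument as the paper: maintain two node-indexed vectors $r$ and $s$, process each edge with an $O(1)$ update, and account for an $O(|V|)$ vertex pass. The only cosmetic difference is that you attribute the $O(|V|)$ term to initialization whereas the paper attributes it to a final normalization pass; either way the bound is identical, and your extra discussion of why the $\beta$-decay is applied incrementally (rather than by explicit counting) is a helpful elaboration that the paper's terse proof omits.
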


\begin{proof}
We maintain vectors \(r, s : V \to \mathbb{R}_{\ge 0}\) to track scores and intermediate contributions. 
Each edge \((u, v, t)\) is processed using a constant-time update map \(F_{\alpha,\beta}\), which requires $O(1)$ operations. 
Thus, processing all edges takes $O(|E|)$ time.

Afterward, a normalization pass over all vertices takes $O(|V|)$ time. 
Total cost: $O(|E| + |V|)$.
\end{proof}

\begin{lemma}[Timestamp Selector Runtime]\label{lem:TSS}
Let $d_{\max}$ be the maximum node degree and $k$ be the average number of candidate timestamps per edge.
The \textsc{Timestamp Selector} algorithm runs in
\[
  O\!\left(|V|(\log|V| + d_{\max}k\log k) + |E|\log|E|\right).
\]
\end{lemma}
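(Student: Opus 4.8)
The plan is to decompose the \textsc{Timestamp Selector} into its sequential phases and bound each in isolation, since the claimed runtime is a sum of three heterogeneous terms, each of which should correspond to a distinct computational stage. First I would fix notation: the algorithm computes, per timestamp, a Temporal PageRank vector, aggregates per-node drift contributions, assigns a drift-based score to each edge, and finally ranks edges to select the top-$\Delta$ for removal. I expect the three summands $|V|\log|V|$, $|V|\,d_{\max}k\log k$, and $|E|\log|E|$ to map respectively onto a vertex-ordering step, a per-vertex neighbor-timestamp gathering step, and a global edge-ranking step.

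For the dominant middle term, I would analyze the candidate-gathering phase directly. For every vertex $v$ we walk over its incident edges—at most $d_{\max}$ of them—and for each such neighbor we collect and sort its $k$ candidate timestamps, costing $O(k\log k)$ per neighbor. Processing all $d_{\max}$ neighbors of a fixed vertex therefore costs $O(d_{\max}k\log k)$, and summing over all $|V|$ vertices yields the $O\bigl(|V|\,d_{\max}k\log k\bigr)$ contribution. The vertex-level bookkeeping—ordering nodes by their aggregated drift, e.g.\ via a heap or a comparison sort—contributes $O(|V|\log|V|)$, and the final step of sorting all $|E|$ edges by their drift scores to extract the highest-scoring ones contributes $O(|E|\log|E|)$. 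By Lemma~\ref{lem:TPR}, the Temporal PageRank vectors themselves cost only $O(|E|+|V|)$, which is absorbed into $O(|E|\log|E|)$; summing the phases and discarding dominated terms then gives exactly the stated bound.

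The main obstacle will be the careful accounting in the gathering phase. I must justify that the total work over all (vertex, neighbor, timestamp) incidences is legitimately upper-bounded by $|V|\,d_{\max}k$ without double counting, and confirm that replacing each per-node degree by the uniform cap $d_{\max}$ is a valid—if deliberately loose—overestimate. It is worth noting that by the handshake lemma $\sum_{v\in V} \deg(v) = 2|E|$, so a tighter accounting would yield $O(|E|\,k\log k)$; I would flag this to make clear that the $d_{\max}$-based statement is a clean upper bound rather than a tight one. A secondary subtlety is ensuring the $\log k$ factor genuinely arises from a comparison sort on the per-edge timestamps, rather than being eliminable by bucketing, so that the bound faithfully reflects the implementation being analyzed and the three terms cannot be collapsed further.
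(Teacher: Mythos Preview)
Your three–term decomposition and the per–phase arithmetic are sound, but you are analyzing the wrong routine. In this paper the \textsc{Timestamp Selector} is \emph{not} the drift/sparsification procedure that computes TPR vectors and ranks edges for removal; it is the negative–sampling routine (Algorithm~\ref{alg:temporal-edge-selection}) that \emph{inserts} adversarial edge–timestamp pairs subject to the capacity and window constraints. Consequently, your narrative for each term is off: the $|V|\log|V|$ arises from sorting vertices by structural priority in the initialization, not by aggregated drift; the $|V|\,d_{\max}k\log k$ comes from the main \textsc{while} loop, where per node one checks capacity, collects up to $d_{\max}$ candidate edges via \textsc{GetValidEdges}, and does an $O(k\log k)$ timestamp selection in \textsc{SelectBest}; and the $|E|\log|E|$ is the cost of the \textsc{RecoverEdges} fallback, which sorts edges, not a global drift-score ranking. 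Invoking Lemma~\ref{lem:TPR} is therefore irrelevant here, since this algorithm never computes TPR.

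Once you retarget your argument to Algorithm~\ref{alg:temporal-edge-selection}, the proof the paper gives is essentially the same skeleton you propose: break into initialization, per-node selection, and recovery, bound each, and sum. Your side remark that the handshake identity would sharpen the middle term to $O(|E|\,k\log k)$ is correct and goes beyond what the paper notes; the $d_{\max}$ form is indeed just a coarse uniform cap.
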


\begin{proof}
\textbf{(i) Initialization:} Sorting vertices takes $O(|V|\log|V|)$; computing edge budgets takes $O(|E|)$.

\textbf{(ii) Edge-timestamp selection:} For each node \(v \in V\), we:
\begin{itemize}[noitemsep]
    \item Check capacity: $O(1)$,
    \item Collect candidate edges: $O(d_{\max})$,
    \item Select timestamps: $O(k\log k)$.
\end{itemize}
Total: $O(|V| d_{\max}k \log k)$.

\textbf{(iii) Recovery (if needed):} Sorting edges dominates with $O(|E|\log|E|)$.

Combining phases yields the final bound.
\end{proof}

\subsubsection{Empirical Time Complexity}
We evaluate runtime against T-SPEAR across four benchmark datasets using TGN as the base TGNN and a fixed 30\% perturbation rate.

\begin{table}[h]
  \centering
  \caption{Wall-clock time (in seconds) for \ourmethod and T-SPEAR. Lower is better.}
  \label{tab:runtime}
  \begin{tabular}{lcccc}
    \toprule
    \textbf{Method} & \textbf{Wikipedia} & \textbf{Enron} & \textbf{UCI} & \textbf{MOOC} \\
    \midrule
    T-SPEAR & 4299.02 & 2198.16 & 996.26  & 7320.78 \\
    \ourmethod &  370.18 &  620.30 & 434.36 & 1595.43 \\
    \bottomrule
  \end{tabular}
\end{table}

\noindent \ourmethod achieves up to a $10\times$ speedup over T-SPEAR, with an average of $3.91\times$ lower runtime across all datasets. This includes:
\begin{itemize}[noitemsep, leftmargin=1.5em]
    \item Surrogate model training (only for T-SPEAR),
    \item Edge addition (both),
    \item Edge removal (only for ~\ourmethod).
\end{itemize}

\noindent In contrast to T-SPEAR, which relies on training a learned surrogate model and computing model gradients for edge ranking, \ourmethod operates purely via heuristics over the interaction graph. This removes the need for backpropagation, reduces memory overhead, and enables black-box deployment. \textbf{As a result,~\ourmethod~is significantly more resource-efficient while maintaining strong attack performance.}
\vspace{0.5em}
\subsubsection{Space Complexity}

\begin{lemma}[TPR and TER Memory Footprint]\label{lem:spaceTPR}
Let $|V|$ be the number of nodes and $|T|$ the number of unique timestamps. The memory used by TPR and TER is
\[
  O(|V||T|).
\]
\end{lemma}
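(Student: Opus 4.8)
The plan is to decompose the combined memory footprint into the portion owned by Temporal PageRank (TPR) and the portion owned by Temporal EdgeRank (TER), bound each separately by $O(|V||T|)$, and conclude by noting that the sum of two such terms is still $O(|V||T|)$. First I would fix precisely which data structures are \emph{persistent} (retained across the entire timestamp sweep) versus \emph{transient} (allocated and freed while a single timestamp is processed), since only the persistent structures contribute to the claimed footprint.

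For TPR I would invoke the incremental scheme used in the proof of the Temporal PageRank Runtime lemma (Lemma~\ref{lem:TPR}): at any instant the algorithm keeps only the two vectors $r,s : V \to \mathbb{R}_{\ge 0}$, each of size $O(|V|)$. However, the sparsification heuristics require the drift $\delta^{(t_i)} = d(r^{(t_i)}, r^{(t_{i-1})})$ between consecutive timestamps, so the score vector must be retained at each of the $|T|$ distinct timestamps. Stacking these retained vectors yields a $|V| \times |T|$ array, giving $O(|V||T|)$ persistent memory; the working vectors $r,s$ add only $O(|V|)$ and are absorbed into this bound.

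The hard part will be TER, because it is an \emph{edge}-level quantity, and a naive store of one score per edge per timestamp would cost $O(|E||T|)$, which need not be $O(|V||T|)$. The key step is to argue that, since TER is a variant of TPR, its influence values can be accumulated into node-indexed (per-endpoint) registers maintained per timestamp rather than into a full edge-by-timestamp table: each edge's TER score is materialized only while its timestamp is being ranked for removal and is discarded immediately afterward. Under this accounting the edge scores form an $O(|E|)$ transient cost incurred one timestamp at a time, while TER's persistent state is again an $O(|V||T|)$ node-timestamp structure. I would make this rigorous by writing TER's update in the same constant-time $F_{\alpha,\beta}$-style form as TPR and verifying that it writes only into node-indexed accumulators, so that no $|E|\times|T|$ table is ever materialized.

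Finally I would combine the two contributions as $O(|V||T|) + O(|V||T|) = O(|V||T|)$ and observe that the transient $O(|E|)$ term does not dominate the stored footprint. For tightness I would remark that evaluating the drift metric $\delta^{(t_i)}$ genuinely requires retaining the full consecutive score vectors, so the $|V||T|$ factor is unavoidable under these heuristics, which completes the argument.
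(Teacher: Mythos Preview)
Your proposal is correct and lands on the same dominant term as the paper---retaining the TPR score vector at each of the $|T|$ timestamps yields $|V||T|$ scalars---but it is considerably more elaborate than what the paper actually does. The paper's proof is two lines: one $|V|$-vector per timestamp plus an $O(|V|)$ degree-tracking array, and that is all.

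The over-engineering stems from a misreading of what TER stores. Despite the name, TER is not an edge-indexed quantity: by Algorithm~\ref{alg:temporal-edgerank}, $TER[t]=\sum_{n\in nodes_t} TPR[n,t]/\text{out\_deg}[n]$ is a \emph{single scalar per timestamp}, so its own footprint is $O(|T|)$; the only persistent structure it adds beyond the already-counted TPR table is the out-degree array of size $O(|V|)$. Your concern about an $O(|E||T|)$ blow-up, and the node-accumulator workaround you propose to avoid it, are therefore unnecessary---there is never any edge-by-timestamp table to materialize in the first place. The transient/persistent distinction and the tightness remark are likewise not needed for what the paper claims, though none of this damages the correctness of your conclusion.
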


\begin{proof}
We store one PageRank vector \(r_t : V \to \mathbb{R}_{\ge 0}\) per timestamp \(t\), totaling \(|V||T|\) scalars.
An auxiliary degree-tracking array adds $O(|V|)$ overhead.
\end{proof}

\begin{lemma}[Timestamp Selector Memory Footprint]\label{lem:spaceTSS}
Let $W$ be the time window for candidate selection. The \textsc{Timestamp Selector} uses
\[
  O(|V| + |E|W)
\]
space.
\end{lemma}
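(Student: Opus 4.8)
The plan is to account for the memory footprint of each data structure the \textsc{Timestamp Selector} allocates, reusing the phase decomposition already established in the proof of Lemma~\ref{lem:TSS}. First I would itemize the persistent node-indexed arrays: the sorted vertex ordering, the per-node in-degree and out-degree counters that enforce constraint C4, and the remaining edge-budget values. Each of these stores one scalar per vertex, so collectively they occupy $O(|V|)$ space, and none of them grows beyond a constant amount per node over the course of the run.

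Next I would bound the candidate pools, which I expect to supply the dominant term. For each edge event the selector gathers the admissible interactions falling inside the temporal window $W$ demanded by constraint C3; maintaining these window buffers stores at most $O(W)$ candidate entries per edge. Summing over all $|E|$ edges yields $O(|E|W)$. I would then argue that the transient scratch space used inside the timestamp-selection step (the per-node candidate list of size $O(d_{\max}k)$ identified in Lemma~\ref{lem:TSS}) is reused across iterations and is subsumed by the window-buffer bound, so it contributes nothing extra asymptotically and need not be double counted.

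Combining the two contributions gives $O(|V|) + O(|E|W) = O(|V| + |E|W)$, which matches the claim. The main obstacle will be justifying that the window buffers genuinely require $\Theta(W)$ storage per edge rather than being recomputable on the fly, and confirming that the recovery step—which re-sorts edges and resamples timestamps—operates in place over the already-allocated buffers and therefore introduces no additional asymptotic overhead. I would close by remarking that since $W$ is treated as a fixed constant in our setting, the footprint collapses to $O(|V| + |E|)$, i.e.\ linear in the size of the graph.
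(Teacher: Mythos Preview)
Your proposal is correct and follows essentially the same decomposition as the paper: per-vertex counters contributing $O(|V|)$ and per-edge window buffers contributing $O(|E|W)$. The paper's own proof is considerably terser (it simply asserts base graph storage $O(|V|+|E|)$, then $O(W)$ candidates per edge, then $O(|V|)$ counters) and does not address the reuse of transient scratch space or the recovery step that you flag; your extra care on those points is not required for the paper's level of rigor but is not wrong.
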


\begin{proof}
Base graph storage requires $O(|V| + |E|)$.  
Each edge may generate up to $W$ candidates, resulting in $O(|E|W)$ auxiliary storage.
Additional counters per vertex require $O(|V|)$.
\end{proof}

\subsection{Implementation Details}
For all experiments, we utilized a constant learning rate of 0.0001 across the training of victim models, surrogate models, attack methods, and defense models. The Adam optimizer \cite{kingma2014adam} was employed with a batch size of 600, and all models were trained for a total of 100 epochs (50 epochs for DySAT). An early stopping criterion was applied after 50 epochs (25 epochs for DySAT), where training was halted if the validation Mean Reciprocal Rank (MRR) did not improve for 10 consecutive epochs.

\subsection{Hyperparameters for Models}
The models, including TGN \cite{rossi2020temporal}, JODIE \cite{kumar2019predicting}, TGAT \cite{xu2020inductive}, and DySAT \cite{sankar2020dysat}, were implemented using the Temporal Graph Learning\footnote{The TGL framework can be found at https://github.com/amazonresearch/tgl} (TGL) framework \cite{zhou2022tgl}. This framework provides optimized implementations, achieving better overall scores compared to the original implementations of the respective models. Consequently, we adopted TGL's default hyperparameter settings for all victim models across datasets. These settings include a learning rate of 0.0001, batch size of 600, and a hidden dimension of 100. A detailed summary of hyperparameters is provided in Table~\ref{table:victim_hyperparameters}.

% \begin{table}[!ht]
% \centering
% \caption{Hyperparameter Settings for Victim models.}
% \label{table:victim_hyperparameters}
% \begin{tabular}{lcc}
% \toprule
% \textbf{Parameter}      & \textbf{Value} & \textbf{Applicable models} \\
% \midrule
% Learning Rate           & 0.0001         & All models                 \\
% Batch Size              & 600            & All models                 \\
% Hidden Dimension        & 100            & All models                 \\
% Number of Epochs        & 100            & All models except DySAT    \\
%                         & 50             & DySAT                      \\
% Early Stopping Patience & 10 epochs      & All models                 \\
% \bottomrule
% \end{tabular}
% \end{table}

\begin{table}[!ht]
\centering
\caption{Hyperparameter Settings for Victim Models}
\label{table:victim_hyperparameters}
\begin{tabular}{@{}lcc@{}}
\toprule
\textbf{Parameter} & \textbf{Value} & \textbf{Applicable Models} \\
\midrule
Learning Rate & 0.0001 & All \\
Batch Size & 600 & All \\
Hidden Dimension & 100 & All \\
Number of Epochs & 100 & All except DySAT \\
                 & 50  & DySAT \\
Early Stopping Patience & 10 epochs & All \\
\bottomrule
\end{tabular}
\end{table}

\subsection{Hardware Specifications and Environment}
All experiments are conducted on one NVIDIA A100 GPU (40GB memory), 512GB of RAM, and AMD EPYC 7742 64-Core Processors. Our models are implemented using PyTorch 1.12.1 \cite{paszke2019pytorch} and Deep Graph Library (DGL) 0.9.1 \cite{wang2019deep}.

\subsection{Datasets}\label{dataset-details}
We use four datasets, which are publicly available $\footnote{\url{https://zenodo.org/records/7213796#.Y1cO6y8r30o}}$: 
\begin{itemize}
    \item \textbf{Wikipedia:} This is a bipartite interaction graph representing edits made on Wikipedia pages over a one-month period. Nodes correspond to users and pages, while links denote editing activities with associated timestamps. This dataset contains 157,474 attributed interactions recorded over one month, involving 8,227 users and 1,000 pages. Each interaction represents a user editing a page, with the editing texts converted into LIWC-feature vectors.
    \item \textbf{MOOC:} This dataset represents a bipartite interaction network of online educational resources, where nodes correspond to students and course content units (e.g., videos and problem sets). It comprises interactions from 7,047 users with 97 items, resulting in 411,749 attributed interactions recorded over approximately one month. These interactions represent the access behavior of students to online course units.
    \item \textbf{Enron:} This dataset captures email communications among employees of the ENRON energy corporation over a three-year period. Nodes represent employees, and links denote email exchanges with timestamps. 
    \item \textbf{UCI:} This is an online communication network where nodes correspond to university students. Links represent messages exchanged between students, annotated with temporal information. It consists of 59,835 message interactions involving 1,899 unique users. 
\end{itemize}

\subsection{Dataset Statistics}
\label{sec:dataset_statistics}

We conduct our experiments on four widely used temporal graph datasets spanning both bipartite and non-bipartite domains. Table~\ref{tab:dataset_statistics} provides a summary.

\begin{table}[H]
\centering
\caption{Summary of datasets used in experiments. Each dataset contains timestamped interactions and supports Unix-level granularity.}
\label{tab:dataset_statistics}
\resizebox{\linewidth}{!}{
\begin{tabular}{lcccccc}
\toprule
\textbf{Dataset} & \textbf{Domain} & \textbf{Nodes} & \textbf{Edges} & \textbf{Bipartite} & \textbf{Duration} & \textbf{Granularity} \\
\midrule
Wikipedia & Social       & 9,227  & 157,474  & Yes   & 1 Month     & Unix Time \\
MOOC      & Interaction  & 7,144  & 411,749  & Yes   & 17 Months   & Unix Time \\
UCI       & Social       & 1,899  & 59,835   & No    & 196 Days    & Unix Time \\
Enron     & Communication & 184   & 125,235  & No    & 3 Years     & Unix Time \\
\bottomrule
\end{tabular}
}
\end{table}

% \subsection{Baselines}\label{baseline-details}
% \begin{enumerate}
%     \item \textbf{RANDOM:} This baseline randomly links two nodes.
%     \item \textbf{Preference:} It connects nodes $u$ and $v$ with the lowest Preferential Attachment (PA) score \cite{liben2003link}, defined as $|N(u)| \cdot |N(v)|$, where $|N(x)|$ represents the number of neighbors of node $x$. 
%     \item \textbf{JACCARD:} It links nodes $u$ and $v$ with the lowest Jaccard coefficient \cite{liben2003link}, calculated as $\frac{|N(u) \cap N(v)|}{|N(u) \cup N(v)|}$. As common neighbors cannot be determined in bipartite graphs, this attack is applied only to unipartite graphs. 
%     \item \textbf{Degree:} Inspired by Structack \cite{hussain2021structack}, this baseline connects two nodes $u$ and $v$ by selecting the pair with the smallest sum of their neighbor counts, i.e., $|N(u)| + |N(v)|$.  
%     \item \textbf{PageRank:} Similar to Degree, it connects two nodes $u$ and $v$ by selecting the pair with the smallest sum of their PageRank centralities, i.e., $PR(u) + PR(v)$.  
% \end{enumerate} 

% We evaluate these baselines under constraints (cf. \S\ref{sec:unnoticeability_constraints}). For baselines (2)-(5), a plain graph is constructed using all edges up to the point immediately before the perturbation is introduced, and the respective metrics are computed.
\subsection{Baselines}
\label{baseline-details}

We benchmark \ourmethod~against a suite of heuristic baselines commonly used in link prediction and adversarial attack literature. These strategies generate perturbations either by adding new edges (used in prior works such as \citep{lee2024spear}) or by removing existing ones (used in our setting to highlight the effect of sparsification). Each method operates under the unnoticeability constraints outlined in~\S\ref{sec:unnoticeability_constraints}, and metrics are computed using a plain graph constructed from edges available just prior to perturbation.

\begin{itemize}
    \item \textbf{Random:} Adds or removes edges by uniformly sampling node pairs without replacement. Serves as a lower-bound baseline with no structural or temporal bias.
    
    \item \textbf{Preference:} Based on Preferential Attachment (PA)~\cite{liben2003link}, this method connects or deletes edges between nodes with the lowest PA score, where $\text{PA}(u, v) = |N(u)| \cdot |N(v)|$. Nodes with fewer connections are more likely to be perturbed, mimicking sparsity-aware behavior.
    
    \item \textbf{Jaccard:} Links or removes node pairs with the lowest Jaccard coefficient~\cite{liben2003link}, defined as 
    \[
    \text{Jaccard}(u, v) = \frac{|N(u) \cap N(v)|}{|N(u) \cup N(v)|}.
    \]
    This baseline is applied only to unipartite graphs, as common neighbor sets are undefined in bipartite settings.

    \item \textbf{Degree:} Inspired by Structack~\cite{hussain2021structack}, this strategy targets node pairs with minimal degree sum, i.e., $|N(u)| + |N(v)|$. It reflects a structural vulnerability perspective by focusing on low-connectivity regions.

    \item \textbf{PageRank:} Similar to Degree, this baseline perturbs edges between nodes with the lowest cumulative PageRank scores, $PR(u) + PR(v)$, targeting semantically weakly connected nodes.
\end{itemize}

For each baseline, we evaluate both edge addition~(ADD) and edge removal~(REM) variants: the former aligns with prior poisoning settings, while the latter is used in our context to demonstrate the effectiveness of edge sparsification. Together, these comparisons provide a comprehensive view of how simple graph heuristics fare against \ourmethod’s temporal and constraint-aware attack strategy. \textbf{In the original paper~\citep{lee2024spear}, the entire dataset—including validation and test sets—is poisoned. In our setup, we poison only the training set to ensure a fair comparison with our method. This difference in setup explains the variation in reported numbers for the T-SPEAR attack between our paper and the original.}

\subsection{Defense Methods}
These are the defense methods used.
\begin{enumerate}
    \item \textbf{TGN-SVD:} Drawing inspiration from GCN-SVD~\cite{entezari2020all}, this method constructs a plain graph using all edges from the training set and computes a low-rank approximation $\hat{A}$ of the adjacency matrix. The entries of $\hat{A}$ are used as weights in the loss function. For each edge $e = (u, v, t)$, the weight of the edge's loss in Eq.~(1) is determined by $\hat{a}_{uv}$.
    \item \textbf{TGN-COSINE:} Based on GNNGuard~\cite{zhang2020gnnguard}, this approach utilizes cosine similarity to filter edges. An edge $e = (u, v, t)$ is removed if the cosine similarity between the node embeddings $h_u(t)$ and $h_v(t)$ falls below a predefined threshold $\tau_{\text{cosine}}$.
    \item \textbf{TShield:} T-shield is proposed in Spear and Shield \cite{lee2024spear} and is one of the SotA defense methods for defense. It identifies and eliminates potential adversarial edges from the corrupted graphs, considering the evolution of dynamic graphs without prior knowledge of adversarial attacks.
    \item \textbf{TShield-F:} T-Shield-F is a variant of the method proposed in Spear and Shield~\cite{lee2024spear}, retaining only the edge-filtering step while excluding the temporal smoothing step.
\end{enumerate}

\subsubsection{Temporal Walk}\label{glossary}
In the context of temporal networks, a \textit{temporal walk} (also referred to as a \textit{time-respecting walk}) on a temporal graph $G$ is defined as an ordered sequence of edges: 
\[
(u_1, u_2, t_1), (u_2, u_3, t_2), \ldots, (u_j, u_{j+1}, t_j),
\]
where each edge $(u_i, u_{i+1}, t_i)$ connects nodes $u_i$ and $u_{i+1}$ at a timestamp $t_i$. A key property of a temporal walk is that the timestamps respect the order of the edges, i.e., $t_i \leq t_{i+1}$ for all $1 \leq i \leq j-1$. This ensures that the walk adheres to the temporal constraints of the network.

\subsection{On the Stealthiness of \textsc{\ourmethod}}
\label{sec:stealth_loretta}

Unlike node injection attacks that introduce new entities into the graph often detectable via anomaly detection or role analysis,~\textsc{\ourmethod} perturbs only interactions among existing, historically active nodes. This makes it significantly more stealthy. By adhering to unnoticeability constraints (C1–C4), the inserted edges mimic natural behavior in both temporal and structural ways. No new nodes are introduced, and edge insertions preserve degree and temporal-activity patterns, making them indistinguishable from genuine interactions(see~\ref{constraint_analysis}). This interaction-level perturbation enables practical and undetectable poisoning in real-world CTDGs, where node-level attacks would likely raise flags.

\subsection{Temporal PageRank and EdgeRank:Intuition, Formulation, and Computation}
\label{sec:Full_TPR}

\subsubsection{Temporal PageRank} Temporal PageRank (TPR) extends classical PageRank to temporal graphs by replacing static walks with time-respecting walks~\cite{rozenshtein2016temporal}. Let $Z^T(v, u \mid t)$ denote the set of temporal walks~(cf.~\S\ref{glossary}) from node $v$ to $u$ that occur strictly before time $t$. The probability of a walk $z$ is defined as:

\[
\Pr'[z \in Z^T(v, u \mid t)] = \frac{c(z \mid t)}{\sum\limits_{z' \in Z^T(v, x \mid t), \, x \in V, \, |z'| = |z|} c(z' \mid t)},
\]

where $c(z \mid t)$ is the decay-weighted count of $z$:

\[
c(z \mid t) = (1 - \beta)\!
\mathop{\prod_{\substack{((u_{i-1},u_i,t_i),\\(u_i,u_{i+1},t_{i+1}))\in z}}}
\beta^{\,\left|\left\{(u_i,y,t')\mid t'\in[t_i,t_{i+1}],\,y\in V\right\}\right|}.
\]

Transitions are penalized exponentially by the number of intervening interactions to model temporal decay. The final TPR score of node $u$ at time $t$ is:

\[
r(u, t) = \sum_{v \in V} \sum_{k=0}^{t} (1 - \alpha) \alpha^k \sum_{\substack{z \in Z_T(v, u \mid t) \\ |z| = k}} \Pr'[z \mid t],
\]

where $\alpha$ is the jump probability. This formulation biases the walk toward shorter, temporally coherent paths and naturally captures influence drift in evolving graphs.

\subsubsection{Temporal EdgeRank}\label{sec:TER}
To quantify the temporal importance of edges in the dynamic network, we introduce Temporal EdgeRank, which extends the concept of EdgeRank \cite{kucharczuk2022pagerank} to temporal graphs~(cf. Algorithm~\ref{alg:temporal-edgerank}). The algorithm first computes temporal PageRank scores for nodes using a damping factor $\beta$ and restart probability $\alpha$, capturing how node importance evolves over time. For each timestamp $t$, the algorithm identifies the set of active nodes and calculates the EdgeRank score by normalizing each node's temporal PageRank value by its overall outgoing degree. This normalization ensures that the influence of highly connected nodes is appropriately scaled. Specifically, the Temporal EdgeRank score at time $t$ is computed as the sum of the ratio between each active node's temporal PageRank score and its total outgoing degree. This approach effectively captures both the temporal dynamics of node importance and the structural properties of the network, providing a measure of edge significance that accounts for the temporal evolution of the graph structure. This approach integrates two complementary aspects of temporal graph analysis: (1) Temporal PageRank effectively captures the temporal dynamics of node importance by assigning higher weights to nodes participating in influential time-respecting walks, reflecting their centrality in evolving interactions. (2) Outgoing degree serves as a normalizing factor, quantifying the overall participation of a node across the temporal graph and preventing the measure from being biased solely by temporal activity bursts. Together, these components ensure that the significance of an edge reflects not just the structural properties of the graph at a single time point but also the temporal evolution of node roles and interactions over time.

\begin{algorithm}[!htb]
   \caption{Temporal EdgeRank}
   \label{alg:temporal-edgerank}
\begin{algorithmic}
   \STATE {\bfseries Input:} Edge set $E$ as $(u,v,t)$ tuples, damping $\beta$, restart prob. $\alpha$
   \STATE {\bfseries Output:} Temporal EdgeRank scores
   \STATE $\text{TPR} \leftarrow \text{TemporalPageRank}(E, \beta, \alpha)$
   \STATE $\text{out\_deg} \leftarrow \text{ComputeOutDegree}(E)$
   \STATE $TER \leftarrow \{\}$  \{Temporal EdgeRank\}
   \FORALL{timestamp $t$ in $E$}
      \STATE $nodes_t \leftarrow \{u \mid (u,v,t) \in E\}$
      \STATE $TER[t] \leftarrow \sum\limits_{n \in nodes_t} \frac{TPR[n,t]}{out\_deg[n]}$
   \ENDFOR
   \STATE \textbf{return} $TER$
\end{algorithmic}
\end{algorithm}

Together, TPR and TER enable fine-grained reasoning about temporal dynamics in node and edge importance. They serve as the foundation for sparsification in our attack pipeline, helping identify structurally sensitive and temporally volatile regions of the graph.

\subsubsection{Negative Sampling Algorithm}\label{sec:NS_algo}
The algorithm \ref{alg:temporal-edge-selection} employs a priority-based approach where nodes are first sorted based on their structural importance in the network. For each node in priority order, the algorithm selects valid edges (\textit{GetValidEdges}) within a specified time window while maintaining node capacity constraints. The selection process considers both topological constraints and temporal feasibility, ensuring that selected edges represent meaningful temporal interactions. For nodes in valid edges, we consider the temporally distant timestamp\footnote{We select a temporally distant timestamp for these nodes because it will be the least likely interaction for these nodes.} (\textit{SelectBest}). When the initial selection phase cannot fulfill the target edge budget, a recovery mechanism (\textit{RecoverEdges}) is activated to identify additional valid edge-timestamp pairs while preserving the temporal consistency of the network. This approach ensures both structural balance through capacity constraints and temporal coherence through window-based selection, making it particularly suitable for analyzing dynamic interaction patterns in temporal networks.
% \begin{algorithm}[!htb]
%    \caption{Timestamp selector}
%    \label{alg:temporal-edge-selection}
% \begin{algorithmic}
%    \STATE {\bfseries Input:} Graph $G(V,E)$, Node capacity $C$, Time window $W$
%    \STATE {\bfseries Output:} Edge-timestamp pairs $M$
%    \STATE $M \leftarrow \{\}$ \COMMENT{Selected edges}
%    \STATE $B \leftarrow \text{ComputeBudget}(C)$ \COMMENT{Total edge budget}
   
%    \STATE Sort $V$ by priority score
%    \FORALL{$v \in V$}
%       \WHILE{$\text{HasCapacity}(v)$ and $|M| < B$}
%          \STATE $E_c \leftarrow \text{GetValidEdges}(v, W)$
%          \IF{$E_c \neq \emptyset$}
%             \STATE $(e,t) \leftarrow \text{SelectBest}(E_c)$
%             \STATE $M \leftarrow M \cup \{(e,t)\}$
%             \STATE UpdateCapacity$(e)$
%          \ENDIF
%       \ENDWHILE
%    \ENDFOR
   
%    \IF{$|M| < B$}
%       \STATE $M \leftarrow M \cup \text{RecoverEdges}(B - |M|)$
%    \ENDIF
%    \STATE \textbf{return} $M$
% \end{algorithmic}
% \end{algorithm}

% ------------------------
\begin{algorithm}[!htb]
   \caption{Timestamp Selector (Main Procedure)}
   \label{alg:temporal-edge-selection}
\begin{algorithmic}[1]
   \STATE {\bfseries Input:} Graph $G(V,E)$, Node capacity $C$, Time window $W$
   \STATE {\bfseries Output:} Edge–timestamp pairs $M$
   \STATE $M \leftarrow \emptyset$ \hfill \COMMENT{Final edge–timestamp assignments}
   \STATE $B \leftarrow$ \Call{ComputeBudget}{$C$} \hfill \COMMENT{Total number of insertions}
   \STATE Sort nodes $v \in V$ by structural priority
   \FORALL{$v \in V$}
      \WHILE{\Call{HasCapacity}{$v$} \textbf{and} $|M| < B$}
         \STATE $E_c \leftarrow$ \Call{GetValidEdges}{$v, W$}
         \IF{$E_c \neq \emptyset$}
            \STATE $(e, t) \leftarrow$ \Call{SelectBest}{$E_c$}
            \STATE $M \leftarrow M \cup \{(e, t)\}$
            \STATE \Call{UpdateCapacity}{$e$}
         \ENDIF
      \ENDWHILE
   \ENDFOR
   \IF{$|M| < B$}
      \STATE $M \leftarrow M \cup$ \Call{RecoverEdges}{$B - |M|$}
   \ENDIF
   \STATE \textbf{return} $M$
\end{algorithmic}
\end{algorithm}

\vspace{0.5em}
\noindent\textbf{Helper Routines.}
\begin{itemize}[leftmargin=1.5em, itemsep=0.3em]
    \item \textsc{ComputeBudget}($C$): Calculates the total number of edges to be added, based on node-level capacity $C$.
    \item \textsc{HasCapacity}($v$): Returns true if node $v$ has not exceeded its insertion/deletion budget.
    \item \textsc{GetValidEdges}($v, W$): Returns a set of candidate edges involving $v$ within the time window $W$, filtered by feasibility constraints (e.g., C3/C4).
    \item \textsc{SelectBest}($E_c$): Chooses the most adversarially valuable edge–timestamp pair from the candidate set, based on sparsification metrics.
    \item \textsc{UpdateCapacity}($e$): Updates the capacity counters for nodes in edge $e$.
    \item \textsc{RecoverEdges}($k$): Backup strategy to find additional $k$ edge–timestamp pairs when primary routine falls short (e.g., due to constraint violations).
\end{itemize}

%  -----------------

\label{A:NS_baselines}
\paragraph{Why can't other negative sampling algorithms be used instead of \ourmethod's:} Regarding negative sampling, random sampling (selecting edges arbitrarily out of temporal order) and algorithms like Havel-Hakimi~\cite{havelhakimi} are insufficient for the adversarial setting as they fail to account for stealth and temporal constraints. Random sampling lacks strategic impact, while Havel-Hakimi focuses on degree preservation without addressing temporal dynamics or unnoticeability requirements. TERA utilizes a novel backtracking algorithm to replace critical edges with carefully selected negative samples that preserve graph density and temporal structure, ensuring the perturbations remain subtle yet impactful. This combination of temporal PageRank-based edge selection and backtracking ensures an effective and generalizable adversarial attack framework tailored for temporal graph networks.

\subsection{Why is REM Attack on Wikipedia so Effective?}
\label{sec:wiki_sparse_effectiveness}

In our experiments, we observed that applying sparsification \textit{alone} on the Wikipedia dataset consistently outperforms combined strategies that include both sparsification and negative sampling (NS). We hypothesize that this is due to the unique structural and semantic properties of the Wikipedia graph.

\paragraph{Semantic Modularity \& Implicit Regularization}
Wikipedia exhibits a highly modular interaction structure, where users predominantly contribute to domain-specific clusters (e.g., nuclear physics vs. fashion). Sparsification selectively removes semantically meaningful edges within these modules, directly impairing the model’s ability to capture local coherence. In contrast, negative sampling often introduces edges between semantically distant nodes, implicitly injecting noise that violates this modularity. While such perturbations are adversarial in nature, they may inadvertently regularize the TGNN by encouraging cross-domain generalization, smoothing latent representations across otherwise disjoint clusters.

On semantically modular graphs like Wikipedia, sparsification alone is highly effective as it removes high-signal, intra-cluster edges critical to the model’s inference. In contrast, adding semantically inconsistent edges via NS can counteract this effect by implicitly encouraging broader generalization. This interplay likely explains why REM attacks on Wikipedia are more damaging than their \ourmethod counterparts.

\subsection{Ablation on Knowledge-Conditioned Sparsification}
\label{A:ablation_knowledge_appendix}

To analyze the behavior of our sparsification method under varying levels of adversary knowledge, we perform a controlled ablation by sparsifying the same dataset using progressively increasing knowledge thresholds: $k \in \{0.2, 0.4, 0.8\}$. For each threshold, we extract the resulting edge set and compute pairwise Jaccard similarity between all combinations. The resulting similarity matrix is visualized in Figure~\ref{fig:knowledge_abl_jaccard}.

\begin{figure}[t]
    \centering
    \includegraphics[width=0.85\linewidth]{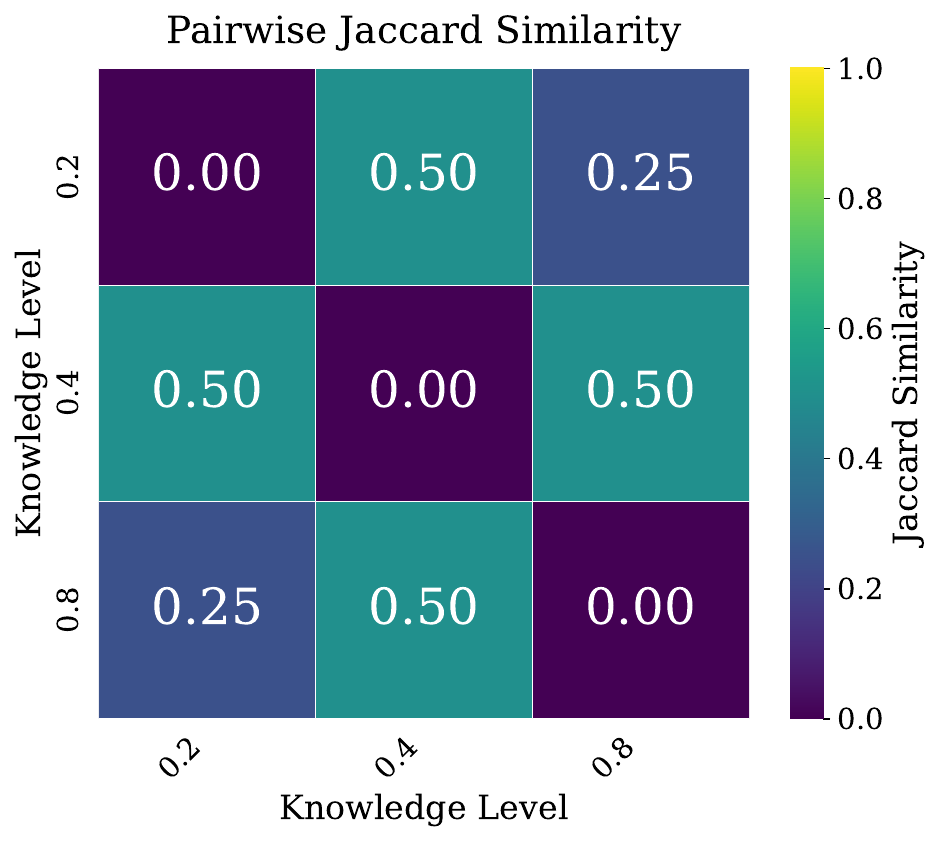}
    \caption{Pairwise Jaccard similarity between edge sets sparsified under different adversary knowledge levels ($k$). As knowledge increases, the edge sets diverge in a structured and deterministic manner.}
    \label{fig:knowledge_abl_jaccard}
\end{figure}

Our method exhibits three key properties that differentiate it from baseline random removal. First, it is \textit{deterministic}—the exact same edge set is produced for a given knowledge level across multiple runs, indicating no inherent randomness in the sparsification pipeline. This ensures reproducibility and enhances interpretability of downstream behavior. Second, the method displays \textit{monotonicity} in its edge removal: as the adversary's knowledge increases from $k_1$ to $k_2$ with $k_1 < k_2$, the edge set removed under $k_1$ is always a strict subset of the set removed under $k_2$. This is validated both by the decreasing Jaccard similarities and the subset relationships observed in our overlap analysis. Third, we observe a consistent trend of \textit{adversarial alignment}: higher knowledge levels lead to more aggressive removal of structurally important edges, suggesting that the method adapts meaningfully to the threat model. In contrast, randomly sparsified graphs show stable performance across varying knowledge levels, which we attribute to a non-monotonic relationship between edge removal and attack effectiveness. In such cases, only targeted perturbations cause significant degradation, while excessive or uninformed removals disrupt the attacker’s influence. This implies a “sweet spot” of adversary knowledge, where the attack is both precise and effective—too little misses key substructures, and too much reduces the coherence of the perturbation. Collectively, these observations highlight that our method produces more structured and predictable defense responses than random strategies.

\subsection{Does Cosine and T-shield Filtering remove true edges?}
\label{A:rob_hypothesis_ablation}
Table~\ref{tab:robustness_results_singlecol} shows the results of the average \% of adversarial edges removed during the filtering-based adversarial defense methods for Wikipedia and UCI on TGN when perturbed using \ourmethod at $p$ = 0.3. We can clearly observe that in most of the cases the \% of adversarial edges being removed is around 30\% which means many true edges are being removed which explains why after applying filtering based adversarial defense methods the performance is even worse than after \ourmethod.
\begin{table}[h]
\centering
\small
\setlength{\tabcolsep}{4pt}  % tighten column spacing a little
\begin{tabular}{|l|l|c|c|c|}
\hline
\textbf{Attack} & \textbf{Dataset} & \textbf{Cosine} & \textbf{T‑shield‑F} & \textbf{T‑shield} \\ \hline
\multirow{2}{*}{Degree}   & Wikipedia & 32.25 & 33.21 & 32.50 \\ 
                          & UCI       & 35.08 & 42.09 & 41.72 \\ \hline
\multirow{2}{*}{PageRank} & Wikipedia & 32.09 & 33.68 & 33.51 \\ 
                          & UCI       & 32.48 & 39.07 & 39.48 \\ \hline
\multirow{2}{*}{Cosine}   & Wikipedia & 35.92 & 30.95 & 30.85 \\ 
                          & UCI       & 29.46 & 29.18 & 28.77 \\ \hline
\multirow{2}{*}{Jaccard}  & Wikipedia & 38.44 & 30.70 & 30.77 \\ 
                          & UCI       & 30.72 & 30.86 & 30.98 \\ \hline
\multirow{2}{*}{TER}      & Wikipedia & 38.11 & 32.66 & 32.62 \\ 
                          & UCI       & 34.38 & 34.15 & 33.66 \\ \hline
\end{tabular}
\caption{Average \% of adversarial edges removed during filtering-based adversarial defense methods.}
\label{tab:robustness_results_singlecol}
\end{table}
% \todo{Write description here.}

\subsection{Threshold Free summary results of \ourmethod~against SotA Anomaly Detection methods}
\begin{table}[H]
\centering
\small
\setlength{\tabcolsep}{3pt} % Reduces space between columns
\renewcommand{\arraystretch}{0.9}
\begin{tabular}{@{}lcccc@{}}
\toprule
& \multicolumn{4}{c}{\textbf{Anomaly Detection Methods (AUPRC)}} \\
\cmidrule(lr){2-5}
\textbf{Attack} & \textbf{MIDAS} & \textbf{F-FADE} & \textbf{AnoEdge-L} & \textbf{AnoEdge-G} \\
\midrule
\multicolumn{5}{@{}l}{\textbf{Wikipedia Dataset}} \\
\midrule
Degree   & 0.44 & 0.43 & 0.48 & 0.45 \\
PageRank & 0.45 & 0.44 & 0.49 & 0.46 \\
Cosine   & 0.40 & 0.37 & 0.45 & 0.58 \\
Jaccard  & 0.32 & 0.35 & 0.43 & 0.54 \\
TER      & 0.50 & 0.34 & 0.40 & 0.34 \\
\midrule
\multicolumn{5}{@{}l}{\textbf{UCI Dataset}} \\
\midrule
Degree   & 0.37 & 0.55 & 0.24 & 0.31 \\
PageRank & 0.46 & 0.56 & 0.26 & 0.36 \\
Cosine   & 0.44 & 0.28 & 0.46 & 0.80 \\
Jaccard  & 0.48 & 0.24 & 0.44 & 0.83 \\
TER      & 0.72 & 0.19 & 0.43 & 0.56 \\
\bottomrule
\end{tabular}
\caption{Anomaly–class AUPRC (area under the precision–recall curve) of anomaly detection methods after perturbation using \ourmethod. }
\label{tab:anomaly_ablation}
\end{table}
Table~\ref{tab:anomaly_ablation} shows the resuls of AUPRC~(area under precision recall curve) score of anomaly detection methods after perturbation using \ourmethod~at $p$ = 0.3. We can see that the AUPRC is around 0.5 for most of cases except for 3 places in the UCI dataset. However, it must be noted that this score represents all the thresholds and it is impossible to determine which threshold to use in real-world making detecting adversarial edges even more challenging. After this, we can confidently say that \ourmethod is undetectable using SotA anomaly detection methods for edge streams.
\label{A:anomaly_ablation}

\subsection{Statistical Confidence Results}
\label{sec:sign-test}

For each \emph{Model–Dataset} pair, we compare every LoReTTA variant against all available baselines using a common, scale–free degradation metric in Table~\ref{tab:loretta_sign_all_pairs}.
Given the clean validation score $\text{Clean}_{m,d}$ for model $m$ on dataset $d$ and the post–attack score $\text{Score}_{m,d}$, we define the percentage drop
\[
\text{Drop}\% \;=\; \frac{\text{Clean}_{m,d} - \text{Score}_{m,d}}{\text{Clean}_{m,d}} \times 100,
\]
so that larger values indicate stronger attacks (greater degradation).
A negative value (rare) means the attack improved the score relative to clean.

\paragraph{Pairwise comparisons and outcomes.}
For a fixed \emph{Model–Dataset} pair, let $\mathcal{B}$ denote the set of baselines with valid scores.
For each LoReTTA variant $v$ and each $b\in\mathcal{B}$ we compare $\text{Drop}\%_v$ to $\text{Drop}\%_b$.
We record a \emph{win} if $\text{Drop}\%_v>\text{Drop}\%_b$, a \emph{loss} if $\text{Drop}\%_v<\text{Drop}\%_b$, and a \emph{tie} otherwise.
The table lists the counts of wins/losses/ties and the non-tied win rate.

\paragraph{Sign test over baselines.}
Because per-seed measurements are unavailable, we assess dominance using a one-sided exact binomial (sign) test across baselines:
\[
H_0:\; p\le \tfrac{1}{2}
\quad\text{vs.}\quad
H_1:\; p> \tfrac{1}{2},
\]
where $p$ is the probability that the LoReTTA variant wins against a randomly chosen baseline in $\mathcal{B}$ (ties are excluded).
Given $W$ wins and $L$ losses, the $p$-value is computed under $\mathrm{Binomial}(W{+}L,\tfrac{1}{2})$.
We report this $p$-value per row and mark significance with stars:
\textbf{*} $p{<}0.05$, \textbf{**} $p{<}0.01$, \textbf{***} $p{<}0.001$.
These stars summarize statistical significance (not effect size).

\paragraph{Interpretation and scope.}
Rows with \textbf{**}/\textbf{***} indicate that, for the given \emph{Model–Dataset}, the LoReTTA variant beats a clear majority of baselines with high confidence.
Rows without stars mean the evidence is insufficient to claim dominance at the 0.05 level.
All comparisons use the same \emph{Drop}\% metric, making results comparable across datasets with different absolute scales.

\onecolumn
\begingroup
\setlength{\tabcolsep}{4pt}
\scriptsize
\begin{longtable}{lllrrrrrrrc}
\caption{Sign-test results for LoReTTA variants vs.\ all baselines (higher Drop\% is stronger). One-sided exact binomial test across baselines per Model--Dataset; ties excluded.}
\label{tab:loretta_sign_all_pairs}\\
\toprule
Model & Dataset & Variant & Drop\% & Wins & Losses & Ties & Non-ties & Win rate & p-value & Sig.\\
\midrule
\endfirsthead
\toprule
Model & Dataset & Variant & Drop\% & Wins & Losses & Ties & Non-ties & Win rate & p-value & Sig.\\
\midrule
\endhead
\midrule
\multicolumn{11}{r}{\emph{Continued on next page}}\\
\midrule
\endfoot
\bottomrule
\endlastfoot
DySAT & Enron & Degree & 57.20 & 11 & 0 & 0 & 11 & 1.000 & 4.88e-04 & *** \\
DySAT & Enron & LoReTTA -- Cosine & 30.04 & 11 & 0 & 0 & 11 & 1.000 & 4.88e-04 & *** \\
DySAT & Enron & LoReTTA -- Jaccard & 35.31 & 11 & 0 & 0 & 11 & 1.000 & 4.88e-04 & *** \\
DySAT & Enron & LoReTTA -- TER & 24.12 & 11 & 0 & 0 & 11 & 1.000 & 4.88e-04 & *** \\
DySAT & Enron & Pagerank & 54.07 & 11 & 0 & 0 & 11 & 1.000 & 4.88e-04 & *** \\
DySAT & UCI & Degree & 72.44 & 11 & 0 & 0 & 11 & 1.000 & 4.88e-04 & *** \\
DySAT & UCI & LoReTTA -- Cosine & 66.89 & 11 & 0 & 0 & 11 & 1.000 & 4.88e-04 & *** \\
DySAT & UCI & LoReTTA -- Jaccard & 68.56 & 11 & 0 & 0 & 11 & 1.000 & 4.88e-04 & *** \\
DySAT & UCI & LoReTTA -- TER & 73.38 & 11 & 0 & 0 & 11 & 1.000 & 4.88e-04 & *** \\
DySAT & UCI & Pagerank & 72.51 & 11 & 0 & 0 & 11 & 1.000 & 4.88e-04 & *** \\
DySAT & Wikipedia & Degree & 22.25 & 5 & 4 & 0 & 9 & 0.556 & 5.00e-01 &  \\
DySAT & Wikipedia & LoReTTA -- Cosine & 27.34 & 5 & 4 & 0 & 9 & 0.556 & 5.00e-01 &  \\
DySAT & Wikipedia & LoReTTA -- Jaccard & 27.61 & 5 & 4 & 0 & 9 & 0.556 & 5.00e-01 &  \\
DySAT & Wikipedia & LoReTTA -- TER & 20.60 & 5 & 4 & 0 & 9 & 0.556 & 5.00e-01 &  \\
DySAT & Wikipedia & Pagerank & 19.35 & 5 & 4 & 0 & 9 & 0.556 & 5.00e-01 &  \\
JODIE & Enron & Degree & 56.59 & 11 & 0 & 0 & 11 & 1.000 & 4.88e-04 & *** \\
JODIE & Enron & LoReTTA -- Cosine & 22.59 & 11 & 0 & 0 & 11 & 1.000 & 4.88e-04 & *** \\
JODIE & Enron & LoReTTA -- Jaccard & 28.91 & 11 & 0 & 0 & 11 & 1.000 & 4.88e-04 & *** \\
JODIE & Enron & LoReTTA -- TER & 22.86 & 11 & 0 & 0 & 11 & 1.000 & 4.88e-04 & *** \\
JODIE & Enron & Pagerank & 50.82 & 11 & 0 & 0 & 11 & 1.000 & 4.88e-04 & *** \\
JODIE & MOOC & Degree & 69.36 & 9 & 0 & 0 & 9 & 1.000 & 1.95e-03 & ** \\
JODIE & MOOC & LoReTTA -- Cosine & 68.50 & 9 & 0 & 0 & 9 & 1.000 & 1.95e-03 & ** \\
JODIE & MOOC & LoReTTA -- Jaccard & 70.80 & 9 & 0 & 0 & 9 & 1.000 & 1.95e-03 & ** \\
JODIE & MOOC & LoReTTA -- TER & 61.42 & 3 & 6 & 0 & 9 & 0.333 & 9.10e-01 &  \\
JODIE & MOOC & Pagerank & 67.94 & 9 & 0 & 0 & 9 & 1.000 & 1.95e-03 & ** \\
JODIE & UCI & Degree & 59.96 & 9 & 2 & 0 & 11 & 0.818 & 3.27e-02 & * \\
JODIE & UCI & LoReTTA -- Cosine & 61.65 & 9 & 2 & 0 & 11 & 0.818 & 3.27e-02 & * \\
JODIE & UCI & LoReTTA -- Jaccard & 63.21 & 11 & 0 & 0 & 11 & 1.000 & 4.88e-04 & *** \\
JODIE & UCI & LoReTTA -- TER & 63.01 & 11 & 0 & 0 & 11 & 1.000 & 4.88e-04 & *** \\
JODIE & UCI & Pagerank & 63.86 & 11 & 0 & 0 & 11 & 1.000 & 4.88e-04 & *** \\
JODIE & Wikipedia & Degree & 50.40 & 6 & 3 & 0 & 9 & 0.667 & 2.54e-01 &  \\
JODIE & Wikipedia & LoReTTA -- Cosine & 55.46 & 7 & 2 & 0 & 9 & 0.778 & 8.98e-02 &  \\
JODIE & Wikipedia & LoReTTA -- Jaccard & 54.57 & 7 & 2 & 0 & 9 & 0.778 & 8.98e-02 &  \\
JODIE & Wikipedia & LoReTTA -- TER & 46.42 & 6 & 3 & 0 & 9 & 0.667 & 2.54e-01 &  \\
JODIE & Wikipedia & Pagerank & 51.50 & 6 & 3 & 0 & 9 & 0.667 & 2.54e-01 &  \\
TGAT & Enron & Degree & 58.22 & 11 & 0 & 0 & 11 & 1.000 & 4.88e-04 & *** \\
TGAT & Enron & LoReTTA -- Cosine & 48.18 & 11 & 0 & 0 & 11 & 1.000 & 4.88e-04 & *** \\
TGAT & Enron & LoReTTA -- Jaccard & 48.33 & 11 & 0 & 0 & 11 & 1.000 & 4.88e-04 & *** \\
TGAT & Enron & LoReTTA -- TER & 21.67 & 11 & 0 & 0 & 11 & 1.000 & 4.88e-04 & *** \\
TGAT & Enron & Pagerank & 57.95 & 11 & 0 & 0 & 11 & 1.000 & 4.88e-04 & *** \\
TGAT & UCI & Degree & 34.97 & 11 & 0 & 0 & 11 & 1.000 & 4.88e-04 & *** \\
TGAT & UCI & LoReTTA -- Cosine & 28.65 & 10 & 1 & 0 & 11 & 0.909 & 5.86e-03 & ** \\
TGAT & UCI & LoReTTA -- Jaccard & 31.10 & 10 & 1 & 0 & 11 & 0.909 & 5.86e-03 & ** \\
TGAT & UCI & LoReTTA -- TER & 33.99 & 11 & 0 & 0 & 11 & 1.000 & 4.88e-04 & *** \\
TGAT & UCI & Pagerank & 32.21 & 10 & 1 & 0 & 11 & 0.909 & 5.86e-03 & ** \\
TGAT & Wikipedia & Degree & 4.42 & 4 & 5 & 0 & 9 & 0.444 & 7.46e-01 &  \\
TGAT & Wikipedia & LoReTTA -- Cosine & 16.61 & 4 & 5 & 0 & 9 & 0.444 & 7.46e-01 &  \\
TGAT & Wikipedia & LoReTTA -- Jaccard & 13.60 & 4 & 5 & 0 & 9 & 0.444 & 7.46e-01 &  \\
TGAT & Wikipedia & LoReTTA -- TER & -7.82 & 1 & 8 & 0 & 9 & 0.111 & 9.98e-01 &  \\
TGAT & Wikipedia & Pagerank & 1.58 & 2 & 7 & 0 & 9 & 0.222 & 9.80e-01 &  \\
TGN & Enron & Degree & 55.88 & 11 & 0 & 0 & 11 & 1.000 & 4.88e-04 & *** \\
TGN & Enron & LoReTTA -- Cosine & 27.24 & 11 & 0 & 0 & 11 & 1.000 & 4.88e-04 & *** \\
TGN & Enron & LoReTTA -- Jaccard & 34.48 & 11 & 0 & 0 & 11 & 1.000 & 4.88e-04 & *** \\
TGN & Enron & LoReTTA -- TER & 31.90 & 11 & 0 & 0 & 11 & 1.000 & 4.88e-04 & *** \\
TGN & Enron & Pagerank & 51.97 & 11 & 0 & 0 & 11 & 1.000 & 4.88e-04 & *** \\
TGN & MOOC & Degree & 67.46 & 9 & 0 & 0 & 9 & 1.000 & 1.95e-03 & ** \\
TGN & MOOC & LoReTTA -- Cosine & 62.38 & 9 & 0 & 0 & 9 & 1.000 & 1.95e-03 & ** \\
TGN & MOOC & LoReTTA -- Jaccard & 62.93 & 9 & 0 & 0 & 9 & 1.000 & 1.95e-03 & ** \\
TGN & MOOC & LoReTTA -- TER & 50.32 & 8 & 1 & 0 & 9 & 0.889 & 1.95e-02 & * \\
TGN & MOOC & Pagerank & 68.04 & 9 & 0 & 0 & 9 & 1.000 & 1.95e-03 & ** \\
TGN & UCI & Degree & 60.38 & 11 & 0 & 0 & 11 & 1.000 & 4.88e-04 & *** \\
TGN & UCI & LoReTTA -- Cosine & 65.11 & 11 & 0 & 0 & 11 & 1.000 & 4.88e-04 & *** \\
TGN & UCI & LoReTTA -- Jaccard & 64.59 & 11 & 0 & 0 & 11 & 1.000 & 4.88e-04 & *** \\
TGN & UCI & LoReTTA -- TER & 65.02 & 11 & 0 & 0 & 11 & 1.000 & 4.88e-04 & *** \\
TGN & UCI & Pagerank & 61.13 & 11 & 0 & 0 & 11 & 1.000 & 4.88e-04 & *** \\
TGN & Wikipedia & Degree & 17.83 & 5 & 4 & 0 & 9 & 0.556 & 5.00e-01 &  \\
TGN & Wikipedia & LoReTTA -- Cosine & 32.86 & 6 & 3 & 0 & 9 & 0.667 & 2.54e-01 &  \\
TGN & Wikipedia & LoReTTA -- Jaccard & 31.45 & 6 & 3 & 0 & 9 & 0.667 & 2.54e-01 &  \\
TGN & Wikipedia & LoReTTA -- TER & 16.19 & 5 & 4 & 0 & 9 & 0.556 & 5.00e-01 &  \\
TGN & Wikipedia & Pagerank & 17.38 & 5 & 4 & 0 & 9 & 0.556 & 5.00e-01 &  \\
\end{longtable}
\endgroup
\twocolumn

\subsection{Effect of Time Window on C3 and C4 Constraints}
\label{app:effect_of_time_window}

Figure~\ref{fig:c3_400s} highlights that the 400s time window satisfies the C3 compliance condition. In contrast, the 1800s window fails to maintain this distribution (Figure~\ref{fig:c3_1800s}).

However, the opposite trend appears in the C4 constraint. The 1800s window exhibits better degree preservation (Figure~\ref{fig:c4_1800s}), whereas the 400s window shows notable deviation (Figure~\ref{fig:c4_400s}). This is likely due to sparsity in the 400s, where timestamps can have zero active nodes because of smaller time-window.

Our main experiments use the time-window, guided by \citet{lee2024spear} on TGNN degradation.

\section*{Ethical Considerations}
This work presents a novel adversarial attack on TGNNs, aiming to identify vulnerabilities in temporal graph learning models. While adversarial attacks have the potential to be misused, our primary motivation is to improve the robustness of temporal graph-based models by identifying weaknesses that could be exploited in malicious scenarios. By understanding these vulnerabilities, we hope to enable the development of defense mechanisms and inspire more resilient temporal graph architectures.

This work could benefit areas such as network security, fraud detection, and dynamic system modeling by highlighting risks and promoting research on robust temporal graph methods. However, the methods described in this paper, if misapplied, could be used to undermine real-world systems relying on temporal graphs. We strongly advocate for ethical use of this research, emphasizing its role in advancing security and resilience in machine learning.

As with all adversarial research, this work underscores the importance of ethical responsibility in deploying machine learning models in safety-critical applications. Future research could focus on leveraging insights from this work to develop robust defenses against such attacks and to minimize the risks associated with deploying temporal graph models in sensitive environments.

\begin{figure}[!t]
    \centering
    \includegraphics[width=\linewidth]{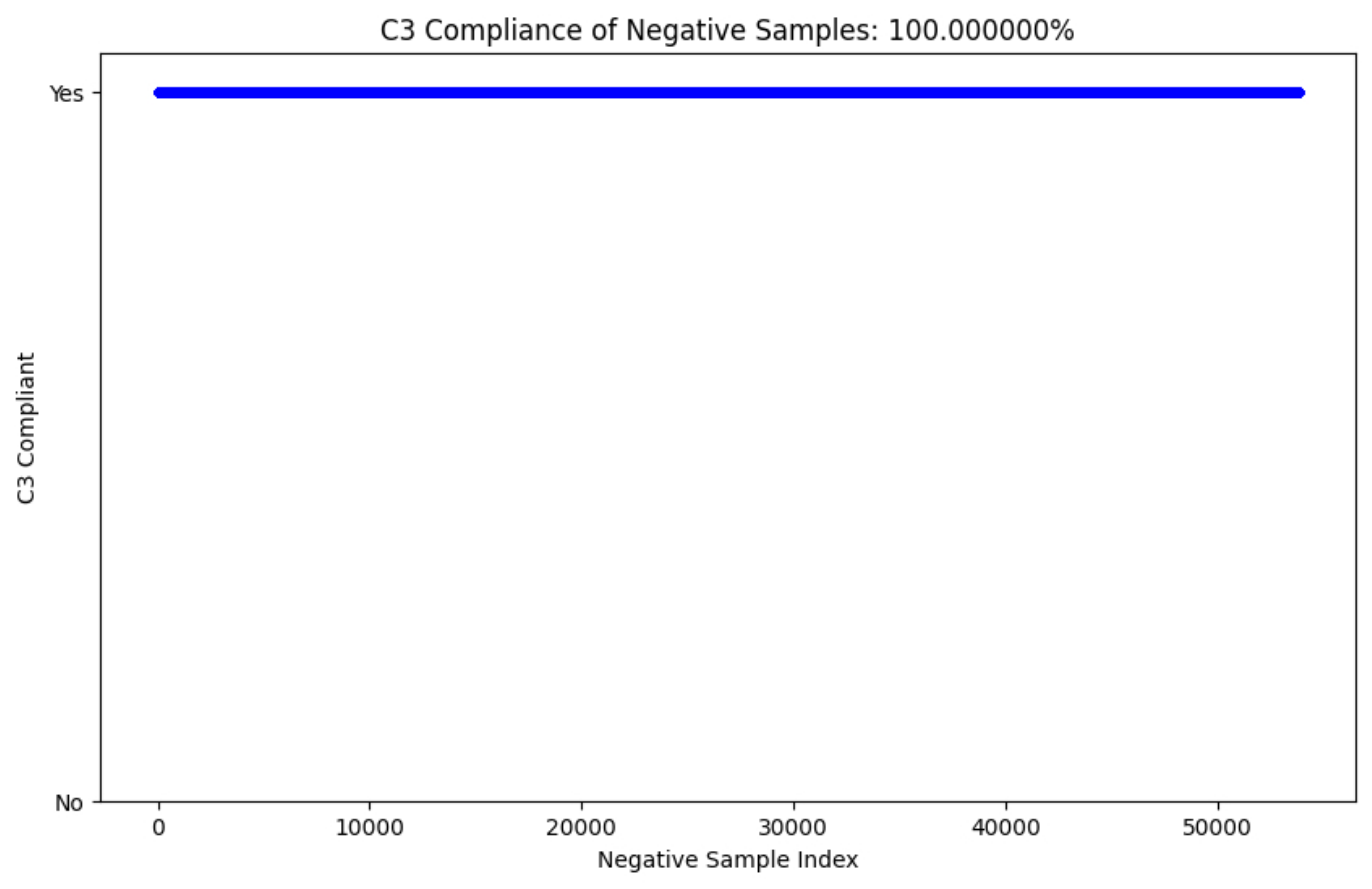}
    \caption{C3 compliance for the 400s dataset. All negative samples satisfy the C3 constraint, indicated by blue dots.}

    \label{fig:c3_400s}
\end{figure}

\begin{figure}[!t]
    \centering
    \includegraphics[width=\linewidth]{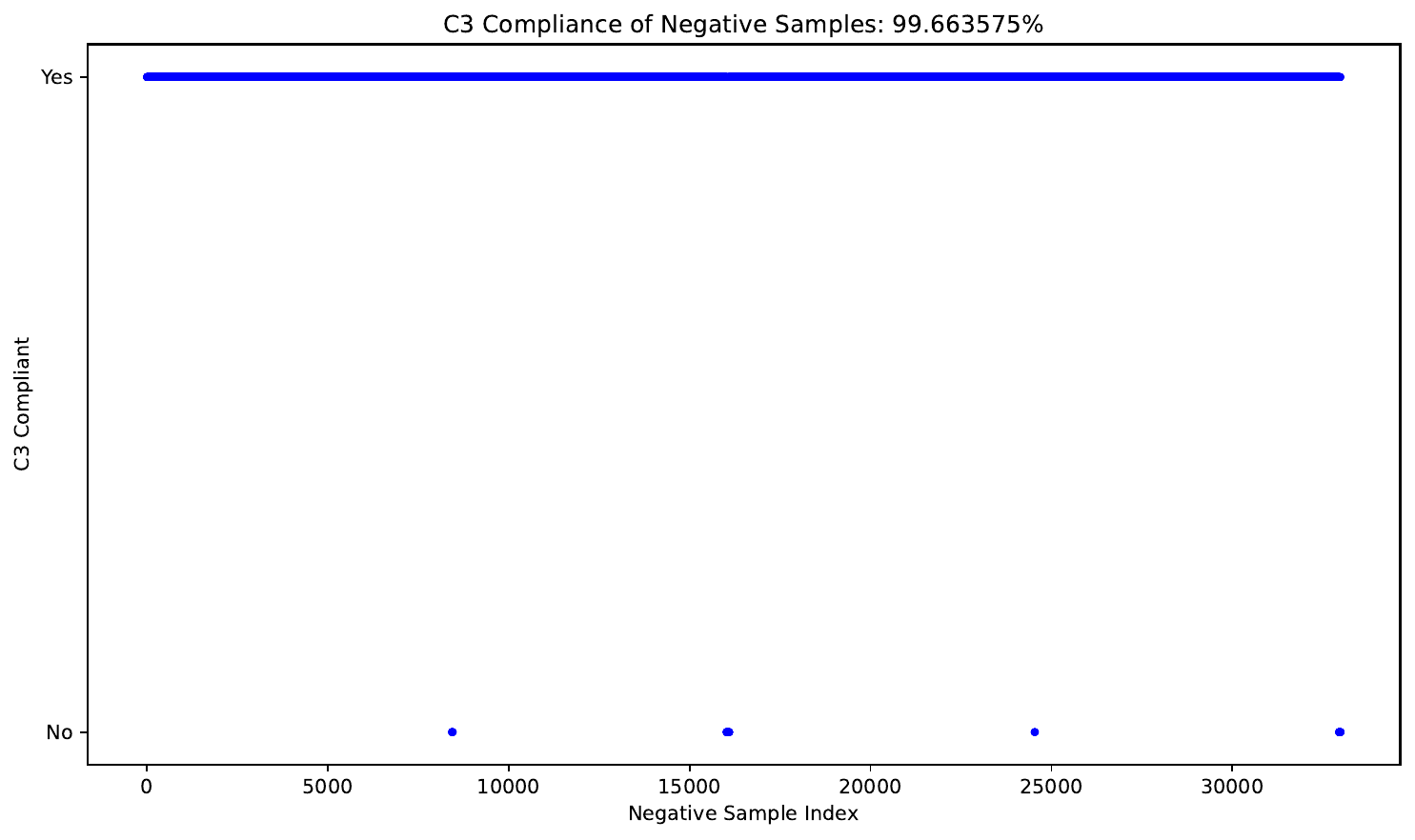}
    % \caption{C3 compliance for the 1800s dataset. Each blue dot denotes a negative sample that satisfies the C3 constraint.}
    \caption{C3 compliance for the 1800s dataset. Blue dots at top indicate compliant samples; a small fraction at bottom BC
    violate the C3 constraint.}
    \label{fig:c3_1800s}
\end{figure}

\begin{figure}[!t]
    \centering
    \includegraphics[width=\linewidth]{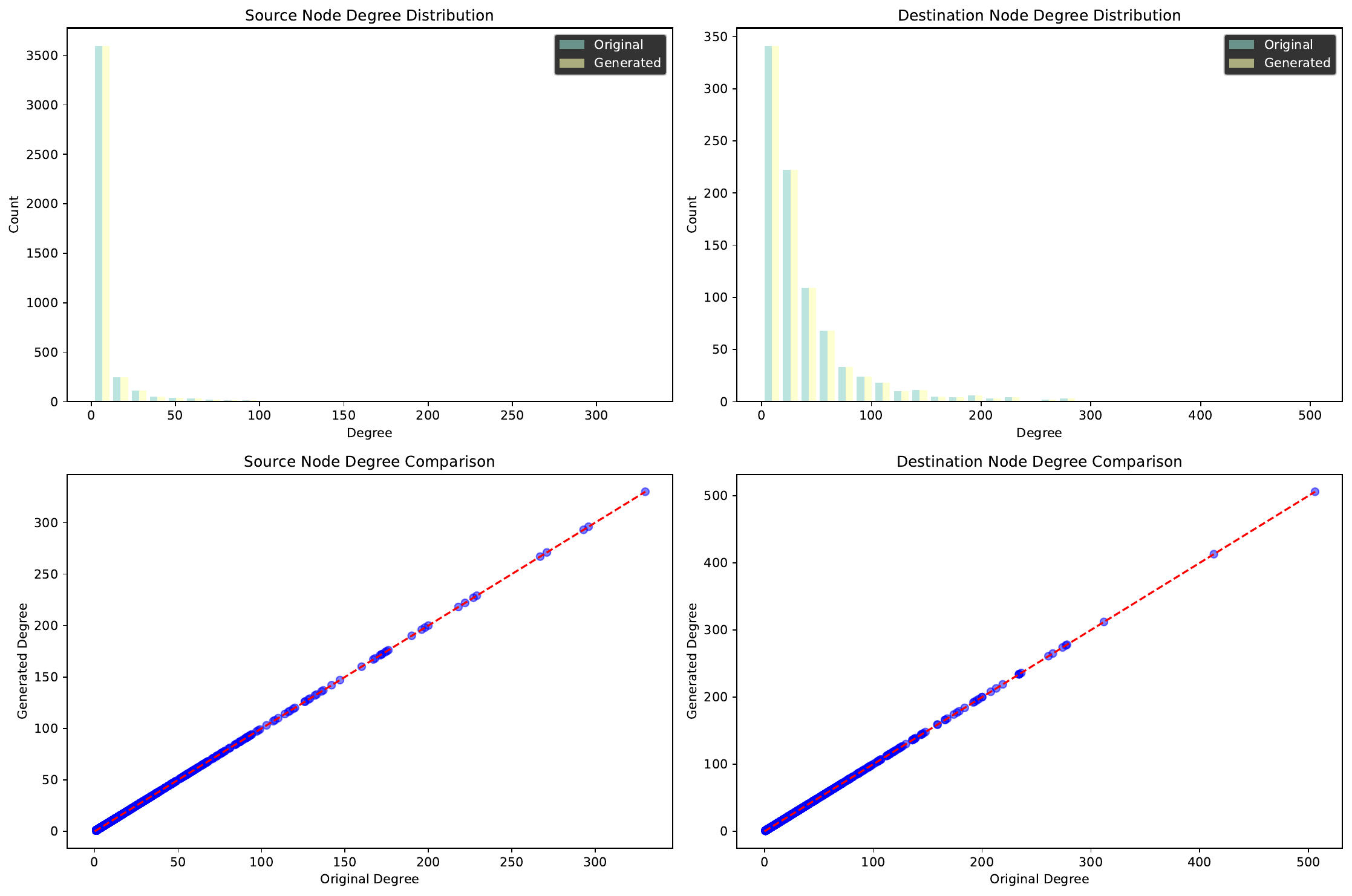}
    \caption{C4 compliance (degree preservation) for 1800s.  Scatter alignment confirms statistical matching. Original vs regenerated degrees are identical.}
    \label{fig:c4_1800s}
\end{figure}

\begin{figure}[!t]
    \centering
    \includegraphics[width=\linewidth]{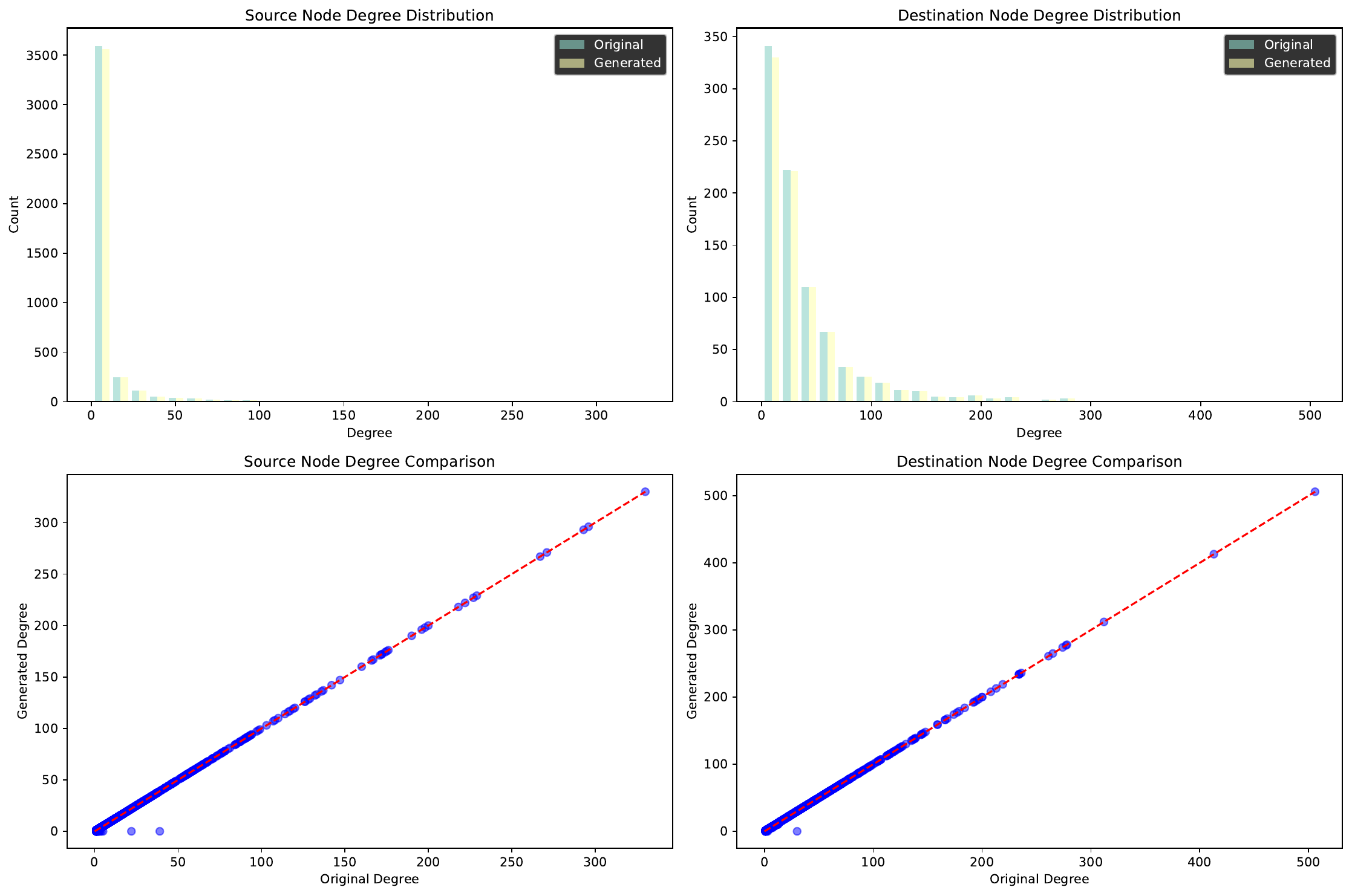}
    \caption{C4 compliance for the 400s dataset. While overall degree preservation holds, early timestamps show deviation due to low activity. Many nodes were yet to be created, reducing the likelihood of matching active degrees.
}
    \label{fig:c4_400s}
\end{figure}

\end{document}